\documentclass{article}

\usepackage[preprint]{neurips_2026}

\usepackage{amsmath,amssymb,amsfonts,amsthm,mathtools,bm}

\usepackage{graphicx}
\usepackage{booktabs}
\usepackage{longtable}
\usepackage{pdflscape}
\usepackage{multirow}
\usepackage{array}
\usepackage{adjustbox}
\usepackage{threeparttable}
\usepackage{rotating}
\usepackage{siunitx}
\usepackage{float}
\usepackage{xcolor}
\usepackage{enumitem}
\usepackage{comment}
\usepackage[most]{tcolorbox}
\usepackage{pgfplots}
\usepgfplotslibrary{groupplots}
\pgfplotsset{compat=1.18}
\usetikzlibrary{arrows.meta}

\usepackage{algorithm}
\usepackage{algorithmic}

\usepackage{url}
\usepackage{hyperref}

\providecommand{\R}{\mathbb{R}}

\theoremstyle{definition}

\theoremstyle{plain}
\newtheorem{proposition}{Proposition}
\newtheorem{theorem}{Theorem}

\newtheorem{corollary}{Corollary}
\theoremstyle{remark}
\newtheorem{remark}{Remark}

\usepackage[utf8]{inputenc} 
\usepackage[T1]{fontenc}    
\usepackage{hyperref}       
\usepackage{url}            
\usepackage{booktabs}       
\usepackage{amsfonts}       
\usepackage{nicefrac}       
\usepackage{microtype}      
\usepackage{xcolor}         

\title{High-Resolution Dynamic Functional Connectivity Generation with Graph-Variate Flow Matching}

\author{%
  Om Roy\thanks{Corresponding author.} \\
  University of Strathclyde \\
  \texttt{o.roy.2022@uni.strath.ac.uk}
  \And
  Yashar Moshfeghi \\
  University of Strathclyde \\
  \texttt{yashar.moshfeghi@strath.ac.uk}
  \And
  Keith Malcolm Smith \\
  University of Strathclyde \\
  \texttt{keith.smith@strath.ac.uk}
}

\begin{document}

\maketitle

\begin{abstract}
High-resolution dynamic functional connectivity (DFC) is attractive for studying rapidly evolving brain-network interactions, but it is difficult to estimate and generate reliably because short temporal windows produce noisy and often low-rank covariance estimates. Graph-Variate Dynamic (GVD) connectivity addresses this problem by modulating fast instantaneous interactions with a stable trial-level support, suppressing spurious fluctuations while emphasizing persistent and informative connections. We further show that this Hadamard construction lifts low-rank instantaneous connectivity from the positive-semidefinite to the positive-definite cone, enabling high-resolution connectivity trajectories to remain on the SPD manifold without additive ridge regularisation or post-hoc projection. Building on this structure, we introduce GVD-CFM, a class-conditional generative model designed specifically for high-resolution dynamic connectivity generation. Each trial is represented as a sequence of SPD GVD matrices on a product Riemannian manifold and mapped through a global log-Euclidean diffeomorphism and an invertible temporal DCT basis. A Transformer-based conditional flow models all spectral modes jointly, allowing the complete high-resolution trajectory to be generated non-autoregressively in Euclidean coordinates while preserving exact correspondence with valid SPD connectivity sequences. Because the full DCT basis is retained, the learned representation also defines a temporal basis expansion that can be decoded on denser temporal grids without retraining. Across multiple EEG motor-imagery datasets, GVD-CFM achieves the strongest overall performance across held-out distributional fidelity, preservation of temporal dynamics, and synthetic-to-real classification, while remaining computationally efficient relative to strong raw-signal and direct GVD-space generative baselines. These results establish GVD-CFM as a framework for generating realistic, temporally coherent, high-resolution brain-network trajectories while preserving manifold structure and supporting resolution-flexible decoding from a single trained model.
\end{abstract}

\section{Introduction}

Dynamic functional connectivity (DFC) has been a topic of major interest in neuroscience in recent years \citep{hutchison2013dfc,preti2017dynamic}. Brain-network connectivity is central to many neuroscientific studies \citep{fox2014resting,allen2014tracking,roy2023robust,roy2024fast}, and EEG, with its very high temporal resolution \citep{pfurtscheller1999erd,smith2017fast}, should in principle allow these networks to be followed at the time scale of cognitive events. In practice, connectivity estimated over short temporal windows is dominated by noise and spurious correlations \citep{leonardi2015spurious,hindriks2016sliding}. Graph-variate dynamic (GVD) connectivity addresses this problem by modulating instantaneous interactions with a stable, trial-level connectivity matrix through a Hadamard product \citep{smith2019gvsa,roy2025hodgefast}. This has been shown to give reliable time-varying dynamics at high temporal resolution, for example in the detection of connectivity changes during event-related potentials \citep{roy2023robust,roy2024fast}.

In the machine learning community, the study of connectivity has been largely limited to the static case \citep{sihag2022vnn,cavallo2024spatiotemporal,roy2026cdnn}, and most progress in architectures and generative models for EEG concerns the raw signal \citep{lotte2018review,lawhern2018eegnet,schirrmeister2017deep,barachant2012riemannian,kobler2022spdbn,hartmann2018eeggan}. Among recent generative models \citep{goodfellow2014gan,kingma2014vae,song2021sde,ho2020ddpm}, flow matching has emerged as an efficient way of training continuous normalizing flows \citep{chen2018neuralode,lipman2023flow,liu2023rectified,albergo2023interpolants}, and conditional flow matching makes its objective tractable \citep{tong2024improving,lipman2024guide}. Connectivity matrices are symmetric positive definite (SPD), and Riemannian geometry provides a natural setting for their study \citep{pennec2006tensor,arsigny2007logeuclidean,pennec2006intrinsic,thanwerdas2022thesis}. This has led to Riemannian flow matching \citep{chen2024geometries} and, to avoid its computational cost, to diffeomorphic pullback flow matching, which performs Euclidean flow matching through a global diffeomorphism and is equivalent to the Riemannian process \citep{collas2025diffeocfm}. These methods require every matrix to be SPD. A covariance matrix estimated from a window with fewer samples than channels is rank deficient, so the geometry that makes these methods tractable is not available for high-resolution DFC.

Here, we propose GVD-CFM, a spectral flow matching model that generates complete high-resolution EEG connectivity trajectories. We first show that the Hadamard product with a positive-definite support lifts rank-deficient window covariances to SPD matrices, so that a GVD trajectory is a point on a product Riemannian manifold. We map this manifold to Euclidean coordinates with a log-Euclidean chart followed by an orthonormal temporal DCT, and train a Transformer to predict the velocity of all DCT modes jointly. The DCT approximately decorrelates the temporal covariance of the trajectory, which simplifies the transport problem, and the generated coefficients can be decoded on finer temporal grids than the one used in training. We further show that current generative models for raw EEG may produce realistic signals but fail to reproduce the temporal dynamics of connectivity. Since GVD-CFM generates connectivity trajectories rather than raw EEG, its samples are directly useful to models that consume connectivity representations, a setting where labeled data are scarce \citep{lashgari2020augmentation,roy2019dlreview,jiang2024labram}.

\paragraph{Contributions.}
\begin{enumerate}[leftmargin=1.4em,itemsep=2pt,topsep=1pt]
\item \textbf{High-resolution DFC that overcomes the covariance rank issue.}
We show that GVD connectivity is SPD even when window covariances are rank deficient, provided the stable support is SPD and every channel has nonzero energy in the window. This gives a valid log-Euclidean representation when a window contains fewer samples than channels allowing high resolution temporal precision.
\item \textbf{Non-autoregressive spectral generation of complete trajectories.}
GVD-CFM performs conditional flow matching on complete log-Euclidean GVD trajectories in an orthonormal DCT basis, exactly equivalent to Riemannian flow matching on the product SPD manifold. The DCT approximately diagonalizes the optimal Gaussian transport field, and the generated coefficients can be evaluated on denser temporal grids without retraining.
\item \textbf{Controlled evaluation and physiological plausibility.}
We compare against four raw-EEG generators and three GVD-space controls that share the GVD targets and decoding pipeline, ablate the spectral representation and the stable support, and show that generated trajectories preserve the regional organization and time course of real motor-imagery connectivity.
\end{enumerate}
\section{Related Work}
\label{sec:related}

\paragraph{Dynamic functional connectivity.}
DFC is commonly estimated with sliding-window correlation \citep{hutchison2013dfc,allen2014tracking,preti2017dynamic}, but short windows increase noise and can produce spurious fluctuations \citep{leonardi2015spurious,hindriks2016sliding}. Regularized approaches improve stability through shrinkage or temporal smoothness \citep{ledoit2004wellconditioned,chen2010oas,monti2014estimating,hallac2017tvgl}, usually at the cost of temporal resolution. Graph-variate methods instead filter instantaneous interactions using a stable trial-level connectivity matrix, enabling high-resolution EEG connectivity estimates \citep{smith2017fast,smith2019gvsa,roy2023robust,roy2024fast,roy2025hodgefast}.

\paragraph{SPD geometry and generative modeling.}
EEG covariance matrices lie on the SPD manifold, motivating geometry-aware methods based on affine-invariant, log-Euclidean, and log-Cholesky metrics \citep{pennec2006tensor,arsigny2007logeuclidean,lin2019cholesky}. These ideas have also been used in neural models for covariance data \citep{kobler2022spdbn,sihag2022vnn,cavallo2024spatiotemporal,ju2025spdsurvey,roy2026cdnn}. Generative models on manifolds include Riemannian diffusion and flow matching \citep{debortoli2022riemannian,jo2023mixture,chen2024geometries}, while SPD-specific work has focused mainly on generating single matrices \citep{li2024spdddpm,desurrel2025wrapped,marti2020corrgan}. Diffeo-CFM \citep{collas2025diffeocfm} is closest to our approach, but considers static, full-rank connectivity matrices.

\paragraph{Graph, time-series and EEG generation.}
Graph generators mainly model static graphs \citep{vignac2023digress,qin2025defog,huang2025spectral,williams2025weighted}, while time-series models operate in Euclidean spaces \citep{esteban2017rcgan,rasul2021timegrad,tenbrinke2026stflow}. EEG generation has largely focused on synthesizing raw signals using GANs or denoising models \citep{hartmann2018eeggan,luo2018cwgan,wang2026jet}. In contrast, GVD-CFM directly generates high-temporal-resolution connectivity trajectories rather than raw EEG.

\section{Background}
\label{sec:background}

\paragraph{Notation.}
$\tau\in[0,1]$ denotes flow time and $t$ EEG sample time. A trial has $T$ samples on $d$ channels and is divided into $B$ temporal windows. $\mathbb{S}^{d}$ and $\mathbb{S}_{++}^{d}$ denote the symmetric and SPD $d\times d$ matrices, $\odot$ and $\oslash$ the Hadamard product and division, and $m=d(d+1)/2$. $\operatorname{svec}:\mathbb{S}^{d}\rightarrow\mathbb{R}^{m}$ is lower-triangular vectorization with off-diagonal entries scaled by $\sqrt{2}$, which preserves the Frobenius inner product.

\paragraph{Conditional flow matching.}
\label{sec:cfm}
Flow matching \citep{lipman2023flow} learns a velocity field $v_\theta:[0,1]\times\R^{D}\to\R^{D}$ that transports a prior $p_0$ to a data distribution $p_1$. For a coupling $\pi(z_0,z_1)$, the linear path $z_\tau=(1-\tau)z_0+\tau z_1$ has constant velocity $z_1-z_0$, and the conditional objective
\begin{equation}
\mathcal{L}_{\mathrm{CFM}}(\theta)=\mathbb{E}_{\tau,\,(z_0,z_1)\sim\pi}\bigl\|v_\theta(\tau,z_\tau)-(z_1-z_0)\bigr\|^2
\label{eq:cfm}
\end{equation}
has the same gradient as the intractable marginal objective. An independent coupling gives the standard method; a minibatch optimal-transport coupling straightens the marginal paths \citep{villani2009optimal,tong2024improving}.

\paragraph{Riemannian flow matching by diffeomorphic pullback.}
\label{sec:pullback}
Riemannian flow matching \citep{chen2024geometries} regresses onto geodesic velocities, at the cost of computing geodesics and Riemannian norms. \citet{collas2025diffeocfm} observed that this cost disappears whenever $\mathcal{M}$ admits a global diffeomorphism $\varphi:\mathcal{M}\to E$ onto a Euclidean space; under the pullback metric $\varphi^\ast g_E$, $\varphi$ is an isometry and geodesics are pulled-back straight lines.
\begin{proposition}[Pullback reduction; \citealp{collas2025diffeocfm}]
\label{prop:pullback}
On $(\mathcal{M},\varphi^\ast g_E)$ the Riemannian CFM objective equals the Euclidean objective of Equation~\ref{eq:cfm} on $z=\varphi(x)$. Integrating in $E$ and decoding by $\varphi^{-1}$ gives exactly the samples obtained by integrating on $\mathcal{M}$, and every sample lies on $\mathcal{M}$.
\end{proposition}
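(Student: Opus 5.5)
The plan is to make everything explicit under the pullback metric and check the three claims in turn: the objectives coincide, the generated samples coincide, and every sample lies on $\mathcal{M}$. The main tool is the isometry $\varphi:(\mathcal{M},\varphi^\ast g_E)\to(E,g_E)$, which by construction preserves lengths, norms and geodesics.

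\emph{Geodesics and velocities.} First I identify the geodesics. Since $\varphi$ is an isometry onto the flat space $E$, the geodesic between $x_0,x_1\in\mathcal{M}$ is
\[
x_\tau=\varphi^{-1}\bigl((1-\tau)\varphi(x_0)+\tau\varphi(x_1)\bigr).
\]
Global bijectivity of $\varphi$ makes this geodesic unique, and convexity of $E$ makes it defined for all $\tau\in[0,1]$. Its velocity is $\dot x_\tau=d\varphi^{-1}_{z_\tau}(z_1-z_0)$, with $z_i=\varphi(x_i)$. The Riemannian CFM target at $(\tau,x_\tau)$ is therefore the pushforward under $d\varphi^{-1}$ of the constant Euclidean velocity $z_1-z_0$.

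\emph{Equality of objectives.} Given any Euclidean field $v_\theta$, define the tangent field $u_\theta(\tau,x)=d\varphi^{-1}_{\varphi(x)}\,v_\theta(\tau,\varphi(x))$. This correspondence is a bijection between vector fields on $\mathcal{M}$ and on $E$. By the definition of the pullback metric,
\[
\|u_\theta-\dot x_\tau\|^2_{\varphi^\ast g}=\|d\varphi(u_\theta-\dot x_\tau)\|^2_E=\|v_\theta(\tau,z_\tau)-(z_1-z_0)\|^2 .
\]
The coupling on $\mathcal{M}\times\mathcal{M}$ pushes forward under $\varphi\times\varphi$ to a coupling on $E\times E$, so taking expectations gives exactly Equation~\ref{eq:cfm}. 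In particular, the two objectives share the same minimizers.

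\emph{Equality of samples.} For sampling, let $z(\tau)$ solve $\dot z=v_\theta(\tau,z)$ with $z(0)=\varphi(x_0)$. Set $x(\tau)=\varphi^{-1}(z(\tau))$. The chain rule gives $\dot x=d\varphi^{-1}\dot z=u_\theta(\tau,x)$, so $x(\tau)$ solves the Riemannian ODE. Uniqueness of ODE solutions, under Lipschitz $v_\theta$ (hence locally Lipschitz $u_\theta$ in charts), then identifies the two trajectories. The same argument applies to any discretization carried out in $E$ and then decoded; that is, the samples are exactly those of the continuous flow. Finally, $\varphi^{-1}:E\to\mathcal{M}$ is defined on all of $E$, so every decoded point lies on $\mathcal{M}$, with no projection needed.

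\emph{Main obstacle.} The delicate point is the word ``exactly'' in the sampling claim. It needs well-posedness: $v_\theta$ must be regular enough for a unique solution on $[0,1]$, and $\varphi$ must be surjective onto $E$ so the trajectory can never leave the image. For the log-Euclidean chart $\operatorname{svec}\circ\log:\mathbb{S}^d_{++}\to\mathbb{R}^m$, surjectivity holds because $\exp$ maps every symmetric matrix to an SPD one. For a numerical integrator, ``exactly'' should be read as: the numerical iterates in $E$, decoded by $\varphi^{-1}$, coincide with the same scheme applied in the pulled-back chart.
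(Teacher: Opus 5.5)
Your proof is correct and takes the same route as the paper's proof of its instance of this statement, Proposition~\ref{prop:dct_le_rfm_equiv}: straight lines pulled back as geodesics, $D\varphi$ sending the model field and the geodesic velocity to $v_\theta$ and $z_1-z_0$, pointwise equality of residual norms by the isometry, then expectation over the pushed-forward coupling; your ODE argument for sampling fills in what the paper only asserts in its geometric overview, and it works more simply globally, since $\varphi$ maps solutions of one ODE bijectively onto solutions of the other, so no chart-wise Lipschitz step is needed. One slip: a discretization run in $E$ and then decoded does not give ``exactly the samples of the continuous flow''; it reproduces only the corresponding scheme on $\mathcal{M}$, for example geodesic Euler steps, because under $\varphi^\ast g_E$ one has $\exp_x(\xi)=\varphi^{-1}\bigl(\varphi(x)+D\varphi_x[\xi]\bigr)$, and this is the precise content of your final remark.
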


\section{Methods}
\subsection{Graph-Variate Dynamic Connectivity}
\label{sec:gvd}

Graph-variate signal analysis (GVSA) represents a multivariate time series through evolving interactions on a stable support \citep{smith2019gvsa}. Let $u_t\in\mathbb{R}^d$ be the channel-standardized EEG sample at time $t$, $U=[u_1,\ldots,u_T]\in\R^{d\times T}$, and $W=\frac{1}{T}UU^\top$ the whole-trial correlation matrix. We use the signed outer product $J_t=u_tu_t^\top$ as the instantaneous interaction and define $\Delta_t=W\odot J_t$. Unlike the original GVSA definition, we retain the sign of both the support and the instantaneous interaction, and we retain the main diagonal, since both are needed for the positive-definite geometry of our model (Appendix~\ref{app:gvsa}). When the stable support is computed from the signal itself, this is graph-variate dynamic (GVD) connectivity \citep{smith2019gvsa,roy2024fast,roy2025hodgefast}.

We divide the $T$ samples into $B$ disjoint, full-coverage windows $\{\mathcal{I}_b\}_{b=1}^{B}$. Individual samples are noisy and give prohibitively long sequences, so a trial is represented by the window averages
\begin{equation}
    \overline{\Delta}_b
    =\frac{1}{|\mathcal{I}_b|}\sum_{t\in\mathcal{I}_b}\Delta_t
    =W\odot J_b,
    \qquad
    J_b=\frac{1}{|\mathcal{I}_b|}\sum_{t\in\mathcal{I}_b}u_tu_t^\top .
    \label{eq:window_gvd}
\end{equation}
Setting $B=T$ recovers sample-resolution connectivity, while smaller $B$ trades temporal resolution for lower variance. The window covariance $J_b$ has rank at most $\min(d,|\mathcal{I}_b|)$, so it is singular whenever a window contains fewer samples than channels, and $\log J_b$ does not exist. The Hadamard product with the support removes this barrier.

\begin{proposition}[GVD lifts rank-deficient covariance]
\label{prop:gvd_spd}
Let $W\in\mathbb{S}_{++}^{d}$. If every channel has nonzero energy in window $b$, $q_i=\frac{1}{|\mathcal{I}_b|}\sum_{t\in\mathcal{I}_b}u_{i,t}^2>0$ for $i=1,\ldots,d$, then $\overline{\Delta}_b=W\odot J_b$ is SPD, whatever the rank of $J_b$, and
\begin{equation}
\lambda_{\min}(\overline{\Delta}_b)\geq\lambda_{\min}(W)\min_{i}q_i>0 .
\end{equation}
In particular, a single sample with rank-one $J_t$ gives $\Delta_t\in\mathbb{S}_{++}^{d}$ whenever every component of $u_t$ is nonzero. A GVD trajectory $\overline{\boldsymbol{\Delta}}=(\overline{\Delta}_1,\ldots,\overline{\Delta}_B)$ therefore lies on the product Riemannian manifold $\mathcal{M}_B=(\mathbb{S}_{++}^{d})^{B}$.
\end{proposition}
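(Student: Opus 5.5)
The plan is to reduce everything to a Schur-product lower bound. Since $J_b$ is an average of outer products $u_tu_t^\top$, it is positive semidefinite. Its diagonal is exactly $(J_b)_{ii}=q_i$. Write $W = (W-\lambda_{\min}(W)I) + \lambda_{\min}(W) I$, where the first term is PSD. Then by bilinearity
\begin{equation*}
W\odot J_b = (W-\lambda_{\min}(W)I)\odot J_b + \lambda_{\min}(W)\, I\odot J_b .
\end{equation*}
By the Schur product theorem, the Hadamard product of two PSD matrices is PSD, so the first term is PSD. The second term equals $\lambda_{\min}(W)\operatorname{diag}(q_1,\ldots,q_d)$. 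Hence for any unit vector $x$,
\begin{equation*}
x^\top \overline{\Delta}_b x \ge \lambda_{\min}(W)\sum_i q_i x_i^2 \ge \lambda_{\min}(W)\min_i q_i .
\end{equation*}
This is the claimed bound. It is strictly positive because $W\in\mathbb{S}^d_{++}$ and every $q_i>0$. Symmetry of $\overline{\Delta}_b$ is immediate, since it is a Hadamard product of symmetric matrices, so $\overline{\Delta}_b$ is SPD.

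The main point needing care is the Schur product step, which the bound relies on without any rank assumption on $J_b$. I would either cite it or prove it in one line. Write $J_b=\sum_k a_ka_k^\top$ with $a_k=u_t/\sqrt{|\mathcal{I}_b|}$, and let $A\succeq 0$. Then
\begin{equation*}
x^\top(A\odot a a^\top)x=(x\odot a)^\top A (x\odot a)\ge 0 .
\end{equation*}
This identity also gives the result directly, without the matrix decomposition: $x^\top \overline{\Delta}_b x=\frac{1}{|\mathcal{I}_b|}\sum_t (x\odot u_t)^\top W (x\odot u_t)$. Each summand is at least $\lambda_{\min}(W)\|x\odot u_t\|^2$. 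Summing over $t$ gives $\lambda_{\min}(W)\sum_i q_i x_i^2$. I would present this direct computation as the proof, since it is self-contained.

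The single-sample case is the specialization $|\mathcal{I}_b|=1$, where $q_i=u_{i,t}^2$, which is positive exactly when every component of $u_t$ is nonzero. Finally, applying the result to each window $b=1,\ldots,B$ places every $\overline{\Delta}_b$ in $\mathbb{S}^d_{++}$, so the trajectory lies in the product $(\mathbb{S}^d_{++})^B$. With the product of the individual metrics, this is a product Riemannian manifold.
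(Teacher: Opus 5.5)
Your proof is correct, and the direct computation you say you would present is the paper's argument: since $x\odot u_t=D_tx$ with $D_t=\operatorname{diag}(u_t)$, your identity is exactly $\overline{\Delta}_b=\frac{1}{|\mathcal{I}_b|}\sum_t D_tWD_t$. The paper also uses the congruence bound $D_tWD_t\succeq\lambda_{\min}(W)D_t^2$, averages it to $\lambda_{\min}(W)\operatorname{diag}(q_1,\ldots,q_d)$, and finishes with Rayleigh--Ritz; your shifted Schur-product variant reaches the same Loewner bound by a slightly different packaging.
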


The proof uses $W\odot u_tu_t^\top=D_tWD_t$ with $D_t=\operatorname{diag}(u_t)$ and is given in Appendix~\ref{app:proof_gvd_spd}. The result requires $W\succ0$, which a sample correlation matrix need not satisfy. Here $\operatorname{rank}(W)=\operatorname{rank}(U)$, which is full whenever the $T\geq512$ samples span $\mathbb{R}^d$ ($d\leq30$). We verify strict positive definiteness of every support and window matrix in double precision, and no trial failed this check.

\subsection{GVD-CFM}

\begin{figure}[t]
\centering
\includegraphics[width=\textwidth]{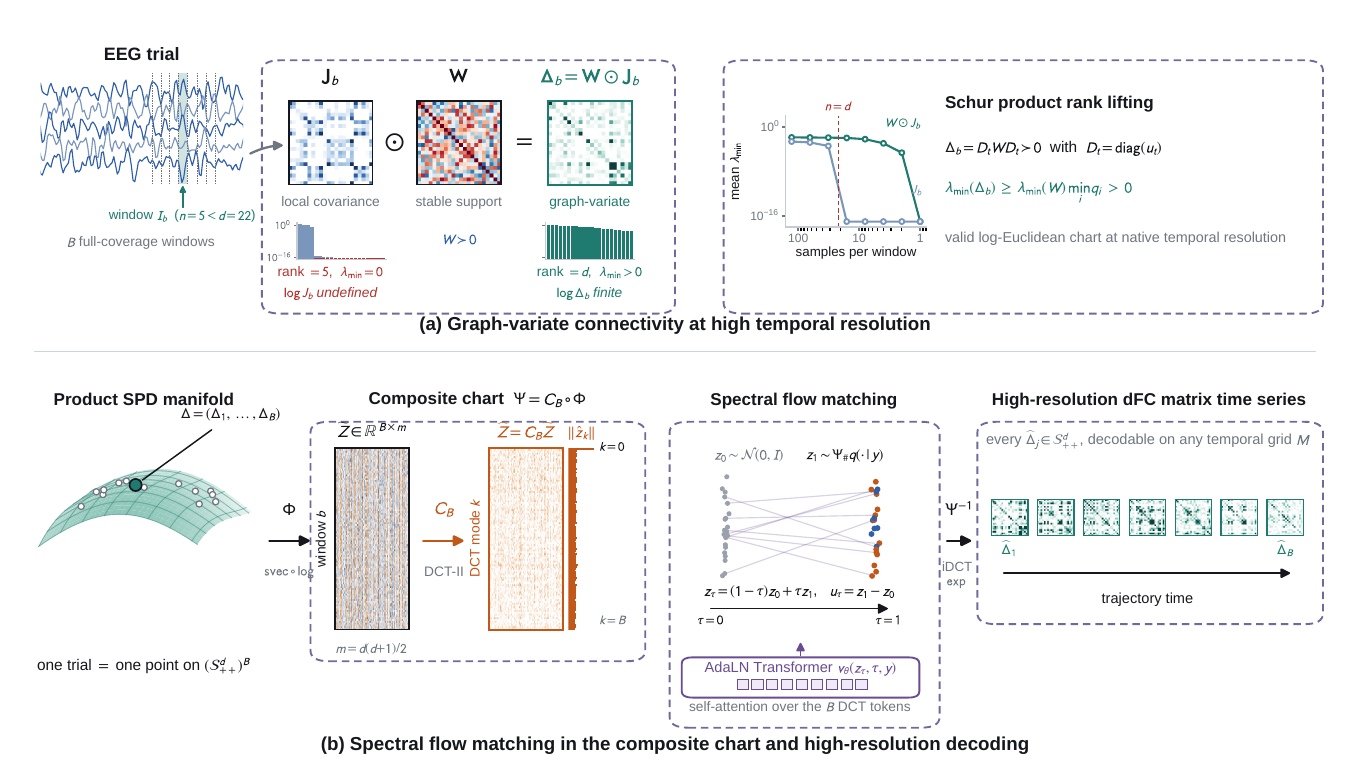}
\caption{\textbf{Overview of GVD-CFM.}
\textbf{(a)} When a window has fewer samples than channels ($n<d$), the window covariance $J_b$ is rank deficient and $\log J_b$ is undefined. With a positive-definite support $W$, $\Delta_b=W\odot J_b$ is positive definite with $\lambda_{\min}(\Delta_b)\geq\lambda_{\min}(W)\min_i q_i>0$, so the log-Euclidean chart remains valid at native temporal resolution.
\textbf{(b)} A trial is a point on $(\mathbb{S}^d_{++})^{B}$. The chart $\Psi$ applies $z_b=\operatorname{svec}\log\Delta_b$, featurewise standardization and the orthonormal temporal DCT-II; it is a global diffeomorphism and an isometry for its pullback metric (Remark~\ref{rem:standardization}). Flow matching along $z_\tau=(1-\tau)z_0+\tau z_1$ is performed in these coordinates by a Transformer over DCT modes, and decoding through $\Psi^{-1}$ returns a trajectory in which every $\widehat{\Delta}_b$ lies on $\mathbb{S}^d_{++}$.}
\label{fig:gvdcfm}
\end{figure}

We propose \emph{Graph-Variate Dynamic Conditional Flow Matching} (GVD-CFM). Essentially, GVD-CFM maps each trajectory to Euclidean coordinates with a global diffeomorphism, organizes these coordinates by temporal frequency, and learns a flow over the complete trajectory at once (Figure~\ref{fig:gvdcfm}).

\subsubsection{Composite Diffeomorphism}
\label{sec:chart}

We apply the log-Euclidean diffeomorphism \citep{pennec2006tensor,arsigny2007logeuclidean} to each window, $z_b=\operatorname{svec}(\log\overline{\Delta}_b)\in\mathbb{R}^{m}$, and stack the results into $Z=\Phi(\overline{\boldsymbol{\Delta}})=[z_1,\ldots,z_B]^\top\in\mathbb{R}^{B\times m}$. We standardize each feature with training-set statistics, $\widetilde{Z}=(Z-\mathbf{1}\mu^\top)\oslash\mathbf{1}\sigma^\top$, and apply the orthonormal DCT-II matrix $C_B$ \citep{ahmed1974dct,strang1999dct}, $C_B C_B^\top=I_B$, along the temporal axis. The complete chart is
\begin{equation}
    \Psi:\left(\mathbb{S}_{++}^{d}\right)^{B}\longrightarrow\mathbb{R}^{B\times m},
    \qquad
    \Psi(\overline{\boldsymbol{\Delta}})=\bar{Z}=C_B\widetilde{Z}.
\end{equation}
The matrix logarithm, $\operatorname{svec}$, the standardization and the orthonormal DCT are all globally invertible, so $\Psi$ is a diffeomorphism. Its inverse applies $C_B^\top$, reverses the standardization and exponentiates each window (Appendix~\ref{app:chart_inverse}). GVD-CFM can therefore be trained in Euclidean coordinates while remaining in one-to-one correspondence with trajectories on the product SPD manifold. For the unmodulated window covariance this chart does not exist when $|\mathcal{I}_b|<d$.

\begin{remark}[Standardization and the metric]
\label{rem:standardization}
Standardization is not an isometry of the log-Euclidean metric. The chart $C_B\circ\Phi$ is an isometry for the product log-Euclidean metric $g_{\mathrm{LE}}$, while $\Psi$ is an isometry for its pullback metric $g_\sigma=\Psi^\ast g_E$, the constant reweighting $\langle\xi,\eta\rangle_\sigma=\sum_{b,j}\xi_{bj}\eta_{bj}/\sigma_j^2$ of $g_{\mathrm{LE}}$ in log coordinates. The two metrics have the same geodesics, and the two flow-matching objectives differ only by a fixed weighting of the velocity residual and share the same population minimizer. Our equivalence statements refer to $g_\sigma$ and reduce to $g_{\mathrm{LE}}$ when $\sigma\equiv1$ (Corollary~\ref{cor:standardized_chart}).
\end{remark}

\subsubsection{Spectral Velocity Network}
\label{sec:transformer}

The DCT organizes temporal variation by frequency \citep{strang1999dct,shuman2013gsp}: persistent structure concentrates in low-order modes and faster changes in higher-order modes. All $B$ modes are retained, so no temporal resolution is discarded. Each row $\bar{z}_k$ of $\bar{Z}$ is a Transformer token for DCT mode $k$ \citep{vaswani2017attention}, with hidden state given by a learned projection of $\bar{z}_k$ plus an embedding of the mode index. The model is class-conditional through $c_\tau=e_{\mathrm{flow}}(\tau)+e_{\mathrm{class}}(y)$, a Fourier encoding of flow time \citep{tancik2020fourier} plus a learned class embedding, which modulates adaptive LayerNorm (AdaLN) Transformer blocks \citep{ba2016layernorm,peebles2023dit}. Self-attention acts across all DCT modes and gives hidden states $h_D\in\mathbb{R}^{B\times h}$, and an output projection gives the joint velocity $v_\theta:\mathbb{R}^{B\times m}\times[0,1]\times\mathcal{Y}\to\mathbb{R}^{B\times m}$ of the complete trajectory in a single forward pass.

\paragraph{Temporal branch.}
The amplitude of one window is spread across all $B$ modes, while the decoder exponentiates each window separately. Errors that are unbiased in log coordinates therefore inflate the power of decoded windows (Remark~\ref{rem:appendix_amplitude}), and a purely spectral network can produce a systematic amplitude offset (Appendix~\ref{app:time_context_visual}). To correct this we add a small temporal branch $T_\theta$ of two AdaLN blocks, which reads the same flow state in the window basis and returns its features to DCT alignment through a learned tokenwise gate,
\begin{equation}
    h_T=C_B\,T_\theta\bigl(C_B^\top z_\tau,c_\tau\bigr),
    \qquad
    v_\theta(z_\tau,\tau,y)=P_{\mathrm{out}}\bigl(h_D+g\odot h_T\bigr),
    \qquad
    g=\sigma\bigl(W_g[h_D;h_T]+b_g\bigr).
    \label{eq:gated_fusion}
\end{equation}
The flow state and the velocity remain in DCT coordinates, and since $C_B^\top$ is a fixed linear map, the objective, its minimizer and the Riemannian equivalence are unchanged. The branch's primary role is to reduce the amplitude bias (Section~\ref{sec:timecontext_ablation}, Appendix~\ref{app:velocity_network}).

\subsubsection{Training and Sampling}
\label{sec:training}

Let $\Psi_{\#}q(\cdot\mid y)$ be the Euclidean pushforward of the class-conditional distribution of GVD trajectories. We draw $z_0\sim\mathcal{N}(0,I)$, $z_1\sim\Psi_{\#}q(\cdot\mid y)$ and $\tau\sim\mathcal{U}[0,1]$, and use $z_\tau=(1-\tau)z_0+\tau z_1$ with target $u_\tau=z_1-z_0$. Within each minibatch, source and target samples of the same class are paired by an entropic optimal-transport plan $\pi_y$ computed with Sinkhorn iterations \citep{cuturi2013sinkhorn,tong2024improving} on the squared Euclidean cost in the DCT chart. The orthonormal DCT preserves distances, so this cost is the squared geodesic distance under $g_\sigma$, and the coupling is computed in the geometry of the product manifold. GVD-CFM minimizes
\begin{equation}
    \mathcal{L}_{\mathrm{GVD\text{-}CFM}}(\theta)
    =
    \mathbb{E}_{y,\,(z_0,z_1)\sim\pi_y,\,\tau}
    \Bigl[
        \tfrac{1}{Bm}\bigl\|v_\theta(z_\tau,\tau,y)-u_\tau\bigr\|_F^2
    \Bigr].
\end{equation}
To sample, we draw $z(0)\sim\mathcal{N}(0,I)$, integrate $\mathrm{d}z/\mathrm{d}\tau=v_{\theta^\star}(z,\tau,y)$ with fourth-order Runge--Kutta \citep{hairer1993solving}, and decode the result through $\Psi^{-1}$. Algorithms~\ref{alg:gvd_cfm_training} and~\ref{alg:gvd_cfm_sampling} in Appendix~\ref{app:experimental_details} state both procedures in full.

\subsection{Why Spectral Coordinates Simplify Trajectory Flow Matching}
\label{sec:spectral_theory_main}

The DCT reparametrizes the dependence structure of the complete log-Euclidean trajectory. Let $K_T\in\mathbb{R}^{B\times B}$ be the temporal covariance of the log-GVD trajectory. The Karhunen--Lo\`eve transform (KLT) diagonalizes $K_T$ exactly, and the orthonormal DCT is a fixed, data-independent approximation to the KLT for strongly correlated, temporally smooth processes \citep{ahmed1974dct,strang1999dct}. This has a direct consequence for conditional flow matching.

\begin{theorem}[Spectral decoupling of Gaussian trajectory flow]
\label{thm:spectral_cfm_main}
Let $x_1\sim\mathcal{N}(0,\Sigma)$ with $\Sigma=V\Lambda V^\top$, $x_0\sim\mathcal{N}(0,I)$, and $x_\tau=(1-\tau)x_0+\tau x_1$. In the covariance eigenbasis $y=V^\top x$, the population-optimal squared-error conditional flow-matching field is diagonal:
\begin{align}
    v_{\tau,k}^{\star}(y)
    &=
    \frac{\tau\lambda_k-(1-\tau)}{(1-\tau)^2+\tau^2\lambda_k}\,y_k
    &&\text{(independent coupling)},
    \label{eq:main_modewise_velocity}\\
    v_{\tau,k}^{\star}(y)
    &=
    \frac{\sqrt{\lambda_k}-1}{(1-\tau)+\tau\sqrt{\lambda_k}}\,y_k
    &&\text{(optimal-transport coupling)}.
    \nonumber
\end{align}
Hence, in the exact KLT basis, no cross-mode interaction is needed to represent the optimal second-order Gaussian transport under either coupling.
\end{theorem}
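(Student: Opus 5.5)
The plan is to use the fact that the population minimizer of the squared-error CFM objective is the conditional expectation $v^\star_\tau(x)=\mathbb{E}[x_1-x_0\mid x_\tau=x]$. Under either coupling, $(x_0,x_1,x_\tau)$ is jointly Gaussian and centred, so this conditional expectation is linear in $x$. We then rotate to $y=V^\top x$. Because $V$ is orthogonal, the standard source satisfies $V^\top x_0\sim\mathcal{N}(0,I)$, the target satisfies $V^\top x_1\sim\mathcal{N}(0,\Lambda)$, and the squared-error objective is invariant under the rotation. The field in the eigenbasis is therefore $V^\top v^\star_\tau(Vy)$, and it suffices to compute the regression of $y_1-y_0$ on $y_\tau$ with diagonal covariances.

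\textbf{Independent coupling.} Here $y_0$ and $y_1$ are independent Gaussians with diagonal covariances, so all coordinates $(y_{0,k},y_{1,k})$ are mutually independent across $k$. The conditional expectation of $y_{1,k}-y_{0,k}$ given the full vector $y_\tau$ then depends only on $y_{\tau,k}$. It equals $\mathrm{Cov}(y_1-y_0,y_\tau)_{kk}/\mathrm{Var}(y_{\tau,k})\cdot y_k$. The covariance is $\tau\lambda_k-(1-\tau)$ and the variance is $(1-\tau)^2+\tau^2\lambda_k$, which gives the first formula.

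\textbf{Optimal-transport coupling.} Between $\mathcal{N}(0,I)$ and $\mathcal{N}(0,\Sigma)$ the Brenier map is $x_1=\Sigma^{1/2}x_0$. In the eigenbasis this is $y_{1,k}=\sqrt{\lambda_k}\,y_{0,k}$, which is deterministic and coordinatewise. Then $y_{\tau,k}=\bigl((1-\tau)+\tau\sqrt{\lambda_k}\bigr)y_{0,k}$. The path is deterministic given $y_0$, and the map is invertible whenever $\lambda_k>0$, so the conditional expectation reduces to the velocity along that path. That velocity is $(\sqrt{\lambda_k}-1)y_{0,k}$; substituting $y_{0,k}$ in terms of $y_{\tau,k}$ gives the second formula.

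The delicate point is the OT case. I would need to justify that the coupling used in the population objective is the exact Brenier coupling, which is the idealised limit of the minibatch/entropic scheme. I would also need to check that the linear interpolation stays injective for all $\tau\in[0,1]$, so that the conditional expectation is well defined. Injectivity holds since $(1-\tau)+\tau\sqrt{\lambda_k}>0$ for $\lambda_k>0$, assuming $\Sigma\succ0$. Diagonality in both cases then yields the concluding statement that no cross-mode interaction is needed.
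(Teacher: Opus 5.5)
Your proposal is correct and follows the paper's proof. For the independent coupling the paper also writes $v^\star_\tau(x)=\mathbb{E}[x_1-x_0\mid x_\tau=x]$ as the Gaussian regression $\operatorname{Cov}(u_\tau,x_\tau)\operatorname{Cov}(x_\tau)^{-1}x$; for the OT case it simply assumes the Brenier coupling $x_1=\Sigma^{1/2}x_0$ with $\Sigma\succ0$ and inverts $x_\tau=\bigl((1-\tau)I+\tau\Sigma^{1/2}\bigr)x_0$, exactly as you do (so your ``delicate point'' is just a hypothesis there). The only difference is order in the independent case: the paper forms the full matrix $A_\tau=[\tau\Sigma-(1-\tau)I][(1-\tau)^2I+\tau^2\Sigma]^{-1}$ and then conjugates by $V$ to see it is diagonal, whereas you rotate first and use independence across modes to get per-mode scalar regressions, which is the same computation.
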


The proofs are given in Appendix~\ref{app:spectral_theory} (Corollaries~\ref{cor:appendix_modewise} and~\ref{cor:appendix_ot_modewise}). The residual off-diagonal temporal covariance measures how closely the DCT approximates the KLT; on BNCI2014\_001 the DCT reduces it from $0.778$ to $0.068$ (Figure~\ref{fig:dct_cfm_decoupling}). The DCT therefore removes most of the second-order temporal coupling before the Transformer is applied, while remaining invertible and isometric.

\section{Empirical Benchmarks}
\label{sec:benchmarks}

The main benchmark uses five two-class motor-imagery datasets from MOABB \citep{jayaram2018moabb,aristimunha2023moabb}: BNCI2014\_001, BNCI2014\_002, BNCI2015\_001, Shin2017A and Zhou2016. Every trial is band-pass filtered to 4--38~Hz, resampled to 128~Hz, standardized per channel and converted to a GVD trajectory of $B=100$ windows. The final session, or the final run when only runs are available, is held out, and all results are averaged over three generator seeds (Appendix~\ref{app:experimental_details}).

We compare against seven generators. Three are GVD-space controls that share the GVD targets and decoding pipeline of GVD-CFM and therefore isolate the contribution of the generative model: GVD-cVAE, GVD-DDPM and Window-DIFFEO-CFM, the last of which applies diffeomorphic flow matching \citep{collas2025diffeocfm} to each window independently. Four generate raw multichannel EEG, which is converted to a GVD trajectory using its own support and window covariances: JET \citep{wang2026jet}, a U-Net DDPM/DDIM \citep{ho2020ddpm,song2021ddim}, EEGGAN-2025 \citep{williams2025eeggan} and a conditional VAE \citep{sohn2015cvae}. No real support or window covariance is ever reused for a raw-EEG baseline.

We report a full-dimensional GVD Fr\'echet distance relative to the real-train to real-test reference \citep{dowson1982frechet,heusel2017fid}; EvaGeM $\alpha$-precision, $\beta$-recall and F1 \citep{alaa2022faithful}; the classification accuracy score (CAS) of a classifier trained only on generated trajectories and tested on held-out real ones \citep{ravuri2019cas}; held-out temporal diagnostics; and novelty measures \citep{kynkaanniemi2019precision,alaa2022faithful}. All are defined in Appendix~\ref{app:experimental_details}.

\section{Results}
\label{sec:results}

\begin{table*}[t]
\centering
\caption{Dataset-balanced generative performance across five EEG datasets, averaged over three generator seeds per dataset. Rel. GVD-FID and EvaGeM are evaluated jointly on trajectory positions and temporal increments. The real-data row reports the corresponding real-train/held-out-real-test reference and is excluded from generator rankings. Entries after $\pm$ are the seed standard deviation of the dataset-balanced mean, $\bigl(\sum_{i=1}^{5}\sigma_i^2\bigr)^{1/2}/5$, computed from the per-dataset seed standard deviations $\sigma_i$. Lower is better for Rel. GVD-FID; higher is better otherwise. Best generative values are \textbf{bold}; second-best generative values are \underline{\textit{underlined italics}}.}
\label{tab:main_generative_5ds}
\small
\setlength{\tabcolsep}{4.0pt}
\resizebox{\textwidth}{!}{%
\begin{tabular}{lcccccc}
\toprule
Method
& Rel. GVD-FID $\downarrow$
& Eva $\alpha$ $\uparrow$
& Eva $\beta$ $\uparrow$
& Eva F1 $\uparrow$
& CAS AUC $\uparrow$
& CAS F1 $\uparrow$ \\
\midrule
GVD-CFM
& \underline{\textit{1.022}}{\scriptsize$\pm$0.019}
& \textbf{0.677}{\scriptsize$\pm$0.034}
& \textbf{0.610}{\scriptsize$\pm$0.029}
& \textbf{0.613}{\scriptsize$\pm$0.028}
& \textbf{0.790}{\scriptsize$\pm$0.008}
& \textbf{0.725}{\scriptsize$\pm$0.006} \\

GVD-cVAE
& \textbf{0.815}{\scriptsize$\pm$0.003}
& 0.017{\scriptsize$\pm$0.005}
& 0.004{\scriptsize$\pm$0.002}
& 0.005{\scriptsize$\pm$0.002}
& \underline{\textit{0.727}}{\scriptsize$\pm$0.015}
& \underline{\textit{0.675}}{\scriptsize$\pm$0.010} \\

GVD-DDPM
& 1.753{\scriptsize$\pm$0.005}
& 0.089{\scriptsize$\pm$0.011}
& 0.039{\scriptsize$\pm$0.005}
& 0.053{\scriptsize$\pm$0.007}
& 0.599{\scriptsize$\pm$0.014}
& 0.560{\scriptsize$\pm$0.012} \\

Window-DIFFEO-CFM
& 1.697{\scriptsize$\pm$0.005}
& 0.005{\scriptsize$\pm$0.002}
& 0.082{\scriptsize$\pm$0.002}
& 0.009{\scriptsize$\pm$0.003}
& 0.683{\scriptsize$\pm$0.003}
& 0.634{\scriptsize$\pm$0.002} \\

cVAE
& 1.416{\scriptsize$\pm$0.016}
& \underline{\textit{0.407}}{\scriptsize$\pm$0.036}
& 0.006{\scriptsize$\pm$0.002}
& 0.012{\scriptsize$\pm$0.003}
& 0.631{\scriptsize$\pm$0.006}
& 0.572{\scriptsize$\pm$0.017} \\

JET
& 3.908{\scriptsize$\pm$0.274}
& 0.005{\scriptsize$\pm$0.002}
& 0.002{\scriptsize$\pm$0.002}
& 0.002{\scriptsize$\pm$0.002}
& 0.495{\scriptsize$\pm$0.031}
& 0.349{\scriptsize$\pm$0.014} \\

Vanilla-Diffusion
& 1.867{\scriptsize$\pm$0.096}
& 0.217{\scriptsize$\pm$0.042}
& \underline{\textit{0.366}}{\scriptsize$\pm$0.098}
& \underline{\textit{0.247}}{\scriptsize$\pm$0.056}
& 0.640{\scriptsize$\pm$0.012}
& 0.549{\scriptsize$\pm$0.029} \\

EEGGAN-2025
& 3.696{\scriptsize$\pm$0.016}
& 0.006{\scriptsize$\pm$0.002}
& 0.002{\scriptsize$\pm$0.001}
& 0.002{\scriptsize$\pm$0.001}
& 0.524{\scriptsize$\pm$0.031}
& 0.401{\scriptsize$\pm$0.017} \\

\midrule
Real data reference
& --
& 0.806
& 0.665
& 0.695
& 0.831
& 0.756 \\

\bottomrule
\end{tabular}}
\end{table*}

\begin{table*}[t]
\centering
\caption{Complementary data-quality diagnostics averaged across five datasets and three seeds. Dynamic and diversity ratios have ideal value 1. Temporal quantities are evaluated against held-out real trajectories after removing each trial's temporal mean, so the static support does not contribute to them. $\pm$ as in Table~\ref{tab:main_generative_5ds}. Best values are \textbf{bold}; second-best values are \underline{\textit{underlined italics}}. For ratio metrics, ranking is by proximity to 1.}
\label{tab:main_quality_5ds}
\scriptsize
\setlength{\tabcolsep}{4.0pt}
\resizebox{\textwidth}{!}{%
\begin{tabular}{lcccccc}
\toprule
Method
& Temp. corr. $\uparrow$
& Lag-ACF $\uparrow$
& Energy $\to 1$
& Dyn. frac. $\to 1$
& Diversity $\to 1$
& Coverage $\uparrow$ \\
\midrule
GVD-CFM
& \textbf{0.910}{\scriptsize$\pm$0.004}
& \underline{\textit{0.997}}{\scriptsize$\pm$0.001}
& \textbf{0.953}{\scriptsize$\pm$0.010}
& \textbf{0.987}{\scriptsize$\pm$0.013}
& \textbf{0.987}{\scriptsize$\pm$0.005}
& \textbf{0.226}{\scriptsize$\pm$0.005} \\

GVD-cVAE
& 0.774{\scriptsize$\pm$0.014}
& 0.995{\scriptsize$\pm$0.001}
& 0.260{\scriptsize$\pm$0.007}
& 0.289{\scriptsize$\pm$0.009}
& 0.396{\scriptsize$\pm$0.008}
& \underline{\textit{0.128}}{\scriptsize$\pm$0.004} \\

GVD-DDPM
& 0.059{\scriptsize$\pm$0.005}
& 0.327{\scriptsize$\pm$0.068}
& 1.688{\scriptsize$\pm$0.007}
& 1.543{\scriptsize$\pm$0.008}
& 1.367{\scriptsize$\pm$0.004}
& 0.038{\scriptsize$\pm$0.002} \\

Window-DIFFEO-CFM
& 0.012{\scriptsize$\pm$0.005}
& 0.055{\scriptsize$\pm$0.091}
& 1.794{\scriptsize$\pm$0.008}
& 1.755{\scriptsize$\pm$0.006}
& 1.189{\scriptsize$\pm$0.002}
& 0.031{\scriptsize$\pm$0.001} \\

cVAE
& 0.527{\scriptsize$\pm$0.016}
& 0.957{\scriptsize$\pm$0.006}
& 0.898{\scriptsize$\pm$0.006}
& \underline{\textit{0.981}}{\scriptsize$\pm$0.014}
& 0.817{\scriptsize$\pm$0.008}
& 0.034{\scriptsize$\pm$0.001} \\

JET
& 0.385{\scriptsize$\pm$0.019}
& 0.933{\scriptsize$\pm$0.009}
& 1.940{\scriptsize$\pm$0.118}
& 6.835{\scriptsize$\pm$0.338}
& 1.501{\scriptsize$\pm$0.079}
& 0.009{\scriptsize$\pm$0.004} \\

Vanilla-Diffusion
& \underline{\textit{0.783}}{\scriptsize$\pm$0.014}
& \textbf{0.998}{\scriptsize$\pm$0.001}
& \underline{\textit{0.941}}{\scriptsize$\pm$0.037}
& 1.759{\scriptsize$\pm$0.418}
& 1.070{\scriptsize$\pm$0.049}
& 0.085{\scriptsize$\pm$0.010} \\

EEGGAN-2025
& 0.022{\scriptsize$\pm$0.010}
& 0.306{\scriptsize$\pm$0.137}
& 0.609{\scriptsize$\pm$0.012}
& 4.864{\scriptsize$\pm$0.199}
& \underline{\textit{1.033}}{\scriptsize$\pm$0.019}
& 0.003{\scriptsize$\pm$0.000} \\

\bottomrule
\end{tabular}}
\end{table*}

\subsection{Generating High-Resolution Connectivity Trajectories}

Table~\ref{tab:main_generative_5ds} reports the main results; per-dataset values are given in Appendix~\ref{app:extended_tables}. In downstream classification GVD-CFM performs best by a large margin, with a CAS AUC of $0.790$ and CAS F1 of $0.725$ against $0.727$ and $0.675$ for the second-best model, GVD-cVAE, and it comes within $0.041$ AUC of a classifier trained on real data. GVD-cVAE attains the lowest relative GVD-FID, but its EvaGeM precision and recall are close to zero and its diversity ratio is $0.396$ (Table~\ref{tab:main_quality_5ds}), so its samples concentrate near the center of the distribution rather than covering it. GVD-CFM obtains the second-lowest relative GVD-FID ($1.022$) together with the highest EvaGeM $\alpha$-precision, $\beta$-recall and F1 of all models, reaching $88\%$ of the real-data EvaGeM F1. The raw-EEG generators obtain EvaGeM F1 below $0.25$ once their outputs are converted to GVD trajectories; we can see that realistic raw EEG does not imply realistic connectivity dynamics.

\subsection{Data Quality}
Table~\ref{tab:main_quality_5ds} shows that GVD-CFM gives the best overall balance between coverage and temporal fidelity. It achieves the highest held-out temporal-correlation agreement ($0.910$) and the second-highest lag-ACF agreement ($0.997$, against $0.998$ for Vanilla-Diffusion). Its dynamic-energy, dynamic-fraction and diversity ratios are the closest to $1$ of all generators, and its training-manifold coverage is the highest. The GVD-space controls show why joint spectral modeling matters. GVD-DDPM and Window-DIFFEO-CFM, which model temporal windows directly, obtain temporal-correlation agreement below $0.06$; GVD-cVAE retains temporal correlation but collapses dynamic energy to $0.260$ of the real value. GVD-CFM produces no exact copies of training trajectories (Appendix~\ref{app:additional_ablations}).

\paragraph{Static support versus generated dynamics.}
Every GVD window carries the trial-level support $W$, so class information could reside in static structure alone. The generated trajectories carry dynamics beyond it. The temporal diagnostics above remove each trial's temporal mean and are insensitive to $W$; a permutation test rejects exchangeability of the generated windows at $p=0.002$, the smallest attainable value with 500 permutations, on every dataset and seed; and on real data the full GVD trajectory outperforms the static support alone in held-out classification, while a support modulated by Gaussian noise falls to near chance (Figure~\ref{fig:temporal_resolution_ablation}).

\begin{table*}[t]
\centering
\caption{DCT and temporal-branch ablation. \emph{No DCT} trains the Transformer on temporal log-svec coordinates; \emph{DCT, spectral only} is GVD-CFM without the temporal branch; the last row is the full GVD-CFM. All values are dataset-balanced means over five datasets and three generator seeds; per-dataset values are in Tables~\ref{tab:appendix_dct_ablation_5ds} and~\ref{tab:appendix_dct_ablation_temporal_5ds}, and paired per-dataset tests in Table~\ref{tab:appendix_paired_ablation}. Ratios have ideal value 1. $\pm$ as in Table~\ref{tab:main_generative_5ds}. Best values are \textbf{bold}; second-best values are \underline{\textit{underlined italics}}.}
\label{tab:main_dct_ablation_5ds}
\small
\setlength{\tabcolsep}{3.8pt}
\resizebox{\textwidth}{!}{%
\begin{tabular}{lcccccccc}
\toprule
Variant
& Rel. GVD-FID $\downarrow$
& Eva F1 $\uparrow$
& CAS AUC $\uparrow$
& CAS F1 $\uparrow$
& Temp. corr. $\uparrow$
& Lag-ACF $\uparrow$
& Energy $\to1$
& Dyn. frac. $\to1$ \\
\midrule
No DCT
& 1.092{\scriptsize$\pm$0.029}
& 0.540{\scriptsize$\pm$0.028}
& 0.772{\scriptsize$\pm$0.007}
& 0.704{\scriptsize$\pm$0.007}
& 0.404{\scriptsize$\pm$0.007}
& \underline{\textit{0.666}}{\scriptsize$\pm$0.069}
& \underline{\textit{0.973}}{\scriptsize$\pm$0.016}
& \underline{\textit{1.011}}{\scriptsize$\pm$0.017} \\

DCT, spectral only
& \underline{\textit{1.025}}{\scriptsize$\pm$0.006}
& \underline{\textit{0.600}}{\scriptsize$\pm$0.019}
& \textbf{0.792}{\scriptsize$\pm$0.002}
& \underline{\textit{0.721}}{\scriptsize$\pm$0.004}
& \textbf{0.919}{\scriptsize$\pm$0.002}
& \textbf{0.997}{\scriptsize$\pm$0.001}
& \textbf{0.975}{\scriptsize$\pm$0.011}
& \textbf{1.010}{\scriptsize$\pm$0.010} \\

GVD-CFM (DCT + temporal branch)
& \textbf{1.022}{\scriptsize$\pm$0.019}
& \textbf{0.613}{\scriptsize$\pm$0.028}
& \underline{\textit{0.790}}{\scriptsize$\pm$0.008}
& \textbf{0.725}{\scriptsize$\pm$0.006}
& \underline{\textit{0.910}}{\scriptsize$\pm$0.004}
& \textbf{0.997}{\scriptsize$\pm$0.001}
& 0.953{\scriptsize$\pm$0.010}
& 0.987{\scriptsize$\pm$0.013} \\
\bottomrule
\end{tabular}}
\end{table*}

\subsection{Ablations}
\label{sec:timecontext_ablation}

\paragraph{The DCT gives large and consistent gains.}
Table~\ref{tab:main_dct_ablation_5ds} separates the spectral representation from the temporal branch. Moving from temporal coordinates to the DCT chart raises temporal-correlation agreement from $0.404$ to $0.919$ and lag-ACF agreement from $0.666$ to $0.997$, in line with Theorem~\ref{thm:spectral_cfm_main}. Both improvements hold on all five datasets (Welch $t$ from $4.8$ to $80.9$ and from $3.4$ to $13.0$; Table~\ref{tab:appendix_paired_ablation}). Under the same spectral-only network, a random orthogonal basis does not reproduce this gain, while the empirical KLT performs comparably to the DCT (Table~\ref{tab:basis_ablation_summary}). The benefit therefore comes from temporal decorrelation rather than from orthogonality alone.

\paragraph{The temporal branch corrects amplitude.}
Adding the temporal branch changes the dataset-balanced means by amounts comparable to their seed variation (EvaGeM F1 $0.600\to0.613$, CAS F1 $0.721\to0.725$, CAS AUC $0.792\to0.790$, temporal correlation $0.919\to0.910$), so we treat these differences as ties. Its intended effect is on amplitude: dynamic energy decreases on every dataset and moves towards $1$ where the spectral-only network overshoots, as we show in
Appendix \ref{app:time_context_visual}.

\paragraph{The stable support improves downstream utility.}
We compared GVD-CFM with the same generator trained on ridge-regularized window covariances $J_b+10^{-6}I$, since without the support a ridge is needed to make the windows SPD (per-dataset values in Table~\ref{tab:appendix_support_ablation_5ds}; $\pm$ as in Table~\ref{tab:main_generative_5ds}). The stable support raises CAS AUC on every dataset; the dataset-balanced CAS AUC rises from $0.736{\scriptstyle\pm0.005}$ to $0.790{\scriptstyle\pm0.008}$, CAS F1 from $0.679{\scriptstyle\pm0.005}$ to $0.725{\scriptstyle\pm0.006}$ and EvaGeM F1 from $0.436{\scriptstyle\pm0.015}$ to $0.613{\scriptstyle\pm0.028}$.

\subsection{Regional Physiological Plausibility}
\label{sec:regional_main}
\begin{figure*}[t]
    \centering
    \includegraphics[width=0.94\textwidth]{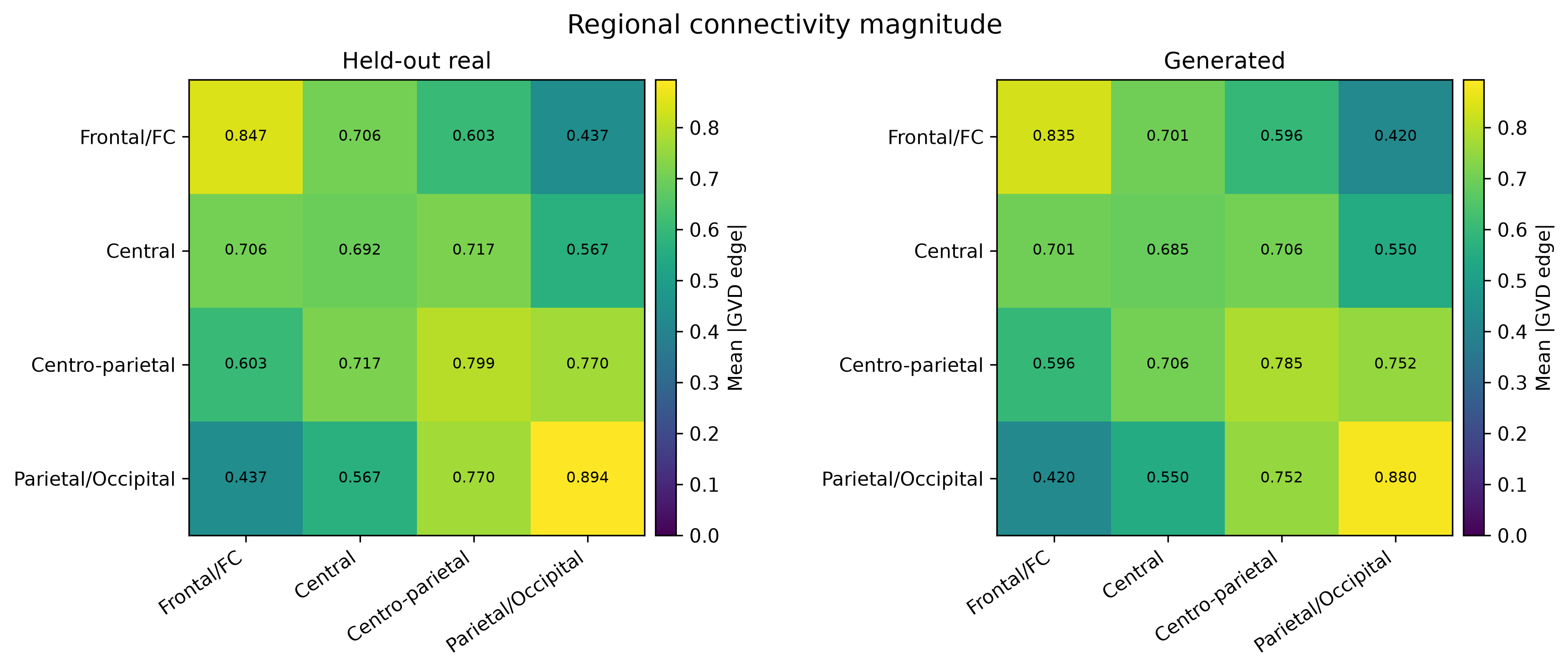}
    \caption{
    \textbf{Regional connectivity magnitude on BNCI2014\_001.}
    Mean absolute GVD edge magnitude between broad scalp regions for held-out
    real trials and generated trials. The generated data preserve the relative
    regional organization of the real EEG, including strong within-region
    Frontal/FC and Parietal/Occipital connectivity, strong
    Central--Centro-parietal coupling, and weaker long-range
    Frontal/FC--Parietal/Occipital interactions.
    }
    \label{fig:regional_gvd_matrix}
\end{figure*}

Global metrics do not show whether generated connectivity is organized in space and time as in real EEG. We grouped the 22 electrodes of BNCI2014\_001 into Frontal/FC, Central, Centro-parietal and Parietal/Occipital regions and computed the mean GVD edge magnitude for every pair of regions (Appendix~\ref{app:regional_plausibility}). Generated trajectories keep the regional organization of held-out data. Within-region connectivity is $0.835$ against $0.847$ for real data in Frontal/FC and $0.880$ against $0.894$ in Parietal/Occipital, Central--Centro-parietal coupling is $0.706$ against $0.717$, and the long-range Frontal/FC--Parietal/Occipital coupling stays weak ($0.420$ against $0.437$). Figure~\ref{fig:regional_gvd_dynamics} shows that the time course of these interactions is also preserved, and that its changes occur together across central, centro-parietal, frontal and posterior interactions. The generated dynamics are therefore organized around the sensorimotor regions expected to take part in motor imagery \citep{pfurtscheller1999erd}.

\subsection{Continuous-Grid Decoding and Efficiency}

The generated DCT coefficients define a band-limited cosine trajectory, so they can be decoded on any temporal grid without retraining. Training the spectral-only network once at $B_{\mathrm{train}}=25$ windows and decoding at $M=100$ changes generated-to-real AUC only from $0.803$ to $0.802$; at $M=200$ it remains $0.794$ with lag-ACF agreement $0.959$, and every decoded matrix stays SPD. At $M=400$ temporal-correlation agreement falls to $0.506$, the bandwidth limit of $25$ modes (Appendix~\ref{app:continuous_resolution_full}). GVD-CFM trains in about five minutes per dataset and achieves the best efficiency trade off (Appendix Figure~\ref{fig:efficiency_ablation}).

\section{Conclusions and Limitations}
We introduced GVD-CFM, a conditional flow matching model for generating high-temporal-resolution DFC trajectories from EEG. When the stable support is positive definite, GVD trajectories lie on a product Riemannian manifold even when individual window covariances are rank deficient. This allows Euclidean flow matching under a global diffeomorphism to be exactly equivalent to Riemannian flow matching on the original manifold. We further showed that applying the DCT approximately decorrelates the temporal structure of these trajectories, leading to large and consistent improvements in temporal fidelity. Across five EEG datasets and seven baselines, GVD-CFM provides the best overall trade-off across the reported metrics. The generated trajectories remain on the manifold, show no evidence of memorizing the training data, and preserve both the regional organization and temporal evolution of motor-imagery connectivity. Together, these results suggest that directly modeling connectivity trajectories is a practical alternative to generating raw EEG when the downstream quantity of interest is dynamic functional connectivity.

Our work also has several limitations. First, the number of log-Euclidean coordinates per window grows quadratically with the number of channels, which may limit scalability to substantially higher-density recordings. Second, GVD-CFM ultimately depends on the quality of the underlying GVD estimator. Future work could explore using phase- or coherence-based node functions \citep{smith2019gvsa,roy2025hodgefast}, and could explore alternative diffeomorphisms with different geometric or computational properties \citep{lin2019cholesky,david2019correlation,thanwerdas2022thesis}. It would also be useful to test whether the same framework generalizes beyond motor-imagery EEG and to other modalities where dynamic connectivity is of interest, particularly fMRI \citep{hutchison2013dfc,allen2014tracking,preti2017dynamic}. More broadly, extending GVD-CFM to larger networks, frequency-resolved connectivity, and other forms of neural dynamics would help establish how well the approach scales beyond the setting studied here.

\section*{Reproducibility Statement}

All five datasets are public and are obtained through MOABB
\citep{jayaram2018moabb,aristimunha2023moabb}. Appendix~\ref{app:experimental_details}
gives the preprocessing, the GVD construction, the log-Euclidean and DCT
chart, the architecture and optimizer settings of GVD-CFM, including the
temporal branch and the minibatch coupling, the configuration of every
baseline, and the definition of every metric. Training and sampling are stated
as Algorithms~\ref{alg:gvd_cfm_training} and~\ref{alg:gvd_cfm_sampling}. All
reported experiments use three generator seeds and three classifier seeds,
stated in the appendix, and exact train--test duplication is checked by
hashing before any model is fitted. Code is included in the supplementary material.

\section*{Use of Large Language Models}

Large language models were used for proofreading, notation consistency, and
formatting. They were also used to check correctness of mathematical proofs.  All scientific claims, mathematical arguments, experimental
choices, implementations, and reported results remain the responsibility of
the authors.


\small

\appendix

\section{Experimental Details}
\label{app:experimental_details}

\subsection{Datasets and evaluation protocol}

Experiments were conducted on five motor-imagery EEG datasets distributed
through MOABB \citep{jayaram2018moabb,aristimunha2023moabb}. The main
benchmark uses
\texttt{BNCI2014\_001} \citep{tangermann2012bci,brunner2008graz},
\texttt{BNCI2014\_002} \citep{steyrl2016randomforests},
\texttt{BNCI2015\_001} \citep{faller2012autocalibration},
\texttt{Shin2017A},
and
\texttt{Zhou2016} \citep{zhou2016trial}, with all configured subjects for each
dataset. The corresponding subject counts were 9, 14, 12, 29, and 4,
respectively.

All reported experiments used a fixed cross-session or cross-run evaluation
protocol. For each subject, when multiple sessions were available, the final
session was reserved for testing and all preceding sessions were used for
training. When multiple sessions were unavailable but multiple runs were
present, the final run was held out instead. If neither structure was available,
a stratified 50/50 split was used as a fallback. The data split was fixed across
generator seeds.

All representation statistics, normalization parameters, model parameters,
source-distribution statistics, and downstream classifiers were estimated
using training data only. Exact duplicate trials between the training and
held-out sets were explicitly checked, and execution was terminated if
train--test leakage was detected.

All generative experiments were repeated using three generator seeds,
\begin{equation}
    s_{\mathrm{gen}}\in\{1,2,3\},
\end{equation}
and CAS evaluation used three independent classifier seeds,
\begin{equation}
    s_{\mathrm{CAS}}\in\{9001,9002,9003\}.
\end{equation}

\subsection{EEG preprocessing}

Only EEG channels were retained; non-EEG channels were discarded. Signals
were converted to microvolts, band-pass filtered from 4 to 38~Hz, and
resampled to 128~Hz \citep{gramfort2013mne}. Event-aligned trials were then
extracted using the complete event interval provided by each dataset.

Before GVD construction, every EEG trial was standardized independently for
each channel over the complete temporal duration of the trial. For channel
$c$,
\begin{equation}
    \widetilde u_{c,t}
    =
    \frac{
        u_{c,t}-\mu_c
    }{
        \max(\sigma_c,10^{-6})
    },
\end{equation}
where
\begin{equation}
    \mu_c
    =
    \frac{1}{T}
    \sum_{t=1}^{T}u_{c,t},
\end{equation}
and
\begin{equation}
    \sigma_c
    =
    \sqrt{
        \frac{1}{T}
        \sum_{t=1}^{T}
        (u_{c,t}-\mu_c)^2
    }.
\end{equation}

Standardization was performed over time within each channel and trial.

Two motor-imagery classes were retained for each binary experiment.

\subsection{Graph-variate dynamic connectivity construction}

Each standardized EEG trial was transformed into a high-resolution
graph-variate dynamic (GVD) connectivity trajectory. Let
\begin{equation}
    U=
    \begin{bmatrix}
        u_1 & \cdots & u_T
    \end{bmatrix}
    \in\mathbb{R}^{d\times T},
\end{equation}
where
\begin{equation}
    u_t\in\mathbb{R}^{d}
\end{equation}
is the vector of standardized channel amplitudes at EEG sample $t$.

\paragraph{Stable trial-level support.}

The long-term support matrix was the signed whole-trial Pearson correlation
matrix,
\begin{equation}
    W
    =
    \frac{1}{T}
    UU^\top.
    \label{eq:gvd_support}
\end{equation}

Because every channel has already been centered and normalized over the
complete trial, Equation~\ref{eq:gvd_support} is the signed channel
correlation matrix under the population-standard-deviation convention used
by the implementation.

No absolute-value operation, additive ridge, or nearest-SPD projection was
applied to the support matrix in the reported configuration. Because every
channel is standardized over the trial, $W$ has unit diagonal and
$\operatorname{rank}(W)=\operatorname{rank}(U)$, which is full whenever the
$T\gg d$ samples span $\mathbb{R}^d$. The strict positive definiteness of every
$W$ is verified in double precision before construction of the trajectory, and
execution terminates if the check fails. No trial in any dataset failed.

\paragraph{Sample-resolution instantaneous interaction.}

At every original EEG sample, the instantaneous interaction matrix was
defined as the rank-one outer product
\begin{equation}
    J_t
    =
    u_tu_t^\top.
    \label{eq:instantaneous_outer}
\end{equation}

The sample-resolution graph-variate matrix was then
\begin{equation}
    \Delta_t
    =
    W\odot J_t,
    \label{eq:sample_gvd}
\end{equation}
where $\odot$ denotes the Hadamard product.

Using
\begin{equation}
    D_t=\operatorname{diag}(u_t),
\end{equation}
Equation~\ref{eq:sample_gvd} can equivalently be written as
\begin{equation}
    \Delta_t
    =
    D_t W D_t.
    \label{eq:gvd_congruence}
\end{equation}

\paragraph{Temporal aggregation.}

Each trial was partitioned into
\begin{equation}
    B=100
\end{equation}
disjoint full-coverage temporal bins. Their boundaries were
\begin{equation}
    e_b
    =
    \left\lfloor
        \frac{bT}{B}
    \right\rfloor,
    \qquad
    b=0,\ldots,B,
\end{equation}
so that every original EEG sample belongs to exactly one bin and the complete
trial is covered.

Let
\begin{equation}
    \mathcal{I}_b
    =
    \{e_{b-1},\ldots,e_b-1\}
\end{equation}
denote the sample indices in bin $b$. The reported GVD trajectory was obtained
by averaging the already Hadamard-modulated sample-resolution matrices (this is $\overline{\Delta}_b$ in the main text; we drop the bar in the appendix):
\begin{equation}
    \Delta_b
    =
    \frac{1}{|\mathcal{I}_b|}
    \sum_{t\in\mathcal{I}_b}
    \left(
        W\odot u_tu_t^\top
    \right).
    \label{eq:binned_gvd}
\end{equation}

Since $W$ is constant within a trial, this is equivalently
\begin{equation}
    \Delta_b
    =
    W\odot
    \left(
        \frac{1}{|\mathcal{I}_b|}
        \sum_{t\in\mathcal{I}_b}
        u_tu_t^\top
    \right),
\end{equation}
which is the form used in the implementation.

No local re-centering was performed inside a temporal bin, because centering
and scaling had already been carried out over the complete trial. Likewise,
no covariance ridge or additive GVD ridge was used.

The resulting matrices were symmetrized numerically and their minimum
eigenvalues were evaluated in double precision. If a trajectory failed the
strict positive-definiteness check, execution terminated rather than applying
an additive ridge or post-hoc nearest-SPD correction.

The canonical GVD representation used throughout the benchmark is therefore
\begin{equation}
    \boxed{
    \Delta_b
    =
    \frac{1}{|\mathcal{I}_b|}
    \sum_{t\in\mathcal{I}_b}
    W\odot u_tu_t^\top
    }.
    \label{eq:canonical_gvd}
\end{equation}

\subsection{Log-Euclidean trajectory representation}

Each SPD GVD matrix was mapped to the log-Euclidean chart
\citep{arsigny2007logeuclidean}:
\begin{equation}
    z_b
    =
    \operatorname{svec}
    \left(
        \log\Delta_b
    \right)
    \in\mathbb{R}^{m},
\end{equation}
where
\begin{equation}
    m=\frac{d(d+1)}{2}.
\end{equation}

The $\operatorname{svec}$ operator contains the lower-triangular entries of a
symmetric matrix, with off-diagonal elements multiplied by $\sqrt{2}$. This
preserves the Frobenius inner product under vectorization.

The complete trajectory was stacked as
\begin{equation}
    Z
    =
    \begin{bmatrix}
        z_1^\top\\
        \vdots\\
        z_B^\top
    \end{bmatrix}
    \in
    \mathbb{R}^{B\times m}.
\end{equation}

For every log-svec feature $j$, the normalization statistics were estimated
using the real training trajectories only and pooled across training trials
and temporal bins:
\begin{equation}
    \mu_j
    =
    \frac{1}{N_{\mathrm{tr}}B}
    \sum_{n=1}^{N_{\mathrm{tr}}}
    \sum_{b=1}^{B}
    Z_{n,b,j},
\end{equation}
and
\begin{equation}
    \sigma_j
    =
    \operatorname{Std}_{n,b}
    \left[
        Z_{n,b,j}
    \right].
\end{equation}

A minimum scale of $10^{-6}$ was used,
\begin{equation}
    \sigma_j
    \leftarrow
    \max(\sigma_j,10^{-6}),
\end{equation}
and the standardized coordinates were
\begin{equation}
    \widetilde Z_{n,b,j}
    =
    \frac{
        Z_{n,b,j}-\mu_j
    }{
        \sigma_j
    }.
    \label{eq:log_svec_standardization}
\end{equation}

The same train-estimated statistics were used for all direct GVD-space
generators.

\paragraph{Full spectral representation.}

For GVD-CFM, a full orthonormal DCT-II was subsequently applied along the
temporal axis:
\begin{equation}
    \bar Z
    =
    C_B\widetilde Z,
    \qquad
    C_B^\top C_B
    =
    C_BC_B^\top
    =
    I_B.
    \label{eq:dct_representation}
\end{equation}

All $B=100$ DCT modes were retained. The DCT therefore performs no
dimensionality reduction; it is an invertible orthogonal reparameterization
of the complete temporal trajectory.

The inverse transformation is
\begin{align}
    \widetilde Z
    &=
    C_B^\top\bar Z,\\
    Z
    &=
    \widetilde Z\odot\sigma+\mu,\\
    \Delta_b
    &=
    \exp
    \left(
        \operatorname{svec}^{-1}(z_b)
    \right).
\end{align}

For generated trajectories, extreme log-eigenvalues were stabilized before
matrix exponentiation. The lower and upper bounds were estimated exclusively
from the real training log-spectrum using the $0.001$ and $0.999$ quantiles,
respectively, and each bound was expanded by a margin of $0.5$. This
stabilization was applied only in the log domain and did not modify the real
training or held-out trajectories.

\subsection{Generative models}

The benchmark contained eight generators:
\begin{enumerate}
    \item GVD-CFM;
    \item GVD-cVAE;
    \item GVD-DDPM;
    \item Window-DIFFEO-CFM \citep{collas2025diffeocfm};
    \item JET \citep{wang2026jet};
    \item Vanilla Diffusion / DDIM \citep{ho2020ddpm,song2021ddim};
    \item EEGGAN-2025 \citep{williams2025eeggan,williams2023augmenting};
    \item conditional VAE \citep{kingma2014vae,sohn2015cvae};
\end{enumerate}

The first four models generate GVD trajectories directly. The remaining four generate raw multichannel EEG, after which each synthetic EEG trial is independently transformed into a GVD trajectory using its own support matrix $W$ and its own window covariances $J_b$. The quantitative tables report GVD-CFM and seven comparators.

All trainable generators were trained for 1000 epochs. The training batch size was scaled with the number of available training trials:
\begin{equation}
N_{\mathrm{batch}}
=
\max\left(
1,
\min\left(
N,
\operatorname{round}
\left[
64\frac{N}{1000}
\right]
\right)
\right).
\end{equation}
This gives a batch size of 64 for 1000 training trials and keeps the number of optimizer batches per epoch approximately constant across datasets.


%


\subsection{GVD-CFM}

GVD-CFM generates complete dynamic GVD trajectories in the composite log-Euclidean/DCT coordinate system. The network operates on
\begin{equation}
\bar Z\in\mathbb{R}^{B\times m},
\end{equation}
where each of the $B$ tokens corresponds to one DCT mode and each token contains the $m$ log-\texttt{svec} connectivity coordinates.

The velocity network consists of a spectral Transformer and a small temporal branch that share the conditioning vector. The spectral branch is an AdaLN Transformer \citep{vaswani2017attention,peebles2023dit} with model dimension 256, 8 attention heads, and 6 Transformer blocks acting on the $B$ DCT tokens. Each block contains multi-head self-attention followed by a feed-forward network with expansion factor 4 and GELU activation \citep{hendrycks2016gelu}. The input and output projections map between the $m$-dimensional GVD coordinate and the 256-dimensional Transformer state.

The temporal branch receives $C_B^\top z_\tau$, the inverse-DCT view of the current state, as $B$ window tokens. It projects each window to the same 256-dimensional width, adds a window-position embedding, and applies 2 AdaLN Transformer blocks of the same width and number of heads, conditioned on the same flow-time and class vector. Its output is multiplied by $C_B$ to return to DCT-token alignment and fused with the spectral hidden states through the tokenwise sigmoid gate of Equation~\ref{eq:gated_fusion} before the shared output projection. No auxiliary loss is applied to the temporal branch. Including both branches, the network has between $9.89$ and $10.18$ million parameters, depending on the number of EEG channels.

DCT-mode identity is represented using a continuous learned embedding. Flow time is encoded using 16 random Fourier frequencies \citep{tancik2020fourier} followed by two fully connected SiLU layers. The class condition is represented by a learned embedding and added to the flow-time representation before adaptive layer-normalization modulation.
\paragraph{Source distribution.}

The default GVD-CFM source is an isotropic standard normal distribution in the complete DCT trajectory space. Specifically, for each generated trajectory,
\begin{equation}
z_0 \sim \mathcal{N}(0,I),
\end{equation}
where $z_0$ has the same dimensionality as the vectorized DCT representation of the target GVD trajectory. Equivalently,
\begin{equation}
z_0 = \epsilon,
\qquad
\epsilon \sim \mathcal{N}(0,I).
\end{equation}

The source distribution is independent of the class label and is not estimated from the training data. Class information is instead supplied to the conditional flow model through the class-conditioning mechanism. Thus, all classes share the same standard Gaussian source, while the learned conditional vector field transports samples toward the corresponding class-conditional distribution of GVD trajectories.
\paragraph{Minibatch coupling.}
Source and target samples are paired classwise. For every class present in a training minibatch, a fresh standard-normal draw of the same size is matched to that class's target trajectories by an entropic optimal-transport plan computed with Sinkhorn iterations \citep{cuturi2013sinkhorn} on the squared Euclidean distance between complete DCT-coordinate trajectories, and the minibatch is re-paired according to this plan \citep{tong2024improving}. Since the orthonormal DCT is an isometry, this is the squared distance between standardized log-Euclidean trajectories. Coupling within class ensures that every source sample is paired with a target of the label on which the velocity is conditioned.
\paragraph{Flow-matching objective.}

For each training example,
\begin{equation}
\tau\sim\mathcal U(0,1),
\end{equation}
$(z_0,z_1)$ is drawn from the classwise minibatch coupling, and a straight conditional probability path is used:
\begin{equation}
z_\tau
=
(1-\tau)z_0+\tau z_1,
\end{equation}
with target velocity
\begin{equation}
u_\tau
=
z_1-z_0.
\end{equation}
The model is optimized using an $\ell_2$ conditional flow-matching objective,
\begin{equation}
\mathcal L_{\mathrm{CFM}}
=
\left\|
v_\theta(z_\tau,\tau,y)
-
u_\tau
\right\|_2^2.
\end{equation}
AdamW \citep{loshchilov2019adamw} is used with learning rate $5\times10^{-4}$ and weight decay $10^{-4}$. Gradient norms are clipped at 1.0. Training minibatches are sampled with inverse class-frequency weighting. 
\paragraph{Sampling.}

Sampling starts from the standard-normal source and integrates the learned velocity field from flow time 0 to 1. The default sampler is fourth-order Runge--Kutta with 50 uniform integration steps, so each trajectory requires 200 evaluations of the velocity network. Sampling is performed in batches of at most 2048 trajectories.

After integration, DCT coefficients are transformed back to temporal log-\texttt{svec} coordinates using the inverse DCT, reversed through the training-set affine standardization, stabilized in the log-spectrum, and exponentiated to obtain SPD GVD trajectories.

\subsection{Direct GVD-space control models}

Three additional generators operate directly on the same GVD targets as GVD-CFM:
GVD-cVAE, GVD-DDPM, and Window-DIFFEO-CFM.

These controls never generate raw EEG. They therefore isolate the contribution of the generative model from that of the GVD representation itself. All three use the same training GVD matrices, log-Euclidean $\operatorname{svec}(\log(\cdot))$ representation, training-set standardization, generated log-spectrum stabilization, and SPD decoding procedure as GVD-CFM. However, unlike GVD-CFM, they operate directly on the temporal sequence of standardized log-Euclidean GVD coordinates and do not transform the trajectories into DCT modes.

GVD-cVAE and GVD-DDPM receive exactly the same training-set standardized coordinates as GVD-CFM. Thus, each trial is represented directly as a sequence of standardized log-\texttt{svec} GVD vectors, and no temporal DCT or inverse-DCT operation is used by either baseline.

\subsubsection{GVD-cVAE}

GVD-cVAE is a conditional VAE \citep{kingma2014vae,sohn2015cvae} defined on the complete temporal GVD trajectory in standardized log-Euclidean coordinates.

The sequence of temporal log-\texttt{svec} vectors is flattened into a single trial-level representation and concatenated with a one-hot class vector. The encoder contains two fully connected layers of width 512 with GELU activations. The latent representation has dimension 64 and is parameterized by separate mean and log-variance heads.

The decoder concatenates the sampled latent representation with the one-hot class vector and applies two width-512 GELU layers followed by a linear output layer spanning the complete temporal GVD trajectory.

The objective is
\begin{equation}
\mathcal L_{\mathrm{GVD-cVAE}}
=
\mathcal L_{\mathrm{MSE}}
+
10^{-3}\mathcal L_{\mathrm{KL}}.
\end{equation}
The model uses AdamW with learning rate $10^{-3}$, weight decay $10^{-4}$, and gradient-norm clipping at 5.0.

At generation time,
\begin{equation}
z\sim\mathcal N(0,I_{64}),
\end{equation}
is sampled and passed to the class-conditional decoder. The generated temporal log-Euclidean trajectory is inverse-standardized and mapped directly back to a sequence of SPD GVD matrices through the same matrix-exponential decoding path used by GVD-CFM. 

\subsubsection{GVD-DDPM}

GVD-DDPM is an $\epsilon$-prediction diffusion model \citep{ho2020ddpm} operating directly on the complete temporal sequence of standardized log-Euclidean GVD coordinates.

Its noise-prediction network is a pre-norm Transformer encoder with width 256, 8 attention heads, and 6 layers, matching the width, number of heads, and depth of the GVD-CFM spectral Transformer. Each layer contains multi-head self-attention followed by a feed-forward network with expansion factor 4 and GELU activation, without dropout. The Transformer tokens correspond to temporal GVD windows. Each token is a learned projection of one standardized log-\texttt{svec} window, to which a Fourier embedding of the diffusion step, a window-position embedding, and a learned class embedding are added. Consequently, the model operates directly on the temporal log-\texttt{svec} trajectory.

The diffusion process contains 200 steps with a linear variance schedule
\begin{equation}
\beta_1=10^{-4},
\qquad
\beta_{200}=2\times10^{-2}.
\end{equation}
At a randomly selected diffusion step $\ell$, noise
\begin{equation}
\epsilon\sim\mathcal N(0,I)
\end{equation}
is added according to the standard forward diffusion process. The network is trained with
\begin{equation}
\mathcal L_{\mathrm{DDPM}}
=
\left\|
\epsilon_\theta(x_\ell,\ell,y)
-
\epsilon
\right\|_2^2.
\end{equation}
AdamW uses learning rate $2\times10^{-4}$, weight decay $10^{-4}$, and gradient clipping at 5.0. Training and sampling are performed in single precision.

Generation uses full ancestral DDPM sampling over all 200 diffusion steps. The resulting temporal coordinates are inverse-standardized and decoded directly through the log-Euclidean inverse map to obtain a sequence of SPD GVD matrices.

\subsubsection{Window-DIFFEO-CFM}

Window-DIFFEO-CFM \citep{collas2025diffeocfm} is a per-window flow-matching control defined in the same standardized log-Euclidean GVD space. Unlike GVD-CFM and the other whole-trajectory controls, it does not model the complete temporal trajectory jointly and has no communication between different temporal windows.

Each standardized log-\texttt{svec} GVD window is treated as an independent training sample. The network is a conditional MLP with one hidden layer of width 128 and SELU activation. Its input contains:
\begin{enumerate}
    \item the current noisy GVD window;
    \item a one-hot class vector;
    \item the normalized temporal window location $\xi_b\in[0,1]$;
    \item the flow time.
\end{enumerate}

For each window,
\begin{equation}
x_0\sim\mathcal N(0,I),
\end{equation}
and the conditional interpolation path is
\begin{equation}
x_\tau=(1-\tau)x_0+\tau x_1.
\end{equation}
The model is trained with the MSE velocity objective
\begin{equation}
\mathcal L_{\mathrm{Window-CFM}}
=
\left\|
v_\theta(x_\tau,\tau,y,\xi_b)
-
(x_1-x_0)
\right\|_2^2.
\end{equation}
AdamW uses learning rate $10^{-3}$ with zero weight decay and gradient clipping at 5.0. Sampling uses RK4 with 50 steps independently for each temporal window.

This baseline therefore tests whether matching the marginal distribution of each temporal GVD window independently is sufficient, in contrast to jointly modeling the full temporal trajectory with cross-window context.

\subsection{Raw-EEG generators}

The remaining four methods generate raw multichannel EEG. Their generated signals are not compared directly with GVD-CFM in raw-signal space. Instead, every synthetic EEG trial is passed through the same GVD construction used for real data.

For every generated trial $U^{(g)}$, its own support matrix is computed:
\begin{equation}
W^{(g)}
=
\operatorname{corr}
\left(
U^{(g)}
\right),
\end{equation}
and its own window covariance sequence
\begin{equation}
J^{(g)}_1,\ldots,J^{(g)}_B
\end{equation}
is constructed. The final generated GVD trajectory is
\begin{equation}
\Delta^{(g)}_b
=
W^{(g)}\odot J^{(g)}_b.
\end{equation}
Thus, no real-data $W$ or $J_b$ is reused for a raw-EEG baseline.

\subsubsection{JET}

JET \citep{wang2026jet} is evaluated using the official Y-Research-SBU implementation and the JiT-B/16 configuration. Raw EEG trials are padded at the input boundary to a length divisible by a patch size of 200 samples. The number of input tokens is therefore determined by the number of EEG channels multiplied by the number of temporal patches.

The benchmark retains the official JET denoiser, objective, EMA updates, and sampling equations. The configuration uses a class-conditional model, label dropout probability 0.1, $P_{\mathrm{mean}}=-0.8$, $P_{\mathrm{std}}=0.8$, Gaussian noise, and the mixed loss configuration from the release. The enabled auxiliary loss terms are statistical loss with weight 1.0, total-variation loss with weight 0.1, and correlation loss with weight 0.1; the STFT loss weight is 0.

Two EMA decay factors, 0.9999 and 0.9996, are retained from the implementation. Sampling uses the Heun solver with 50 steps.

The JET base learning rate is $5\times10^{-5}$ and is scaled by
\begin{equation}
\mathrm{lr}
=
5\times10^{-5}
\frac{N_{\mathrm{batch}}}{256}.
\end{equation}
AdamW uses $\beta=(0.9,0.95)$ and zero weight decay.

\subsubsection{Vanilla Diffusion / DDIM}

The diffusion baseline follows the released Song, Meng, and Ermon DDPM/DDIM implementation \citep{ho2020ddpm,song2021ddim}. The original model uses two-dimensional convolution for images; only the convolutional operators are ported to one-dimensional convolution so that the model operates directly on multichannel EEG.

No class embedding is introduced because the released architecture is unconditional. Instead, one diffusion model is trained separately for each class.

The forward diffusion process uses 1000 steps and a linear beta schedule from
\begin{equation}
10^{-4}
\quad\text{to}\quad
2\times10^{-2}.
\end{equation}
The U-Net uses base width 128, channel multipliers
\begin{equation}
(1,2,2,2),
\end{equation}
two residual blocks per resolution, and dropout 0.1.

Training uses standard DDPM noise prediction with learning rate $2\times10^{-4}$. Sampling uses the generalized DDIM sampler with 50 sampling steps and
\begin{equation}
\eta=0,
\end{equation}
corresponding to deterministic DDIM sampling.

Input EEG is scaled to the training-derived $[-1,1]$ range before diffusion and transformed back afterward.

\subsubsection{EEGGAN-2025}

EEGGAN-2025 \citep{williams2025eeggan,williams2023augmenting} uses the official AutoResearch EEG-GAN generator, discriminator, and \texttt{GANTrainer.batch\_train} update procedure.

The benchmark reproduces the source preprocessing in memory. EEG is arranged as trial $\times$ time $\times$ channel and normalized using global training-set min--max normalization. A one-step class-condition prefix is prepended and repeated over channels.

The source configuration uses temporal patch size 20, hidden dimension 16, four model layers, latent dimension 128, five critic iterations, and gradient-penalty coefficient 10. Generator and discriminator learning rates are both $10^{-4}$.

The released Adam optimizer settings
\begin{equation}
\beta=(0,0.9)
\end{equation}
are retained. A compatibility wrapper is used only to express both beta values as floating-point numbers in recent PyTorch versions.

Generated normalized EEG is mapped back to the original training-data amplitude range before GVD construction.

\subsubsection{Conditional raw-EEG VAE}

The raw-EEG conditional VAE \citep{kingma2014vae,sohn2015cvae} operates on a flattened complete EEG trial.

The encoder concatenates the flattened signal with a learned class embedding and uses two fully connected layers of width 512 with SiLU activations. The latent dimension is 64. The decoder uses two width-512 SiLU layers before mapping back to the complete multichannel EEG trial.

The objective is
\begin{equation}
\mathcal L_{\mathrm{cVAE}}
=
\mathcal L_{\mathrm{MSE}}
+
10^{-3}\mathcal L_{\mathrm{KL}}.
\end{equation}
AdamW is used with learning rate $10^{-3}$ and weight decay $10^{-4}$.

Synthetic EEG is generated from
\begin{equation}
z\sim\mathcal N(0,I_{64}),
\end{equation}
conditioned on the desired class.

\subsection{Full-dimensional GVD Fr\'echet distance}

All generative models are compared in the same GVD feature space \citep{dowson1982frechet,heusel2017fid}. No PCA or other dimensionality-reduction transform is applied.

For a trajectory
\begin{equation}
Z=[z_1,\ldots,z_B],
\end{equation}
the Fr\'echet feature contains both all log-\texttt{svec} positions and all first temporal increments:
\begin{equation}
\phi(Z)
=
\left[
z_1,\ldots,z_B,
z_2-z_1,\ldots,z_B-z_{B-1}
\right].
\end{equation}
The feature dimension is therefore
\begin{equation}
(2B-1)m.
\end{equation}
For two sets of features with Gaussian moments
$(\mu_1,\Sigma_1)$ and $(\mu_2,\Sigma_2)$,
the Fr\'echet distance is
\begin{equation}
d_F
=
\|\mu_1-\mu_2\|_2^2
+
\operatorname{tr}
\left(
\Sigma_1+\Sigma_2
-
2
(\Sigma_1^{1/2}
\Sigma_2
\Sigma_1^{1/2})^{1/2}
\right).
\end{equation}
The full feature covariance is not explicitly constructed. The Bures cross term is evaluated through the algebraically equivalent sample-space nuclear norm, allowing the exact full-coordinate distance to be computed without reducing feature dimension.

Distances are computed separately by class and macro-averaged. Real-training to held-out-real distance is retained as the real--real reference. Relative Fr\'echet is computed by dividing the generated-to-held-out-real distance by the corresponding real-training-to-held-out-real reference within each class before averaging.

\subsection{Classification accuracy score}

Class-conditional utility is assessed using a train-synthetic-test-real protocol \citep{ravuri2019cas,esteban2017rcgan}.

Each GVD trajectory is encoded as the complete flattened log-\texttt{svec} sequence. A ridge of $10^{-6}I$ is added before the logarithmic chart for the classifier representation. No dimensionality reduction is applied.

The shared CAS classifier is a two-layer MLP. Each hidden layer has width 256 and consists of
\begin{equation}
\text{Linear}
\rightarrow
\text{BatchNorm}
\rightarrow
\text{GELU}
\rightarrow
\text{Dropout}(0.1).
\end{equation}
The classifier uses AdamW with learning rate $10^{-3}$ and weight decay $10^{-4}$. Training examples are class-balanced by weighted sampling and weighted cross-entropy. The maximum batch size is 2048.

Training runs for at most 120 epochs. The state with the lowest training loss is retained and optimization stops after 15 epochs without improvement.

For each generator, the classifier is trained using synthetic trajectories and tested only on the held-out real partition. ROC-AUC and weighted F1 are reported. A classifier trained on real training trajectories and evaluated on the same held-out test set provides the real-data reference.

\subsection{Novelty and memorization analysis}

The novelty analysis is performed in the complete standardized flattened
log-\texttt{svec} trajectory space. Let
\[
x_i \in \mathbb{R}^{Bm}
\]
denote the standardized flattened feature vector of a real training trajectory
and let
\[
g_j \in \mathbb{R}^{Bm}
\]
denote the corresponding representation of a generated trajectory, where
$m=d(d+1)/2$ is the number of log-\texttt{svec} coordinates per window.
All nearest-neighbor calculations described below are performed
class-conditionally.

Exact copying is tested by rounding complete unstandardized log-\texttt{svec}
feature vectors to six decimal places and comparing their hashes with those
of the real training set.

Nearest-neighbor diagnostics include real leave-one-out distances,
generated-to-training distances, generated leave-one-out distances, and
held-out-real-to-training distances. A generated sample is marked as a
near copy when its distance to the closest same-class real training sample is
below the $5$th percentile of the corresponding real leave-one-out
nearest-neighbor distribution.

\paragraph{Manifold precision.}
Following the $k$-nearest-neighbor support construction used in the
precision--recall family of generative-model metrics, we use $k=5$.
For each real training sample $x_i$, let
\[
\rho_i^{(r)}
=
d_k\!\left(
x_i,\,
\{x_\ell:\ell\neq i,\;y_\ell=y_i\}
\right)
\]
be the Euclidean distance to its $k$th nearest same-class real training
neighbor.

A generated sample $g$ is counted as lying on the estimated real-data
manifold when it falls inside at least one same-class real support ball:
\[
\mathbb{I}_{\mathrm{prec}}(g)
=
\mathbb{I}
\left[
\exists\, i:
y_i=y_g,
\;
\|g-x_i\|_2
\leq
\rho_i^{(r)}
\right].
\]
The reported manifold precision is
\[
\mathrm{Precision}
=
\frac{1}{N_g}
\sum_{j=1}^{N_g}
\mathbb{I}_{\mathrm{prec}}(g_j).
\]
Higher values indicate that a larger fraction of generated trajectories lies
inside the empirical support of the real training distribution:
\[
\mathrm{Precision}\uparrow.
\]

\paragraph{Diversity ratio.}
Diversity is measured from leave-one-out nearest-neighbor spacing.
For every real training sample,
\[
d_i^{(r)}
=
\min_{\ell\neq i,\;y_\ell=y_i}
\|x_i-x_\ell\|_2,
\]
and for every generated trajectory,
\[
d_j^{(g)}
=
\min_{\ell\neq j,\;y_\ell=y_j}
\|g_j-g_\ell\|_2.
\]
The diversity ratio is
\[
R_{\mathrm{div}}
=
\frac{
\operatorname{median}_j d_j^{(g)}
}{
\operatorname{median}_i d_i^{(r)}
}.
\]
The ideal value is therefore one:
\[
\mathrm{Diversity}\rightarrow 1.
\]
Values below one indicate that generated trajectories are more tightly
clustered than the real training trajectories, whereas values above one
indicate that generated samples are more dispersed.

\paragraph{Training coverage.}
For each generated trajectory, let
\[
n(g_j)
=
\arg\min_{i:\,y_i=y_j}
\|g_j-x_i\|_2
\]
denote its nearest same-class real training trajectory. Training coverage is
defined as the fraction of real training trajectories that are selected as
the nearest neighbor of at least one generated trajectory:
\[
\mathrm{Coverage}
=
\frac{
\left|
\left\{
n(g_j):
j=1,\ldots,N_g
\right\}
\right|
}{
N_{\mathrm{train}}
}.
\]
Coverage therefore measures how broadly the generator distributes samples
across the empirical training set rather than repeatedly concentrating around
a small subset:
\[
\mathrm{Coverage}\uparrow.
\]

\subsection{Temporal-coherence analysis}

Temporal coherence is measured in log-\texttt{svec} coordinates after
removing the temporal mean of every feature within each trajectory.
For trajectory $i$, let
\[
z_{i,b}\in\mathbb{R}^{m},
\qquad
b=1,\ldots,B,
\]
denote its log-\texttt{svec} coordinates and define
\[
\bar z_i
=
\frac{1}{B}
\sum_{b=1}^{B} z_{i,b},
\qquad
\widetilde z_{i,b}
=
z_{i,b}-\bar z_i.
\]
Removing the within-trial temporal mean prevents the stable connectivity
level of a trial from dominating the dynamic comparison.

For real and generated trajectories, population time-by-time correlation
matrices are computed from the centred trajectories. Their agreement is
reported as the Pearson correlation between corresponding upper-triangular
entries and as their mean absolute error.

Temporal autocorrelation is additionally evaluated at positive lags up to
$16$ windows. The real and generated lag-autocorrelation curves are compared
using Pearson correlation and mean absolute error.

\subsection{Dynamic-signal measures}

Two permutation-based diagnostics test whether generated trajectories contain
nontrivial temporal dynamics.

For a trajectory
\[
Z_i
=
[z_{i,1},\ldots,z_{i,B}],
\qquad
z_{i,b}\in\mathbb{R}^{m},
\]
define its temporal mean
\[
\bar z_i
=
\frac{1}{B}
\sum_{b=1}^{B} z_{i,b}.
\]

\paragraph{Dynamic energy.}
The dynamic energy of trajectory $i$ is
\[
E_{\mathrm{dyn}}(Z_i)
=
\frac{1}{Bm}
\sum_{b=1}^{B}
\left\|
z_{i,b}-\bar z_i
\right\|_2^2.
\]
This measures the absolute amount of within-trajectory temporal variation
after removing the stable temporal mean.

The table reports the generated-to-real median ratio
\[
R_E
=
\frac{
\operatorname{median}_{i\in\mathrm{gen}}
E_{\mathrm{dyn}}(Z_i)
}{
\operatorname{median}_{i\in\mathrm{real}}
E_{\mathrm{dyn}}(Z_i)
}.
\]
Consequently, the ideal value is
\[
\mathrm{Energy}\rightarrow 1.
\]
Values below one indicate insufficient temporal variation, whereas values
above one indicate excessive temporal variation relative to the real
trajectories.

\paragraph{Dynamic fraction.}
The total log-\texttt{svec} energy of trajectory $i$ is
\[
E_{\mathrm{tot}}(Z_i)
=
\frac{1}{Bm}
\sum_{b=1}^{B}
\|z_{i,b}\|_2^2.
\]
The fraction of trajectory energy attributable to temporal variation is
\[
F_{\mathrm{dyn}}(Z_i)
=
\frac{
E_{\mathrm{dyn}}(Z_i)
}{
\max(E_{\mathrm{tot}}(Z_i),\varepsilon)
},
\]
where $\varepsilon$ is a numerical safeguard.

The reported dynamic-fraction ratio is
\[
R_F
=
\frac{
\operatorname{median}_{i\in\mathrm{gen}}
F_{\mathrm{dyn}}(Z_i)
}{
\operatorname{median}_{i\in\mathrm{real}}
F_{\mathrm{dyn}}(Z_i)
}.
\]
Again, the ideal value is
\[
\mathrm{Dyn.\ fraction}\rightarrow 1.
\]
This quantity differs from dynamic energy because it normalizes temporal
variation by the overall magnitude of the trajectory.

\paragraph{Adjacent-window step energy.}
Temporal smoothness is quantified by the mean squared displacement between
successive windows:
\[
E_{\mathrm{adj}}(Z_i)
=
\frac{1}{(B-1)m}
\sum_{b=1}^{B-1}
\left\|
z_{i,b+1}-z_{i,b}
\right\|_2^2.
\]
The complementary diagnostic table reports the generated-to-real ratio
\[
R_{\mathrm{adj}}
=
\frac{
\operatorname{median}_{i\in\mathrm{gen}}
E_{\mathrm{adj}}(Z_i)
}{
\operatorname{median}_{i\in\mathrm{real}}
E_{\mathrm{adj}}(Z_i)
},
\]
so the target value is
\[
\mathrm{Adjacent}\rightarrow 1.
\]
Values below one correspond to trajectories that are smoother than the real
data, while values above one indicate excessive frame-to-frame variation.

\subsection{Efficiency measurements}

Training time and generation time are measured separately using synchronized CUDA wall-clock timing.

Generation efficiency is measured using a batch of exactly 32 model-native samples. One warm-up call is performed before timing, and the reported latency is based on repeated synchronized measurements.

For direct GVD-space models, the measured generation latency already produces a GVD trajectory.

For raw-EEG generators, raw EEG generation and the subsequent EEG-to-GVD transformation are timed separately. The latter contains:
\begin{enumerate}
    \item per-trial EEG standardization;
    \item computation of the generated trial support $W$;
    \item computation of all window covariances $J_b$;
    \item construction of $\Delta_b=W\odot J_b$.
\end{enumerate}

The benchmark therefore reports both native raw-EEG generation latency and end-to-end latency required to obtain a GVD trajectory.

Peak allocated and reserved CUDA memory are recorded during training and sampling.

For the full GVD-CFM, 1000 training epochs take $304$~s on average across the five main datasets and three seeds (from $128$~s on Zhou2016 to $470$~s on Shin2017A), generating a batch of 32 trajectories with 50 RK4 steps takes $3.92$~s, and peak allocated training memory is $1.77$~GB on average.

\subsection{Implementation}

The benchmark is implemented in PyTorch \citep{paszke2019pytorch} and requires a CUDA-capable GPU. The reference runner was optimized for an NVIDIA A100.

Automatic mixed precision is enabled. BFloat16 is used when supported by the GPU and FP16 is used otherwise. TF32 matrix multiplication and cuDNN benchmarking are enabled. GVD log-Euclidean matrix logarithms and exponentials use GPU eigendecomposition \citep{golub2013matrix} where available, while the SPD calculations themselves are retained in double precision where required for numerical stability.

Fused AdamW \citep{loshchilov2019adamw} is used when supported. The GVD-CFM and compatible baseline networks are optionally compiled using \texttt{torch.compile} with the \texttt{reduce-overhead} mode. Training arrays are kept GPU-resident where practical.

External baselines are loaded from their public source repositories. The benchmark uses the official JET repository, the official Song--Meng--Ermon DDIM release, and the AutoResearch EEG-GAN repository. Compatibility changes required by modern Python or PyTorch are restricted to syntax, datatype, and execution issues; model architectures and reported training objectives are not intentionally altered.

\subsection{Inverse of the composite chart}
\label{app:chart_inverse}

With $\Psi$ as defined in Section~\ref{sec:chart}, its inverse is
\begin{align}
    \widetilde{Z}
    &=
    C_B^\top\bar{Z},\\
    Z
    &=
    \widetilde{Z}\odot\mathbf{1}\sigma^\top+\mathbf{1}\mu^\top,\\
    \overline{\Delta}_b
    &=
    \exp\left(
        \operatorname{svec}^{-1}(z_b)
    \right).
\end{align}

\subsection{Velocity network and training path}
\label{app:velocity_network}

\begin{figure}[t]
\centering
\resizebox{0.9\linewidth}{!}{%
\begin{tikzpicture}[font=\small,>=Latex,
  box/.style={draw,rounded corners=2pt,align=center,minimum height=0.95cm,inner sep=3pt,fill=white},
  arr/.style={->,thick}]
\node[box,minimum width=1.9cm] (z) at (0,0) {$z_\tau$\\[-1pt]{\scriptsize DCT flow state}};
\node[box,minimum width=5.6cm] (spec) at (5.0,1.1) {Spectral Transformer: mode embedding $+$ 6 AdaLN blocks\\[-1pt]{\scriptsize attention across all $B$ DCT tokens $\rightarrow h_D$}};
\node[box,minimum width=1.0cm] (idct) at (2.5,-1.1) {$C_B^\top$};
\node[box,minimum width=2.6cm] (time) at (5.0,-1.1) {Temporal branch\\[-1pt]{\scriptsize window embedding}\\[-2pt]{\scriptsize $+$ 2 AdaLN blocks}};
\node[box,minimum width=1.0cm] (dct) at (7.5,-1.1) {$C_B$};
\node[box,minimum width=2.3cm] (fuse) at (9.6,0) {Gated fusion\\[-1pt]{\scriptsize $h_D+g\odot h_T$}};
\node[box,minimum width=2.0cm] (out) at (12.25,0) {$v_\theta(z_\tau,\tau,y)$\\[-1pt]{\scriptsize DCT velocity}};
\draw[arr] (z.east) -- ++(0.4,0) |- (spec.west);
\draw[arr] (z.east) -- ++(0.4,0) |- (idct.west);
\draw[arr] (idct) -- (time);
\draw[arr] (time) -- (dct);
\draw[arr] (spec.east) -| (fuse.north);
\draw[arr] (dct.east) -| (fuse.south);
\draw[arr] (fuse) -- (out);
\node[font=\scriptsize] at (2.5,-1.95) {$z_\tau^{\mathrm{time}}$};
\node[font=\scriptsize] at (7.5,-1.95) {$h_T$};
\end{tikzpicture}}
\caption{Velocity network of GVD-CFM. Only the DCT state $z_\tau$ is integrated. The spectral Transformer attends over all DCT modes. The temporal branch reads $z_\tau^{\mathrm{time}}=C_B^\top z_\tau$, and its output is returned to DCT alignment by $C_B$ before gated fusion. Both are conditioned on $c_\tau$ through adaptive LayerNorm.}
\label{fig:dualview}
\end{figure}

Each DCT token of Section~\ref{sec:transformer} enters the spectral Transformer
with initial hidden representation
\begin{equation}
    h_k^{(0)}
    =
    P_{\mathrm{in}}\bar{z}_k+e_k^{\mathrm{mode}},
\end{equation}
where $P_{\mathrm{in}}$ is a learned projection and
$e_k^{\mathrm{mode}}$ identifies the DCT mode. Each window of the time view
$z^{\mathrm{time}}=C_B^\top\bar{Z}$ enters the temporal branch as
\begin{equation}
    r_b^{(0)}
    =
    P^{\mathrm{time}}_{\mathrm{in}}z^{\mathrm{time}}_b+e_b^{\mathrm{win}},
\end{equation}
where $e_b^{\mathrm{win}}$ identifies the window position. The model is
class-conditional, with conditioning vector
\begin{equation}
    c_\tau
    =
    e_{\mathrm{flow}}(\tau)
    +
    e_{\mathrm{class}}(y),
\end{equation}
where $\tau\in[0,1]$ is flow time. Flow time is encoded using Fourier
features followed by an MLP \citep{tancik2020fourier}.

Writing $S_\theta$ for the spectral Transformer and $T_\theta$ for the
temporal branch, both conditioned on $c_\tau$, a single forward pass computes
(Figure~\ref{fig:dualview})
\begin{equation}
    h_D=S_\theta(\bar{Z},c_\tau),
    \qquad
    h_T=C_B\,T_\theta\!\left(C_B^\top\bar{Z},c_\tau\right),
\end{equation}
\begin{equation}
    v_\theta(\bar{Z},\tau,y)
    =
    P_{\mathrm{out}}
    \left(
        h_D+\sigma\!\left(W_g[h_D;h_T]+b_g\right)\odot h_T
    \right),
\end{equation}
which is the joint velocity
\begin{equation}
    v_\theta:
    \mathbb{R}^{B\times m}
    \times[0,1]
    \times\mathcal{Y}
    \longrightarrow
    \mathbb{R}^{B\times m}
\end{equation}
of the complete trajectory.

During training of Section~\ref{sec:training}, we sample
\begin{equation}
    z_0\sim\mathcal{N}(0,I),
    \qquad
    z_1\sim\Psi_{\#}q(\cdot\mid y),
    \qquad
    \tau\sim\mathcal{U}[0,1].
\end{equation}
The pair $(z_0,z_1)$ is then re-drawn from the classwise minibatch Sinkhorn
coupling $\pi_y$. We use the linear conditional path
\begin{equation}
    z_\tau
    =
    (1-\tau)z_0+\tau z_1,
    \qquad
    u_\tau=z_1-z_0.
\end{equation}

To generate a trajectory from class $y$ we sample $z(0)\sim\mathcal{N}(0,I)$
and solve
\begin{equation}
    \frac{\mathrm{d}z(\tau)}{\mathrm{d}\tau}
    =
    v_{\theta^\star}(z(\tau),\tau,y),
    \qquad
    \tau\in[0,1].
\end{equation}

\subsection{Training and sampling algorithms}

\begin{algorithm}[H]
\caption{GVD-CFM training}
\label{alg:gvd_cfm_training}
\begin{algorithmic}[1]
\REQUIRE Label distribution $\pi_{\mathcal{Y}}$, training distribution
$q$, and diffeomorphism $\Psi$
\ENSURE Trained parameters $\theta^\star$
\STATE Initialize $\theta$
\WHILE{not converged}
    \STATE Sample a class-weighted minibatch
    $\{(\overline{\boldsymbol{\Delta}}^{(i)},y^{(i)})\}_{i=1}^{N}$ from $q$
    \STATE Set
    $z_1^{(i)}\gets\Psi(\overline{\boldsymbol{\Delta}}^{(i)})$
    \STATE Sample $z_0^{(i)}\sim\mathcal{N}(0,I)$
    \FOR{each class $y$ in the minibatch}
        \STATE Re-pair $\{z_0^{(i)}\}$ with
        $\{z_1^{(i)}:y^{(i)}=y\}$ using the Sinkhorn plan $\pi_y$
    \ENDFOR
    \STATE Sample $\tau^{(i)}\sim\mathcal{U}[0,1]$
    \STATE Set $z_\tau^{(i)}\gets(1-\tau^{(i)})z_0^{(i)}+\tau^{(i)} z_1^{(i)}$
    \STATE Set $u_\tau^{(i)}\gets z_1^{(i)}-z_0^{(i)}$
    \STATE Evaluate $v_\theta(z_\tau^{(i)},\tau^{(i)},y^{(i)})$
    \STATE Compute
    \[
    \mathcal{L}\gets
    \frac{1}{NBm}
    \sum_{i=1}^{N}
    \left\|
        v_\theta(z_\tau^{(i)},\tau^{(i)},y^{(i)})-u_\tau^{(i)}
    \right\|_F^2
    \]
    \STATE Update
    $\theta\gets
    \operatorname{OptimizerStep}
    (\theta,\nabla_\theta\mathcal{L})$
\ENDWHILE
\STATE \textbf{return} $\theta^\star$
\end{algorithmic}
\end{algorithm}

\begin{algorithm}[H]
\caption{GVD-CFM sampling}
\label{alg:gvd_cfm_sampling}
\begin{algorithmic}[1]
\REQUIRE Class label $y$, parameters $\theta^\star$, integration
steps $L$, and diffeomorphism $\Psi$
\ENSURE Generated trajectory
$\widehat{\boldsymbol{\Delta}}$
\STATE Set $h\gets1/L$
\STATE Sample $z_0\sim\mathcal{N}(0,I)$
\FOR{$\ell=0,\ldots,L-1$}
    \STATE Set $\tau_\ell\gets\ell h$
    \STATE Set
    $z_{\ell+1}\gets
    \operatorname{RK4Step}
    (v_{\theta^\star},z_\ell,\tau_\ell,y,h)$
\ENDFOR
\STATE Set
$\widehat{\boldsymbol{\Delta}}\gets\Psi^{-1}(z_L)$
\STATE \textbf{return}
$\widehat{\boldsymbol{\Delta}}$
\end{algorithmic}
\end{algorithm}

\section{Extended Proofs}
\label{app:extended_proofs}

\paragraph{Geometric overview.}
The log-Euclidean map $\phi:\mathbb{S}_{++}^{d}\rightarrow\mathbb{R}^{m}$, $\phi(X)=\operatorname{svec}(\log X)$, is a global coordinate system on the SPD manifold. The matrix logarithm is a diffeomorphism from $\mathbb{S}_{++}^{d}$ onto $\mathbb{S}^{d}$ and $\operatorname{svec}$ is a linear isometry, so $\phi$ is a global diffeomorphism. The log-Euclidean metric is the pullback of the Euclidean inner product through $\phi$,
\[
    g_X(\xi,\eta)=\bigl\langle D\phi_X[\xi],D\phi_X[\eta]\bigr\rangle_{\mathbb{R}^{m}},
    \qquad \xi,\eta\in T_X\mathbb{S}_{++}^{d},
\]
so $\phi$ is an isometry by construction, and the straight line $z_\tau=(1-\tau)\phi(X_0)+\tau\phi(X_1)$ maps through $\phi^{-1}$ to the log-Euclidean geodesic between $X_0$ and $X_1$. The construction requires $X\succ0$: a positive-semidefinite matrix with a zero eigenvalue has no finite logarithm and lies outside the domain of $\phi$.

For a trajectory of $B$ matrices the state space is the product manifold $\mathcal{M}_B=(\mathbb{S}_{++}^{d})^{B}$, with tangent space $\prod_{b=1}^{B}T_{X_b}\mathbb{S}_{++}^{d}$ and product metric $g^{\mathrm{prod}}(\xi,\eta)=\sum_{b=1}^{B}g_{X_b}(\xi_b,\eta_b)$. Applying $\phi$ to every factor gives the trajectory chart $\Phi(X_{1:B})=(\phi(X_1),\ldots,\phi(X_B))\in\mathbb{R}^{B\times m}$, which is a global diffeomorphism with $g^{\mathrm{prod}}(\xi,\eta)=\langle D\Phi[\xi],D\Phi[\eta]\rangle_F$. If a single window $X_b$ is singular, the corresponding factor of $\Phi$ is undefined and the trajectory lies outside the domain of the chart. Strict positive definiteness (Proposition~\ref{prop:gvd_spd}) must therefore be established before this geometry can be used.

The chart, including the DCT and the standardization, is a global diffeomorphism and an isometry for its pullback metric. In addition, the Euclidean conditional flow-matching loss in these coordinates equals the Riemannian loss on $\mathcal{M}_B$, because the manifold norm of a velocity residual equals the Frobenius norm of its pushforward. Finally, integrating the flow in Euclidean coordinates and decoding by $\Phi^{-1}$ gives the manifold flow, and every decoded window is SPD because the matrix exponential of a symmetric matrix is SPD; the same holds at every step of an explicit Runge--Kutta integrator.

\subsection{Graph-variate signal analysis}
\label{app:gvsa}

GVSA \citep{smith2019gvsa} defines the modulated connectivity of
Section~\ref{sec:gvd} entrywise as
\begin{equation}
\theta_{ij}(t)=
\begin{cases}
W_{ij}F_{\mathcal{V}}\!\left(x_i(t),x_j(t)\right),
    & i\neq j,\\
0,  & i=j,
\end{cases}
\end{equation}
where $W_{ij}$ describes the stable, long-term relationship between nodes $i$
and $j$, while $F_{\mathcal{V}}$ measures their instantaneous connectivity.
For correlation-based GVSA,
\begin{equation}
F_{\mathcal{V}}\!\left(x_i(t),x_j(t)\right)
=
\left|
\left(x_i(t)-\bar{x}_i\right)
\left(x_j(t)-\bar{x}_j\right)
\right|,
\end{equation}
where $\bar{x}_i$ is the temporal mean of node $i$. Substituting this into
the entrywise definition yields the sample-resolution connectivity matrix of
Section~\ref{sec:gvd}.

Our implementation makes two deliberate modifications to this classical
formulation in order to preserve the geometry required by the generative
model. First, although the original correlation-based definition uses the
absolute value of the instantaneous product, we retain the sign of both the
long-term support and the instantaneous interaction. Let
\begin{equation}
u_i(t)=x_i(t)-\bar{x}_i,
\end{equation}
and write
\begin{equation}
J_t = u_tu_t^\top,
\qquad
\Delta_t = W\odot J_t.
\end{equation}
Thus, rather than replacing the instantaneous interaction by
$|u_i(t)u_j(t)|$, we use the signed outer product. When the support is written
as a correlation matrix $W$ and $D_t=\operatorname{diag}(u_t)$, this gives
\begin{equation}
\Delta_t
=
W\odot u_tu_t^\top
=
D_tWD_t.
\end{equation}
This signed construction preserves the congruence structure of the support
matrix and is therefore compatible with positive-semidefinite, and in the
experimentally used binned case positive-definite, GVD trajectories. In
contrast, taking entrywise absolute values destroys this exact congruence
relationship and is not required for the downstream geometric construction.

Second, the original GVSA definition sets the diagonal entries to zero. We
retain the diagonal during GVD construction because it contains the
instantaneous node-energy terms and is necessary for treating each
connectivity state as a full symmetric positive-definite matrix. After
temporal binning, we therefore use
\begin{equation}
\Delta_b
=
\frac{1}{|\mathcal I_b|}
\sum_{t\in\mathcal I_b}
W\odot u_tu_t^\top,
\end{equation}
with the diagonal left intact. All GVD matrices used in our experiments are
explicitly verified to be positive definite before entering the
log-Euclidean representation; no ridge or nearest-SPD projection is applied
to the canonical GVD construction. If a zero-diagonal graph representation
is desired for visualization or conventional network analysis, the diagonal
can be removed post hoc without altering the matrices used for geometric
learning.

\subsection{Proof of Proposition~\ref{prop:gvd_spd} for single samples}
\label{app:proof_gvd_spd}

\begin{figure}[t]
    \centering
    \includegraphics[width=\columnwidth]{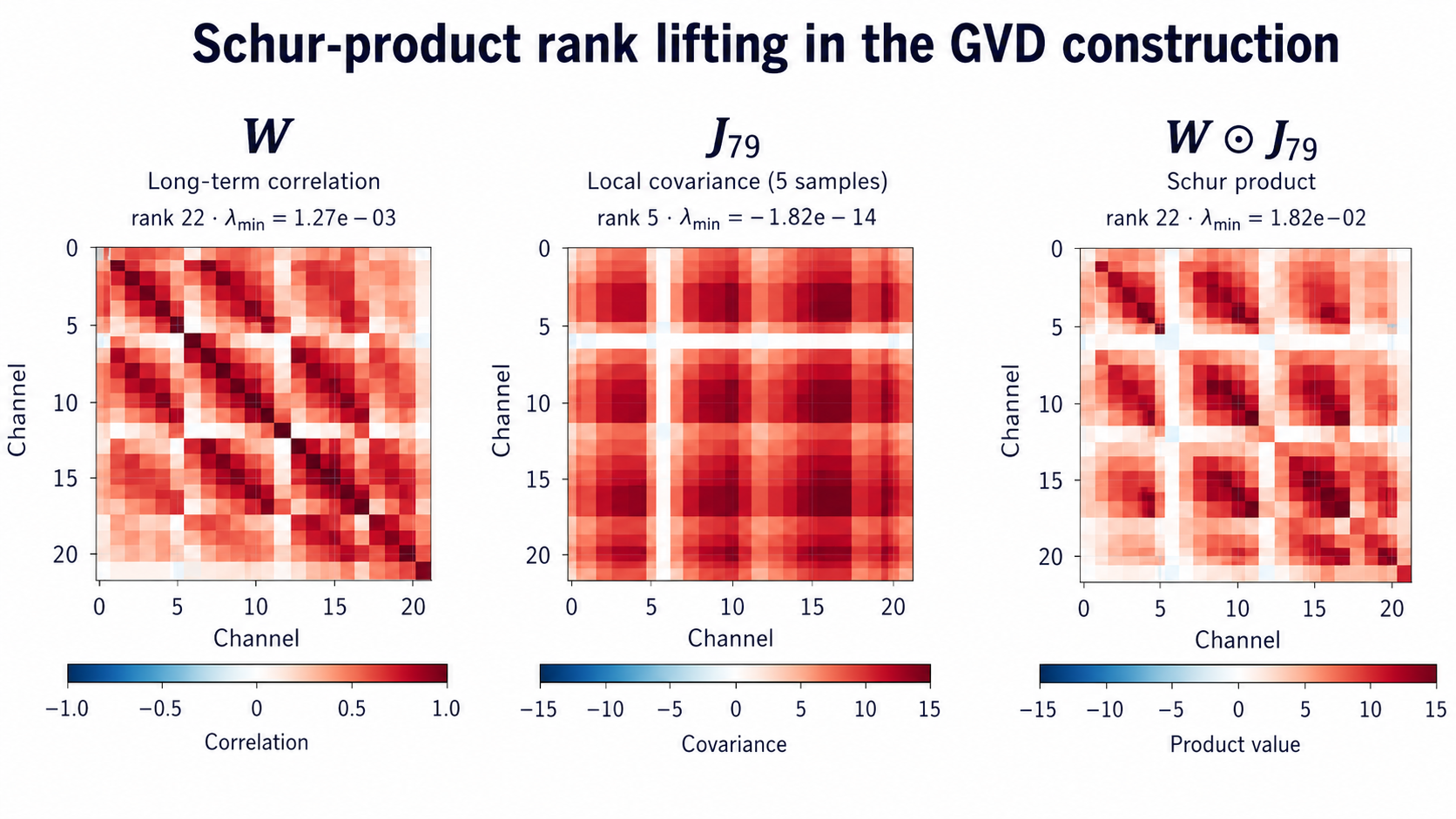}
    \caption{Schur-product rank lifting induced by the long-term support.
    The instantaneous covariance $J_b$ is positive semidefinite but
    rank-deficient because each temporal window contains fewer samples than
    EEG channels. The long-term support $W$ is positive definite, and the
    Hadamard composition $W \odot J_b$ preserves positive semidefiniteness
    while lifting the example to full rank. No ridge regularization is added
    to $J_b$ or $W \odot J_b$.}
    \label{fig:rank_lifting}
\end{figure}

\begin{proof}
Since $J_t=u_tu_t^\top\succeq0$ and $W\succ0$, the Schur product
theorem \citep{horn2012matrix,schur1911bemerkungen} guarantees that
\begin{equation}
    \Delta_t=W\odot J_t\succeq0.
\end{equation}
To establish strict positive definiteness, let
$D_t=\operatorname{diag}(u_t)$. Entrywise,
\begin{equation}
    [W\odot(u_tu_t^\top)]_{ij}
    =
    W_{ij}u_{i,t}u_{j,t},
\end{equation}
and therefore
\begin{equation}
    \Delta_t=D_tWD_t.
\end{equation}
If every component of $u_t$ is nonzero, then $D_t$ is invertible.
Hence, for every nonzero $v\in\mathbb{R}^{d}$,
\begin{equation}
    v^\top\Delta_t v
    =
    (D_tv)^\top W(D_tv)
    >0,
\end{equation}
because $D_tv\neq0$ and $W\succ0$. Thus,
$\Delta_t\in\mathbb{S}_{++}^{d}$. Moreover, congruence by an
invertible matrix preserves rank, giving
\begin{equation}
    \operatorname{rank}(\Delta_t)
    =
    \operatorname{rank}(D_tWD_t)
    =
    \operatorname{rank}(W)
    =
    d.
\end{equation}
Therefore every $\Delta_t$ lies on the SPD manifold.
\end{proof}

If some component of $u_t$ is exactly zero, $D_t$ is singular and
$\Delta_t$ is positive semidefinite rather than positive definite. We do not encounter this in practice however.


\subsection{Further results}

\begin{proposition}[Equivalence of DCT-coordinate flow matching and product log-Euclidean Riemannian flow matching]
\label{prop:dct_le_rfm_equiv}

Let
\[
\mathcal M_B
=
\left(\mathbb S_{++}^{d}\right)^B,
\qquad
m=\frac{d(d+1)}{2},
\]
and define the product log-Euclidean coordinate map
\[
\Phi(\Delta_{1:B})
=
\begin{bmatrix}
\operatorname{svec}(\log \Delta_1)^\top\\
\vdots\\
\operatorname{svec}(\log \Delta_B)^\top
\end{bmatrix}
\in \mathbb R^{B\times m},
\]
where $\operatorname{svec}$ is chosen to preserve the Frobenius inner
product on symmetric matrices. Let
\[
\mathcal C(Z)=C_B Z,
\qquad
C_B^\top C_B=I_B,
\]
be the orthonormal temporal DCT, and define the composite map
\[
\Psi
=
\mathcal C\circ\Phi.
\]
Equip $\mathcal M_B$ with the product log-Euclidean metric
\[
g_{\mathrm{LE}}^{\mathrm{prod}}.
\]
Then $\Psi$ is a global Riemannian isometry from
\[
(\mathcal M_B,g_{\mathrm{LE}}^{\mathrm{prod}})
\]
to Euclidean space
\[
(\mathbb R^{B\times m},\langle\cdot,\cdot\rangle_F).
\]
Consequently, for any endpoints
$\Delta^0,\Delta^1\in\mathcal M_B$, with
\[
Z_i=\Psi(\Delta^i),\qquad i\in\{0,1\},
\]
the Euclidean conditional path
\[
Z_\tau=(1-\tau)Z_0+\tau Z_1
\]
is the $\Psi$-image of the product log-Euclidean geodesic
\[
\Gamma_\tau
=
\Psi^{-1}(Z_\tau).
\]
Moreover, if
\[
U^E=Z_1-Z_0
\]
denotes the Euclidean conditional velocity and
\[
U_\tau^{\mathcal M}
=
D\Psi^{-1}_{Z_\tau}[U^E]
\]
the corresponding manifold tangent velocity, then for any manifold
vector field $V_\theta$ whose coordinate representation is
\[
v_\theta(Z,\tau)
=
D\Psi_{\Psi^{-1}(Z)}
\left[
V_\theta(\Psi^{-1}(Z),\tau)
\right],
\]
we have the pointwise identity
\[
\boxed{
\left\|
V_\theta(\Gamma_\tau,\tau)
-
U_\tau^{\mathcal M}
\right\|_{g_{\mathrm{LE}}^{\mathrm{prod}}}^{2}
=
\left\|
v_\theta(Z_\tau,\tau)
-
(Z_1-Z_0)
\right\|_F^{2}.
}
\]
Hence the Euclidean $L_2$ conditional flow-matching objective in
full-mode DCT coordinates is exactly equal to the corresponding
Riemannian flow-matching objective on the product SPD manifold:
\[
\boxed{
\mathcal L_{\mathrm{CFM}}
=
\mathcal L_{\mathrm{RFM}}.
}
\]
\end{proposition}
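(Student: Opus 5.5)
The plan is to build the argument in three layers: first the isometry property of $\Psi=\mathcal C\circ\Phi$, then the geodesic statement, and finally the pointwise loss identity, which integrates to $\mathcal L_{\mathrm{CFM}}=\mathcal L_{\mathrm{RFM}}$.

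\textbf{Isometry.} I start from the geometric overview, which already gives the two facts I need. First, $\phi(X)=\operatorname{svec}(\log X)$ is a global diffeomorphism $\mathbb S^d_{++}\to\mathbb R^m$. Second, the log-Euclidean metric is by definition the pullback $g_X(\xi,\eta)=\langle D\phi_X[\xi],D\phi_X[\eta]\rangle$. Taking products, $\Phi$ is a global diffeomorphism $\mathcal M_B\to\mathbb R^{B\times m}$ and satisfies
\[
g^{\mathrm{prod}}_{\mathrm{LE}}(\xi,\eta)=\langle D\Phi[\xi],D\Phi[\eta]\rangle_F .
\]
The DCT map $\mathcal C$ is linear, so $D\mathcal C=\mathcal C$. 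It is also orthogonal on $\mathbb R^{B\times m}$, because
\[
\langle C_BA,C_BA'\rangle_F=\operatorname{tr}(A^\top C_B^\top C_BA')=\langle A,A'\rangle_F .
\]
By the chain rule $D\Psi=\mathcal C\circ D\Phi$, and therefore $\langle D\Psi[\xi],D\Psi[\eta]\rangle_F=g^{\mathrm{prod}}_{\mathrm{LE}}(\xi,\eta)$. For bijectivity, $\Psi$ is a composition of bijections whose inverse is $\Phi^{-1}\circ\mathcal C^\top$, and this inverse is smooth because the matrix exponential is smooth. So $\Psi$ is a global Riemannian isometry onto $(\mathbb R^{B\times m},\langle\cdot,\cdot\rangle_F)$.

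\textbf{Geodesics.} Isometries map geodesics to geodesics. In Euclidean space the unique geodesic joining $Z_0$ and $Z_1$ is the straight segment $Z_\tau$. Hence $\Gamma_\tau=\Psi^{-1}(Z_\tau)$ is a geodesic of $g^{\mathrm{prod}}_{\mathrm{LE}}$ with endpoints $\Delta^0,\Delta^1$. It is also the unique one, since the manifold is globally isometric to a Euclidean space. Differentiating $\Psi(\Gamma_\tau)=Z_\tau$ gives $D\Psi_{\Gamma_\tau}[\dot\Gamma_\tau]=Z_1-Z_0$, so $\dot\Gamma_\tau=U^{\mathcal M}_\tau$. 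This identifies the Riemannian conditional velocity with the pulled-back Euclidean one.

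\textbf{Loss identity.} At the point $\Gamma_\tau$, I apply the linear map $D\Psi_{\Gamma_\tau}$ to the residual $V_\theta(\Gamma_\tau,\tau)-U^{\mathcal M}_\tau$. By the definition of $v_\theta$ and of $U^{\mathcal M}_\tau$ (using $D\Psi\circ D\Psi^{-1}=\mathrm{id}$), the image is $v_\theta(Z_\tau,\tau)-(Z_1-Z_0)$. The isometry then gives equality of the squared norms pointwise. Taking expectations over $\tau$ and over the coupling of endpoints gives $\mathcal L_{\mathrm{CFM}}=\mathcal L_{\mathrm{RFM}}$. For this step the Riemannian coupling must be the pushforward of the Euclidean one under $\Psi^{-1}$, and I will state this explicitly. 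Proposition~\ref{prop:pullback} covers the same conclusion abstractly, so an alternative is simply to cite it with $\varphi=\Psi$ once the metric is identified.

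\textbf{Main obstacle.} I expect the main obstacle to be bookkeeping rather than difficulty: checking that $\operatorname{svec}$ with the $\sqrt2$ scaling makes $\phi$ match the Frobenius-based log-Euclidean metric, and that applying $C_B$ along the temporal axis really is a Frobenius isometry on $B\times m$ matrices. The proposition excludes standardization; if it were included, the same argument would go through with $g_\sigma$ as in Remark~\ref{rem:standardization}.
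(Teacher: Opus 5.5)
Your proposal is correct and follows essentially the same route as the paper: you prove the isometry of $\Psi$ via the chain rule and the orthogonality of $C_B$, obtain the geodesics as $\Psi^{-1}$-images of straight lines, and derive the pointwise residual identity by applying the linear map $D\Psi_{\Gamma_\tau}$ and the isometry before taking expectations. Your explicit requirement that the Riemannian endpoint coupling be the $\Psi^{-1}$-pushforward of the Euclidean one is a useful clarification, which the paper leaves implicit.
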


\begin{proof}
For each SPD factor, the matrix logarithm
\[
\log:\mathbb S_{++}^{d}\rightarrow\mathbb S^{d}
\]
is a global smooth diffeomorphism. Since $\operatorname{svec}$ is a
linear isomorphism from $\mathbb S^d$ to $\mathbb R^m$, the product map
\[
\Phi:\mathcal M_B\rightarrow\mathbb R^{B\times m}
\]
is also a global smooth diffeomorphism.

By definition of the log-Euclidean metric, the single-factor map
\[
\Delta\mapsto\operatorname{svec}(\log\Delta)
\]
is an isometry from $\mathbb S_{++}^{d}$ equipped with the
log-Euclidean metric to $\mathbb R^m$ equipped with its Euclidean
metric. Therefore, for tangent vectors
\[
\xi=(\xi_1,\ldots,\xi_B),
\qquad
\eta=(\eta_1,\ldots,\eta_B)
\]
at $\Delta=(\Delta_1,\ldots,\Delta_B)$,
\[
g_{\mathrm{LE},\Delta}^{\mathrm{prod}}(\xi,\eta)
=
\left\langle
D\Phi_\Delta[\xi],
D\Phi_\Delta[\eta]
\right\rangle_F.
\]
Now consider the temporal DCT map
\[
\mathcal C(Z)=C_BZ.
\]
Because $C_B$ is orthogonal,
\[
C_B^\top C_B=I_B,
\]
and therefore, for arbitrary
$X,Y\in\mathbb R^{B\times m}$,
\[
\langle C_BX,C_BY\rangle_F
=
\operatorname{tr}(X^\top C_B^\top C_BY)
=
\operatorname{tr}(X^\top Y)
=
\langle X,Y\rangle_F.
\]
Thus $\mathcal C$ is a Euclidean isometry.

Since
\[
\Psi=\mathcal C\circ\Phi,
\]
its differential satisfies
\[
D\Psi_\Delta
=
D\mathcal C_{\Phi(\Delta)}
\circ
D\Phi_\Delta.
\]
Hence
\begin{align}
\left\langle
D\Psi_\Delta[\xi],
D\Psi_\Delta[\eta]
\right\rangle_F
&=
\left\langle
C_BD\Phi_\Delta[\xi],
C_BD\Phi_\Delta[\eta]
\right\rangle_F\\
&=
\left\langle
D\Phi_\Delta[\xi],
D\Phi_\Delta[\eta]
\right\rangle_F\\
&=
g_{\mathrm{LE},\Delta}^{\mathrm{prod}}(\xi,\eta).
\end{align}
Therefore $\Psi$ is a global Riemannian isometry.

Now let
\[
Z_\tau=(1-\tau)Z_0+\tau Z_1.
\]
Euclidean straight lines are geodesics, and an isometry maps geodesics
to geodesics. Thus
\[
\Gamma_\tau
=
\Psi^{-1}(Z_\tau)
\]
is the corresponding product log-Euclidean geodesic on $\mathcal M_B$.

Differentiating
\[
\Gamma_\tau=\Psi^{-1}(Z_\tau)
\]
gives
\[
\dot\Gamma_\tau
=
D\Psi^{-1}_{Z_\tau}[\dot Z_\tau].
\]
Since
\[
\dot Z_\tau=Z_1-Z_0,
\]
we obtain
\[
U_\tau^{\mathcal M}
=
D\Psi^{-1}_{Z_\tau}[Z_1-Z_0].
\]
Because $\Psi$ is a diffeomorphism,
\[
D\Psi_{\Gamma_\tau}
\circ
D\Psi^{-1}_{Z_\tau}
=
\operatorname{Id},
\]
and hence
\[
D\Psi_{\Gamma_\tau}
[U_\tau^{\mathcal M}]
=
Z_1-Z_0.
\]
By definition of the coordinate representation of the model vector
field,
\[
D\Psi_{\Gamma_\tau}
[
V_\theta(\Gamma_\tau,\tau)
]
=
v_\theta(Z_\tau,\tau).
\]
Therefore, using linearity of the differential,
\begin{align}
&D\Psi_{\Gamma_\tau}
\left[
V_\theta(\Gamma_\tau,\tau)
-
U_\tau^{\mathcal M}
\right]
\\
&\qquad=
v_\theta(Z_\tau,\tau)
-
(Z_1-Z_0).
\end{align}
Since $\Psi$ is a Riemannian isometry,
\[
\|\xi\|_{g_{\mathrm{LE}}^{\mathrm{prod}}}^{2}
=
\|D\Psi[\xi]\|_F^{2}.
\]
Applying this to the velocity residual yields
\[
\left\|
V_\theta(\Gamma_\tau,\tau)
-
U_\tau^{\mathcal M}
\right\|_{g_{\mathrm{LE}}^{\mathrm{prod}}}^{2}
=
\left\|
v_\theta(Z_\tau,\tau)
-
(Z_1-Z_0)
\right\|_F^{2}.
\]
The equality holds pointwise for every endpoint pair and every
$\tau$. Taking expectations therefore gives
\[
\mathcal L_{\mathrm{RFM}}(\theta)
=
\mathcal L_{\mathrm{CFM}}(\theta).
\]
\end{proof}

\begin{corollary}[Standardized chart]
\label{cor:standardized_chart}
Let $S(Z)=(Z-\mathbf{1}\mu^\top)D_\sigma^{-1}$ with $D_\sigma=\operatorname{diag}(\sigma_1,\ldots,\sigma_m)$, $\sigma_j>0$, and let $\Psi_\sigma=\mathcal{C}\circ S\circ\Phi$ be the chart used in practice. Define on $\mathcal{M}_B$ the metric
\[
g_\sigma(\xi,\eta)
=
\sum_{b=1}^{B}
\bigl\langle
D_\sigma^{-1}\,\mathrm{D}\phi_{\Delta_b}[\xi_b],\,
D_\sigma^{-1}\,\mathrm{D}\phi_{\Delta_b}[\eta_b]
\bigr\rangle,
\qquad \phi=\operatorname{svec}\circ\log .
\]
Then (i) $\Psi_\sigma$ is a global diffeomorphism and a Riemannian isometry from $(\mathcal{M}_B,g_\sigma)$ to $(\mathbb{R}^{B\times m},\langle\cdot,\cdot\rangle_F)$, so Proposition~\ref{prop:dct_le_rfm_equiv} holds verbatim with $\Psi_\sigma$ and $g_\sigma$ in place of $\Psi$ and $g^{\mathrm{prod}}_{\mathrm{LE}}$; (ii) the geodesics of $g_\sigma$ and $g^{\mathrm{prod}}_{\mathrm{LE}}$ coincide as parametrized curves; (iii) for a manifold field $V$ and target $U$,
$\|V-U\|^2_{g_\sigma}=\sum_b\|D_\sigma^{-1}\mathrm{D}\phi_{\Delta_b}[V_b-U_b]\|^2$, a fixed diagonal reweighting of the log-Euclidean residual, so the population minimizer over measurable fields is the same conditional expectation under both metrics.
\end{corollary}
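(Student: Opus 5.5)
The plan is to write $\Psi_\sigma=\mathcal{C}\circ S\circ\Phi$ as a product of maps whose properties are already known, and then recover the proof of Proposition~\ref{prop:dct_le_rfm_equiv} step by step.

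\textbf{Part (i).} First, $S$ is an affine bijection of $\mathbb{R}^{B\times m}$ with inverse $Y\mapsto Y D_\sigma+\mathbf{1}\mu^\top$. Its differential is the constant linear map $\dot Z\mapsto \dot Z D_\sigma^{-1}$. Composing with the global diffeomorphisms $\Phi$ and $\mathcal{C}$ shows that $\Psi_\sigma$ is a global diffeomorphism.

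For the isometry claim, the chain rule gives
\[
\mathrm{D}\Psi_\sigma[\xi]=C_B\,\mathrm{D}\Phi[\xi]\,D_\sigma^{-1}.
\]
The $b$-th row of this matrix before applying $C_B$ is $D_\sigma^{-1}\mathrm{D}\phi_{\Delta_b}[\xi_b]$, up to transposition. Orthogonality of $C_B$ removes it from the Frobenius inner product. Summing the rows then reproduces exactly the definition of $g_\sigma$, so $g_\sigma=\Psi_\sigma^\ast\langle\cdot,\cdot\rangle_F$.

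Since the proof of Proposition~\ref{prop:dct_le_rfm_equiv} used only two facts, that $\Psi$ is a global diffeomorphism and that it is a pullback isometry, that argument transfers verbatim. This covers the geodesic correspondence, the pushforward of $U^{\mathcal M}_\tau$ to $Z_1-Z_0$, and the pointwise residual identity.

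\textbf{Part (ii).} By (i), the $g_\sigma$ geodesics are the curves $\Psi_\sigma^{-1}$ of straight lines. Because $S\circ\Phi$ and $\mathcal{C}\circ\Phi$ differ only by affine maps of the Euclidean coordinates, these are exactly the curves $\Phi^{-1}$ of straight lines. Affine maps send constant-speed lines $(1-\tau)Z_0+\tau Z_1$ to constant-speed lines with the transformed endpoints, so the parametrization by $\tau$ is preserved. By Proposition~\ref{prop:dct_le_rfm_equiv}, $\Phi^{-1}$ of such lines are the $g^{\mathrm{prod}}_{\mathrm{LE}}$ geodesics. The only care needed here is to state that coincidence is meant for given endpoints with $\tau\in[0,1]$, not as unit-speed curves, since the two metrics assign them different speeds.

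\textbf{Part (iii).} The displayed formula for $\|V-U\|_{g_\sigma}^2$ is the definition of $g_\sigma$ applied to $V-U$, using linearity of $\mathrm{D}\phi$. For the minimizer, I pass to log coordinates via $\Phi$. There, both objectives take the form $\mathbb{E}\|A(v(Z_\tau,\tau)-u)\|^2$, with $A=I$ in one case and $A$ equal to right multiplication by $D_\sigma^{-1}$ in the other; $A$ is a fixed invertible linear map.

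For fixed $(Z,\tau)$, the conditional risk $\mathbb{E}[\|A(v-u)\|^2\mid Z_\tau=Z]$ splits as
\[
\|A(v-\mathbb{E}[u\mid Z])\|^2+\text{const},
\]
because the cross term vanishes after conditioning. Since $A$ is injective, this is uniquely minimized at $v=\mathbb{E}[u\mid Z_\tau=Z]$ for every $A$. Pointwise minimization then gives the measurable-field minimizer, assuming $u$ is square-integrable.

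The main obstacle is bookkeeping rather than conceptual: making the row/column convention of $S$ (right multiplication by $D_\sigma^{-1}$ on features) consistent with the per-window expression in $g_\sigma$, and stating precisely what "coincide as parametrized curves" means in (ii).
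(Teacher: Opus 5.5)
Your proposal is correct and follows the paper's proof essentially step for step: for (i) you compute the chain-rule differential $C_B\,\mathrm{D}\Phi[\xi]\,D_\sigma^{-1}$ and remove $C_B$ by orthogonality; for (ii) you use that affine maps send affinely parametrized lines to affinely parametrized lines; and for (iii) you use that the conditional expectation minimizes the expected squared residual under any fixed positive-definite quadratic form. The differences are cosmetic. In (ii) the paper argues directly that both metrics have constant coefficients in log coordinates, whereas you go through the isometry from (i). In (iii) you write out the bias--variance split that the paper only asserts.
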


\begin{proof}
(i) $S$ is an invertible affine map, so $\Psi_\sigma$ is a diffeomorphism, and its differential is $\mathrm{D}\Psi_\sigma[\xi]=C_B\,\mathrm{D}\Phi[\xi]\,D_\sigma^{-1}$. Since $C_B$ is orthogonal, $\langle\mathrm{D}\Psi_\sigma[\xi],\mathrm{D}\Psi_\sigma[\eta]\rangle_F=g_\sigma(\xi,\eta)$, which is the definition of an isometry; the proof of Proposition~\ref{prop:dct_le_rfm_equiv} uses only this property. (ii) In log coordinates both metrics are constant, so their geodesics are affinely parametrized straight lines in these coordinates, and $S$ maps straight lines to straight lines. (iii) The expression follows from the definition of $g_\sigma$. The conditional expectation minimizes the expected squared residual under any fixed positive-definite quadratic form, so both objectives share it as population minimizer.
\end{proof}

\begin{remark}
The proposition places no restriction on how $v_\theta$ is computed from $Z$
beyond measurability. It therefore holds for the network with the temporal
branch, whose temporal input $C_B^\top Z$ is a fixed linear function of $Z$, and it holds
for any coupling of the endpoints, including the classwise minibatch Sinkhorn
coupling used in training, because the identity is pointwise in the endpoint
pair.
\end{remark}
\begin{proposition}[Spectral bounds for graph-variate connectivity]
\label{prop:gvd_spectral_bound}
Let $u_1,\ldots,u_n\in\mathbb{R}^d$ denote the observations in a
temporal bin, and define
\begin{equation}
    J
    =
    \frac{1}{n}\sum_{t=1}^{n}u_tu_t^\top,
    \qquad
    \Delta
    =
    W\odot J,
\end{equation}
where $W\in\mathbb{S}_{++}^d$. Define the channelwise energy within
the bin by
\begin{equation}
    q_i
    :=
    \frac{1}{n}\sum_{t=1}^{n}u_{i,t}^2,
    \qquad i=1,\ldots,d,
\end{equation}
and let
\begin{equation}
    Q := \operatorname{diag}(q_1,\ldots,q_d).
\end{equation}
Then $\Delta$ satisfies the Loewner-order bounds
\begin{equation}
    \lambda_{\min}(W)\,Q
    \preceq
    \Delta
    \preceq
    \lambda_{\max}(W)\,Q.
    \label{eq:gvd_loewner_bound}
\end{equation}
Consequently,
\begin{equation}
    \lambda_{\min}(\Delta)
    \geq
    \lambda_{\min}(W)\min_i q_i,
    \label{eq:gvd_lambda_min_bound}
\end{equation}
and
\begin{equation}
    \lambda_{\max}(\Delta)
    \leq
    \lambda_{\max}(W)\max_i q_i.
    \label{eq:gvd_lambda_max_bound}
\end{equation}
If $q_i>0$ for every channel, then in particular
\begin{equation}
    \lambda_{\min}(\Delta)>0,
\end{equation}
and its spectral condition number obeys
\begin{equation}
    \kappa_2(\Delta)
    \leq
    \kappa_2(W)
    \frac{\max_i q_i}{\min_i q_i}.
    \label{eq:gvd_condition_bound}
\end{equation}
Thus the distance of a GVD matrix from the boundary of the SPD cone
is controlled jointly by the smallest eigenvalue of the long-term
support and the least energetic channel in the local bin.
\end{proposition}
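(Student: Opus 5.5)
The plan is to reduce everything to the single-sample congruence identity from the proof of Proposition~\ref{prop:gvd_spd}, namely $W\odot u_tu_t^\top=D_tWD_t$ with $D_t=\operatorname{diag}(u_t)$, and then average over the bin. Since $\lambda_{\min}(W)I\preceq W\preceq\lambda_{\max}(W)I$ and congruence preserves the Loewner order, we obtain for every sample
\[
\lambda_{\min}(W)\,D_t^2\preceq D_tWD_t\preceq\lambda_{\max}(W)\,D_t^2 .
\]
Here $D_t^2=\operatorname{diag}(u_{1,t}^2,\ldots,u_{d,t}^2)$. This step needs no invertibility of $D_t$: for any $v$, set $w=D_tv$; then $v^\top D_tWD_tv=w^\top Ww$ lies between $\lambda_{\min}(W)\|w\|^2$ and $\lambda_{\max}(W)\|w\|^2$, and $\|w\|^2=v^\top D_t^2v$.

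Next, average over $t=1,\ldots,n$. The Loewner order is preserved under nonnegative combinations, and $\frac1n\sum_t D_t^2=Q$ by the definition of $q_i$. Together with $\Delta=\frac1n\sum_t W\odot u_tu_t^\top$ (linearity of the Hadamard product in its second argument), this gives \eqref{eq:gvd_loewner_bound} directly.

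The scalar consequences then follow in order.
\begin{enumerate}
\item Since $Q\succeq(\min_iq_i)I$ and $\lambda_{\min}(W)>0$, the lower bound gives $\Delta\succeq\lambda_{\min}(W)\min_iq_i\,I$. This is \eqref{eq:gvd_lambda_min_bound}.
\item Similarly, $Q\preceq(\max_iq_i)I$ gives \eqref{eq:gvd_lambda_max_bound}.
\item If all $q_i>0$, the lower bound is strictly positive, so $\Delta$ is SPD.
\item Dividing the bound on $\lambda_{\max}(\Delta)$ by the bound on $\lambda_{\min}(\Delta)$ yields \eqref{eq:gvd_condition_bound}, since $\kappa_2(W)=\lambda_{\max}(W)/\lambda_{\min}(W)$.
\end{enumerate}
The final interpretive sentence is then immediate: the bound on $\lambda_{\min}$ is exactly the Frobenius (or spectral) distance to the boundary of the cone.

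There is no serious obstacle. The only point needing care is the first step when some $u_{i,t}=0$. There the argument in Appendix~\ref{app:proof_gvd_spd} uses invertibility of $D_t$, but the quadratic-form argument via $w=D_tv$ avoids it entirely. This is why the bin-averaged bound holds under the weaker hypothesis $q_i>0$ rather than requiring every sample to have all components nonzero.
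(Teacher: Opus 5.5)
Your proposal is correct and matches the paper's proof essentially step for step. The paper also starts from the congruence identity $W\odot u_tu_t^\top=D_tWD_t$ and uses that congruence preserves the Loewner order to get $\lambda_{\min}(W)D_t^2\preceq D_tWD_t\preceq\lambda_{\max}(W)D_t^2$; it then averages to $Q$ and reads off the eigenvalue and condition-number bounds via Rayleigh--Ritz. Your quadratic-form check that invertibility of $D_t$ is never needed is a point the paper leaves implicit.
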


\begin{proof}
For each sample $u_t$, let
\begin{equation}
    D_t := \operatorname{diag}(u_t).
\end{equation}
Using the identity
\begin{equation}
    W\odot(u_tu_t^\top)
    =
    D_tWD_t,
\end{equation}
and linearity of the Hadamard product, we can write
\begin{equation}
    \Delta
    =
    W\odot
    \left(
        \frac{1}{n}\sum_{t=1}^{n}u_tu_t^\top
    \right)
    =
    \frac{1}{n}
    \sum_{t=1}^{n}D_tWD_t.
    \label{eq:gvd_sum_congruences}
\end{equation}

Since $W\in\mathbb{S}_{++}^d$,
\begin{equation}
    \lambda_{\min}(W)I
    \preceq
    W
    \preceq
    \lambda_{\max}(W)I.
\end{equation}
Congruence preserves the Loewner order, so for every $t$,
\begin{equation}
    \lambda_{\min}(W)D_t^2
    \preceq
    D_tWD_t
    \preceq
    \lambda_{\max}(W)D_t^2.
\end{equation}
Averaging over the $n$ samples gives
\begin{equation}
    \lambda_{\min}(W)
    \left(
        \frac{1}{n}\sum_{t=1}^{n}D_t^2
    \right)
    \preceq
    \Delta
    \preceq
    \lambda_{\max}(W)
    \left(
        \frac{1}{n}\sum_{t=1}^{n}D_t^2
    \right).
\end{equation}
Because
\begin{equation}
    \frac{1}{n}\sum_{t=1}^{n}D_t^2
    =
    \operatorname{diag}
    \left(
        \frac{1}{n}\sum_{t=1}^{n}u_{1,t}^2,
        \ldots,
        \frac{1}{n}\sum_{t=1}^{n}u_{d,t}^2
    \right)
    =
    Q,
\end{equation}
we obtain
\begin{equation}
    \lambda_{\min}(W)Q
    \preceq
    \Delta
    \preceq
    \lambda_{\max}(W)Q,
\end{equation}
which proves \eqref{eq:gvd_loewner_bound}.

Applying the Rayleigh--Ritz characterization to the lower bound,
for every unit vector $x$,
\begin{align}
    x^\top\Delta x
    &\geq
    \lambda_{\min}(W)x^\top Qx \\
    &=
    \lambda_{\min}(W)
    \sum_{i=1}^{d}q_i x_i^2 \\
    &\geq
    \lambda_{\min}(W)\min_i q_i.
\end{align}
Taking the minimum over all $\|x\|_2=1$ therefore yields
\begin{equation}
    \lambda_{\min}(\Delta)
    \geq
    \lambda_{\min}(W)\min_i q_i.
\end{equation}

Likewise, the upper Loewner bound gives, for every unit vector $x$,
\begin{align}
    x^\top\Delta x
    &\leq
    \lambda_{\max}(W)x^\top Qx \\
    &\leq
    \lambda_{\max}(W)\max_i q_i,
\end{align}
and hence
\begin{equation}
    \lambda_{\max}(\Delta)
    \leq
    \lambda_{\max}(W)\max_i q_i.
\end{equation}

If every $q_i>0$, the lower bound is strictly positive because
$W\succ0$, so $\lambda_{\min}(\Delta)>0$. Finally,
\begin{align}
    \kappa_2(\Delta)
    &=
    \frac{\lambda_{\max}(\Delta)}
         {\lambda_{\min}(\Delta)} \\
    &\leq
    \frac{
        \lambda_{\max}(W)\max_i q_i
    }{
        \lambda_{\min}(W)\min_i q_i
    } \\
    &=
    \kappa_2(W)
    \frac{\max_i q_i}{\min_i q_i},
\end{align}
which proves \eqref{eq:gvd_condition_bound}.
\end{proof}
\begin{proposition}[DCT-II diagonalizes discrete temporal variation]
\label{prop:dct_temporal_laplacian}
Let $D_B \in \mathbb{R}^{(B-1)\times B}$ denote the first-order
temporal difference operator,
\[
D_B =
\begin{bmatrix}
-1 & 1 & 0 & \cdots & 0 \\
0 & -1 & 1 & \cdots & 0 \\
\vdots & & \ddots & \ddots & \vdots \\
0 & \cdots & 0 & -1 & 1
\end{bmatrix},
\]
and let
\[
L_B = D_B^\top D_B
\]
be the corresponding path-graph Laplacian. Let
$C_B \in \mathbb{R}^{B\times B}$ be the orthonormal DCT-II matrix with
entries
\[
[C_B]_{k,b}
=
\alpha_k
\cos\left(
\frac{\pi k}{B}\left(b+\frac{1}{2}\right)
\right),
\qquad
k,b=0,\ldots,B-1,
\]
where
\[
\alpha_0 = \frac{1}{\sqrt{B}},
\qquad
\alpha_k = \sqrt{\frac{2}{B}},
\quad k\geq 1.
\]
Then the DCT-II basis diagonalizes $L_B$:
\[
C_B L_B C_B^\top
=
\Lambda_B,
\]
where
\[
\Lambda_B
=
\operatorname{diag}(\lambda_0,\ldots,\lambda_{B-1}),
\qquad
\lambda_k
=
4\sin^2\left(\frac{\pi k}{2B}\right).
\]
In particular,
\[
0=\lambda_0 < \lambda_1 < \cdots < \lambda_{B-1}<4.
\]
\end{proposition}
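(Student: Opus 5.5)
We will show directly that each row of $C_B$, viewed as a vector $c_k\in\mathbb{R}^B$ with entries $c_k(b)=\alpha_k\cos\bigl(\tfrac{\pi k}{B}(b+\tfrac12)\bigr)$, is an eigenvector of $L_B$ with eigenvalue $\lambda_k=4\sin^2(\pi k/(2B))$. Since $C_B$ is orthogonal ($C_BC_B^\top=I_B$, as used in Section~\ref{sec:chart}), the identity $L_Bc_k=\lambda_kc_k$ for all $k$ gives $L_BC_B^\top=C_B^\top\Lambda_B$. Multiplying on the left by $C_B$ then yields $C_BL_BC_B^\top=\Lambda_B$. The normalization $\alpha_k$ plays no role in the eigen-equation; it only guarantees orthonormality, which we take as known (it can also be checked by a standard cosine-sum identity if needed).

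First we write $L_B=D_B^\top D_B$ explicitly. It is the path-graph Laplacian, tridiagonal with off-diagonal entries $-1$ and diagonal $(1,2,\ldots,2,1)$. Hence for a vector $x$,
\[
(L_Bx)_b = 2x_b-x_{b-1}-x_{b+1}
\]
for interior $b$. The boundary rows read $(L_Bx)_0=x_0-x_1$ and $(L_Bx)_{B-1}=x_{B-1}-x_{B-2}$.

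For interior indices, put $\theta=\pi k/B$ and apply the sum-to-product identity $\cos(\theta(b-\tfrac12))+\cos(\theta(b+\tfrac32))=2\cos\theta\cos(\theta(b+\tfrac12))$. This gives $(L_Bc_k)_b=(2-2\cos\theta)c_k(b)=4\sin^2(\theta/2)\,c_k(b)$.

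The boundary rows are the main obstacle, and we handle them by a ghost-point reflection. Extending $c_k$ by its cosine formula to $b=-1$ and $b=B$ gives $c_k(-1)=\alpha_k\cos(-\theta/2)=c_k(0)$. Likewise $c_k(B)=\alpha_k\cos(\pi k+\theta/2)=\alpha_k\cos(\pi k-\theta/2)=c_k(B-1)$, using the evenness of cosine about $\pi k$. With these values the boundary rows $x_0-x_1$ and $x_{B-1}-x_{B-2}$ coincide with the interior stencil $2x_b-x_{b-1}-x_{b+1}$ evaluated with the ghost values. The interior computation therefore applies verbatim, so the Neumann-type half-sample symmetry of DCT-II is exactly what makes the boundary work.

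Finally, $\lambda_k=4\sin^2(\pi k/(2B))$ with $\pi k/(2B)\in[0,\pi/2)$ for $k=0,\ldots,B-1$. On that interval $\sin$ is strictly increasing and nonnegative, which gives $0=\lambda_0<\lambda_1<\cdots<\lambda_{B-1}<4\sin^2(\pi/2)=4$.
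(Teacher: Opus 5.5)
Your proof is correct and follows the paper's route: check that each DCT-II row $c_k$ is an eigenvector of $L_B$ using the sum-to-product identity on interior rows, then use $C_BC_B^\top=I_B$ to diagonalize and the monotonicity of $\sin$ on $[0,\pi/2)$ for the ordering. Your ghost-point reflection, $c_k(-1)=c_k(0)$ and $c_k(B)=c_k(B-1)$, actually supplies the boundary-row verification that the paper only asserts.
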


\begin{proof}
Let $q_k \in \mathbb{R}^B$ denote the $k$-th DCT-II basis vector,
\[
q_k(b)
=
\alpha_k
\cos\left(
\frac{\pi k}{B}\left(b+\frac{1}{2}\right)
\right),
\qquad
b=0,\ldots,B-1.
\]
For an interior index $b=1,\ldots,B-2$,
\[
(L_B q_k)_b
=
2q_k(b)-q_k(b-1)-q_k(b+1).
\]
Let $\theta_k=\pi k/B$. Using
\[
\cos(a-\theta_k)+\cos(a+\theta_k)
=
2\cos(a)\cos(\theta_k),
\]
we obtain
\[
(L_B q_k)_b
=
2(1-\cos\theta_k)q_k(b).
\]
Since
\[
2(1-\cos\theta_k)
=
4\sin^2\left(\frac{\theta_k}{2}\right),
\]
it follows that
\[
(L_B q_k)_b
=
4\sin^2\left(\frac{\pi k}{2B}\right)q_k(b).
\]
The two boundary rows satisfy the same identity, and hence
\[
L_B q_k = \lambda_k q_k,
\qquad
\lambda_k
=
4\sin^2\left(\frac{\pi k}{2B}\right).
\]
Because the DCT-II basis is orthonormal, its basis vectors form a
complete orthonormal eigenbasis of $L_B$, which gives
\[
C_B L_B C_B^\top = \Lambda_B.
\]
The ordering of the eigenvalues follows from the strict monotonicity of
$\sin(x)$ on $[0,\pi/2)$.
\end{proof}

\begin{theorem}[Exact spectral decomposition of log-Euclidean temporal variation]
\label{thm:gvd_temporal_spectral_energy}
Let
\[
\Delta_{1:B}
=
(\Delta_1,\ldots,\Delta_B)
\in
\left(\mathbb{S}_{++}^{d}\right)^B
\]
be a GVD trajectory. Define its log-Euclidean coordinates by
\[
z_b
=
\operatorname{svec}(\log \Delta_b)
\in
\mathbb{R}^{m},
\qquad
m=\frac{d(d+1)}{2},
\]
and stack them as
\[
Z
=
\begin{bmatrix}
z_1^\top \\
\vdots \\
z_B^\top
\end{bmatrix}
\in
\mathbb{R}^{B\times m}.
\]
Let
\[
\bar Z = C_B Z
\]
be the full orthonormal DCT-II representation, and denote the $k$-th
DCT coefficient vector by
\[
\bar z_k^\top
=
[\bar Z]_{k,:},
\qquad
k=0,\ldots,B-1.
\]
Define the discrete temporal variation of the GVD trajectory under the
log-Euclidean metric as
\[
\mathcal{V}_{\mathrm{LE}}(\Delta_{1:B})
=
\sum_{b=1}^{B-1}
d_{\mathrm{LE}}^2(\Delta_{b+1},\Delta_b),
\]
where
\[
d_{\mathrm{LE}}(A,B)
=
\|\log A-\log B\|_F.
\]
Then
\[
\boxed{
\mathcal{V}_{\mathrm{LE}}(\Delta_{1:B})
=
\sum_{k=0}^{B-1}
\lambda_k
\|\bar z_k\|_2^2
}
\]
with
\[
\lambda_k
=
4\sin^2\left(\frac{\pi k}{2B}\right).
\]
Equivalently,
\[
\boxed{
\mathcal{V}_{\mathrm{LE}}(\Delta_{1:B})
=
\sum_{k=1}^{B-1}
4\sin^2\left(\frac{\pi k}{2B}\right)
\|\bar z_k\|_2^2
}
\]
since $\lambda_0=0$.

Hence, the DCT-II provides an exact orthogonal decomposition of the
log-Euclidean temporal variation of a GVD trajectory. The zero-frequency
mode contributes no temporal variation, while the weighting
$\lambda_k$ increases monotonically with DCT mode index $k$.
\end{theorem}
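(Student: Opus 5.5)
We first rewrite the left-hand side in log-Euclidean coordinates. Since $\operatorname{svec}$ preserves the Frobenius inner product, $d_{\mathrm{LE}}(\Delta_{b+1},\Delta_b)=\|\log\Delta_{b+1}-\log\Delta_b\|_F=\|z_{b+1}-z_b\|_2$. The rows of $D_BZ$ are exactly the differences $z_{b+1}-z_b$, so
\[
\mathcal{V}_{\mathrm{LE}}(\Delta_{1:B})=\sum_{b=1}^{B-1}\|z_{b+1}-z_b\|_2^2=\|D_BZ\|_F^2=\operatorname{tr}\bigl(Z^\top D_B^\top D_BZ\bigr)=\operatorname{tr}\bigl(Z^\top L_BZ\bigr).
\]
Equivalently, this is the sum over the $m$ feature columns of the path-graph quadratic form $\zeta^\top L_B\zeta$.

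Next we pass to DCT coordinates. Orthonormality gives $C_B^\top C_B=I_B$, so $Z=C_B^\top\bar Z$. Substituting this and applying Proposition~\ref{prop:dct_temporal_laplacian} yields
\[
\operatorname{tr}\bigl(\bar Z^\top C_BL_BC_B^\top\bar Z\bigr)=\operatorname{tr}\bigl(\bar Z^\top\Lambda_B\bar Z\bigr)=\sum_{k=0}^{B-1}\lambda_k\|\bar z_k\|_2^2.
\]
Here $\bar z_k^\top$ is the $k$-th row of $\bar Z$, and $\Lambda_B$ is diagonal, so the trace splits into a weighted sum of squared row norms. This proves the first boxed identity. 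The second follows because $\lambda_0=4\sin^2(0)=0$. The monotone increase of $\lambda_k$ in $k$ is the ordering statement already contained in Proposition~\ref{prop:dct_temporal_laplacian}.

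The indexing needs some bookkeeping. The trajectory uses $b=1,\ldots,B$, while the DCT matrix is written with indices $0,\ldots,B-1$. We will state the shift explicitly so that $C_B$ acts along the same temporal axis as $D_B$. No other convention issue arises.

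The substantive content lies in Proposition~\ref{prop:dct_temporal_laplacian}, which we may assume. The only point to check there is the pair of boundary rows of $L_B$, namely $(L_Bq)_0=q(0)-q(1)$ and the analogous last row. These rows satisfy the eigen-equation because of the half-sample symmetry of the DCT-II basis: extending by $q_k(-1)=q_k(0)$ and $q_k(B)=q_k(B-1)$ turns each boundary row into the interior three-term formula. If anything in the argument needs verifying, it is this step. The rest is a direct trace computation.
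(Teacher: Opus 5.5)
Your proposal is correct and follows the paper's proof essentially step for step: you use the svec isometry to write $\mathcal{V}_{\mathrm{LE}}=\|D_BZ\|_F^2=\operatorname{tr}(Z^\top L_BZ)$, substitute $Z=C_B^\top\bar Z$, and invoke Proposition~\ref{prop:dct_temporal_laplacian} to reduce the trace to $\sum_k\lambda_k\|\bar z_k\|_2^2$. Your remark on the boundary rows, via the half-sample symmetry $q_k(-1)=q_k(0)$ and $q_k(B)=q_k(B-1)$, usefully fills in the step that the paper's proof of that proposition only asserts.
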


\begin{proof}
Since $\operatorname{svec}$ preserves the Frobenius inner product on
symmetric matrices,
\[
d_{\mathrm{LE}}^2(\Delta_{b+1},\Delta_b)
=
\|\log\Delta_{b+1}-\log\Delta_b\|_F^2
=
\|z_{b+1}-z_b\|_2^2.
\]
Therefore,
\[
\mathcal{V}_{\mathrm{LE}}(\Delta_{1:B})
=
\sum_{b=1}^{B-1}
\|z_{b+1}-z_b\|_2^2.
\]
Using the first-difference matrix $D_B$ from
Proposition~\ref{prop:dct_temporal_laplacian},
\[
\mathcal{V}_{\mathrm{LE}}(\Delta_{1:B})
=
\|D_B Z\|_F^2.
\]
Hence,
\[
\mathcal{V}_{\mathrm{LE}}(\Delta_{1:B})
=
\operatorname{tr}
\left(
Z^\top D_B^\top D_B Z
\right)
=
\operatorname{tr}
\left(
Z^\top L_B Z
\right).
\]
Since the DCT-II matrix is orthonormal,
\[
Z=C_B^\top \bar Z.
\]
Substituting this expression gives
\[
\begin{aligned}
\mathcal{V}_{\mathrm{LE}}(\Delta_{1:B})
&=
\operatorname{tr}
\left(
\bar Z^\top
C_B L_B C_B^\top
\bar Z
\right).
\end{aligned}
\]
By Proposition~\ref{prop:dct_temporal_laplacian},
\[
C_B L_B C_B^\top
=
\Lambda_B.
\]
Thus
\[
\mathcal{V}_{\mathrm{LE}}(\Delta_{1:B})
=
\operatorname{tr}
\left(
\bar Z^\top
\Lambda_B
\bar Z
\right).
\]
Because $\Lambda_B$ is diagonal,
\[
\mathcal{V}_{\mathrm{LE}}(\Delta_{1:B})
=
\sum_{k=0}^{B-1}
\lambda_k
\|\bar z_k\|_2^2.
\]
Finally, substituting
\[
\lambda_k
=
4\sin^2\left(\frac{\pi k}{2B}\right)
\]
gives the stated result.
\end{proof}

\paragraph{Interpretation.}
Theorem~\ref{thm:gvd_temporal_spectral_energy} gives the DCT-token
representation a direct geometric interpretation. The quantity
\[
E_k
=
4\sin^2\left(\frac{\pi k}{2B}\right)
\|\bar z_k\|_2^2
\]
is exactly the contribution of DCT mode $k$ to the discrete
log-Euclidean temporal variation of the GVD trajectory. The DC mode
$k=0$ describes time-invariant connectivity structure and contributes
zero temporal variation, whereas higher-order modes receive
progressively larger temporal-variation weights. Thus, the DCT does
not merely reorganize the trajectory into frequency coordinates; it
diagonalizes its intrinsic temporal variation under the log-Euclidean
geometry.

\begin{corollary}[Exact log-Euclidean error of spectral truncation]
\label{cor:dct_truncation}
Let $\Delta^{(K)}_{1:B}$ be obtained by retaining only DCT modes
$k=0,\ldots,K-1$ and setting all remaining coefficients to zero before
applying the inverse DCT and log-Euclidean decoder. Then
\[
\sum_{b=1}^{B}
d_{\mathrm{LE}}^2
\left(
\Delta_b,\Delta_b^{(K)}
\right)
=
\sum_{k=K}^{B-1}
\|\bar z_k\|_2^2.
\]
Moreover,
\[
\mathcal{V}_{\mathrm{LE}}(\Delta_{1:B})
-
\mathcal{V}_{\mathrm{LE}}(\Delta^{(K)}_{1:B})
=
\sum_{k=K}^{B-1}
\lambda_k\|\bar z_k\|_2^2.
\]
\end{corollary}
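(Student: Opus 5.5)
The plan is to reduce both identities to the orthonormality of $C_B$ and to Theorem~\ref{thm:gvd_temporal_spectral_energy}, working entirely in log-Euclidean coordinates. First I will make the truncated trajectory explicit. Let $P_K=\operatorname{diag}(1,\ldots,1,0,\ldots,0)$ keep the first $K$ DCT modes. The truncated coefficients are then $\bar Z^{(K)}=P_K\bar Z$, and the decoded log coordinates are $Z^{(K)}=C_B^\top P_K\bar Z$. Decoding applies $\operatorname{svec}^{-1}$ and the matrix exponential window by window. Since $\exp$ of a symmetric matrix is SPD and $\log\circ\exp=\operatorname{id}$ on $\mathbb{S}^d$, each $\Delta_b^{(K)}$ is SPD and satisfies $\operatorname{svec}(\log\Delta_b^{(K)})=z_b^{(K)}$. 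This last identity is the one point needing care, because it is what lets us pass from matrices back to coordinates. I take the decoder here to be the plain inverse chart, without the log-spectrum stabilization that is applied only to generated samples.

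For the first identity I will use that $\operatorname{svec}$ preserves the Frobenius inner product, as in the proof of Theorem~\ref{thm:gvd_temporal_spectral_energy}. This gives
\[
d_{\mathrm{LE}}^2\bigl(\Delta_b,\Delta_b^{(K)}\bigr)=\bigl\|z_b-z_b^{(K)}\bigr\|_2^2,
\]
so the left-hand side equals $\|Z-Z^{(K)}\|_F^2$. Next I write
\[
Z-Z^{(K)}=C_B^\top(I-P_K)\bar Z .
\]
The Frobenius isometry of $C_B$, already shown in Proposition~\ref{prop:dct_le_rfm_equiv}, then gives $\|(I-P_K)\bar Z\|_F^2$. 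This equals $\sum_{k\ge K}\|\bar z_k\|_2^2$.

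For the second identity I will apply Theorem~\ref{thm:gvd_temporal_spectral_energy} twice: once to $\Delta_{1:B}$ and once to $\Delta^{(K)}_{1:B}$. This is valid because the truncated trajectory lies in $(\mathbb{S}^d_{++})^B$ and its DCT coefficients are exactly $P_K\bar Z$. The theorem's proof uses only the coordinates and never the GVD structure, so it applies to any trajectory on $\mathcal{M}_B$. It gives
\[
\mathcal{V}_{\mathrm{LE}}(\Delta^{(K)}_{1:B})=\sum_{k<K}\lambda_k\|\bar z_k\|_2^2 .
\]
Subtracting this from the full expansion leaves $\sum_{k\ge K}\lambda_k\|\bar z_k\|_2^2$, because the mode-wise weights are diagonal and no cross terms appear.

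I expect no real obstacle. The only subtle point is the one flagged above: the identity $\log\exp=\operatorname{id}$ on symmetric matrices, which ensures the truncated trajectory's coordinates are exactly $P_K\bar Z$.
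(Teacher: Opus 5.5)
Your proposal is correct and follows the paper's argument. The paper gets the first identity from the $\operatorname{svec}$/log-Euclidean isometry together with orthonormality of $C_B$ (Parseval), and the second by applying Theorem~\ref{thm:gvd_temporal_spectral_energy} to the full and truncated trajectories and subtracting. You are simply more explicit about steps the paper leaves implicit, namely that $\operatorname{svec}(\log\Delta_b^{(K)})=z_b^{(K)}$ because $\log\circ\exp=\mathrm{id}$ on $\mathbb{S}^d$, and that the decoder here is the plain inverse chart without standardization or clipping.
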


\begin{proof}
The first identity follows from the isometry of the log map,
$\operatorname{svec}$, and the orthonormal DCT together with
Parseval's identity. The second follows directly from
Theorem~\ref{thm:gvd_temporal_spectral_energy}.
\end{proof}

\begin{proposition}[Stable support attenuates local connectivity perturbations]
\label{prop:support_noise_contraction}
Let
\[
W\in\mathbb{S}_{++}^{d}
\]
be a correlation matrix and define
\[
\rho_W
=
\max_{i\neq j}|W_{ij}|.
\]
Since $W$ is positive definite with unit diagonal,
\[
0\leq\rho_W<1.
\]
Let a local covariance estimate satisfy
\[
\widehat J = J+E,
\]
where $E=E^\top$. Decompose the perturbation into diagonal and
off-diagonal components,
\[
E = E_{\mathrm{diag}}+E_{\mathrm{off}},
\]
where
\[
E_{\mathrm{diag}}
=
\operatorname{diag}(E),
\qquad
\operatorname{diag}(E_{\mathrm{off}})=0.
\]
Define
\[
\Delta=W\odot J,
\qquad
\widehat\Delta=W\odot\widehat J.
\]
Then
\[
\boxed{
\|\widehat\Delta-\Delta\|_F^2
\leq
\|E_{\mathrm{diag}}\|_F^2
+
\rho_W^2
\|E_{\mathrm{off}}\|_F^2.
}
\]
In particular,
\[
\|\widehat\Delta-\Delta\|_F
\leq
\|\widehat J-J\|_F,
\]
so the stable support never amplifies a covariance perturbation in
Frobenius norm. Moreover, if the perturbation is purely off-diagonal,
\[
E_{\mathrm{diag}}=0,
\]
then
\[
\boxed{
\|\widehat\Delta-\Delta\|_F
\leq
\rho_W
\|\widehat J-J\|_F,
}
\]
which is a strict contraction whenever $E\neq0$.
\end{proposition}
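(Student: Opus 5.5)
The argument works entrywise, because the Hadamard product acts on each entry separately. By linearity of the Hadamard product,
\[
\widehat\Delta-\Delta=W\odot(\widehat J-J)=W\odot E .
\]
Hence $\|\widehat\Delta-\Delta\|_F^2=\sum_{i,j}W_{ij}^2E_{ij}^2$. We split this sum into diagonal terms $i=j$ and off-diagonal terms $i\neq j$.

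\textbf{Diagonal terms.} Since $W$ is a correlation matrix, $W_{ii}=1$. The diagonal part therefore contributes exactly $\sum_i E_{ii}^2=\|E_{\mathrm{diag}}\|_F^2$.

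\textbf{Off-diagonal terms.} By definition of $\rho_W$, $W_{ij}^2\le\rho_W^2$ for $i\ne j$. The off-diagonal part is therefore at most $\rho_W^2\sum_{i\ne j}E_{ij}^2=\rho_W^2\|E_{\mathrm{off}}\|_F^2$. Adding the two parts gives the boxed inequality.

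\textbf{The claim $0\le\rho_W<1$.} For any pair $i\ne j$, consider the $2\times2$ principal submatrix of $W$ on indices $\{i,j\}$. It is positive definite because $W\succ0$, and its determinant is $1-W_{ij}^2>0$. So $|W_{ij}|<1$ for every off-diagonal entry. Since there are finitely many entries, the maximum $\rho_W$ is also $<1$. For $d=1$ there are no off-diagonal entries, and we adopt the convention $\rho_W=0$.

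\textbf{Consequences.}
\begin{itemize}
\item Non-amplification: because $\rho_W^2\le1$, the right-hand side is at most $\|E_{\mathrm{diag}}\|_F^2+\|E_{\mathrm{off}}\|_F^2=\|E\|_F^2$. This uses that the diagonal and off-diagonal parts have disjoint supports, so they are Frobenius-orthogonal.
\item Purely off-diagonal perturbation: if $E_{\mathrm{diag}}=0$, the bound reads $\|\widehat\Delta-\Delta\|_F\le\rho_W\|E\|_F$.
\item Strict contraction: if in addition $E\ne0$, then $\rho_W\|E\|_F<\|E\|_F$ because $\rho_W<1$.
\end{itemize}

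The only step that is not pure bookkeeping is showing $\rho_W<1$ strictly, which the $2\times2$ principal minor argument settles. Everything else is the entrywise form of the Frobenius norm under the Hadamard product.
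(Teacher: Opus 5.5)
Your proof is correct and follows the paper's argument essentially step for step: you write $\widehat\Delta-\Delta=W\odot E$, use $W_{ii}=1$ on the diagonal and $|W_{ij}|\le\rho_W$ off it, and combine the two parts using the Frobenius orthogonality of their disjoint supports. Your $2\times2$ principal-minor argument for $\rho_W<1$ is a useful addition, since the paper asserts this bound in the statement without proving it.
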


\begin{proof}
By linearity of the Hadamard product,
\[
\widehat\Delta-\Delta
=
W\odot E.
\]
Because $W$ is a correlation matrix,
\[
W_{ii}=1.
\]
Therefore the diagonal perturbation is unchanged,
\[
W\odot E_{\mathrm{diag}}
=
E_{\mathrm{diag}}.
\]
For the off-diagonal component,
\[
\begin{aligned}
\|W\odot E_{\mathrm{off}}\|_F^2
&=
\sum_{i\neq j}
W_{ij}^2E_{ij}^2\\
&\leq
\rho_W^2
\sum_{i\neq j}E_{ij}^2\\
&=
\rho_W^2
\|E_{\mathrm{off}}\|_F^2.
\end{aligned}
\]
Since the diagonal and off-diagonal components have disjoint support,
they are orthogonal under the Frobenius inner product. Hence
\[
\begin{aligned}
\|\widehat\Delta-\Delta\|_F^2
&=
\|W\odot E_{\mathrm{diag}}\|_F^2
+
\|W\odot E_{\mathrm{off}}\|_F^2\\
&\leq
\|E_{\mathrm{diag}}\|_F^2
+
\rho_W^2
\|E_{\mathrm{off}}\|_F^2.
\end{aligned}
\]
Since $\rho_W<1$, the stated consequences follow.
\end{proof}

%

\section{Spectral structure of GVD-CFM}
\label{app:spectral_theory}

This appendix provides the complete derivation underlying
Theorem~\ref{thm:spectral_cfm_main}.

\subsection{Log-Euclidean trajectory coordinates}

Let $\boldsymbol{\Delta}=(\Delta_1,\ldots,\Delta_B)$ with $\Delta_b\in\mathbb{S}_{++}^{d}$, and define $z_b=\operatorname{svec}(\log\Delta_b)\in\mathbb{R}^{m}$. The complete trajectory is $Z=[z_1,\ldots,z_B]^\top\in\mathbb{R}^{B\times m}$, and the orthonormal DCT-II acts along the temporal dimension, $\bar Z=C_BZ$ with $C_B^\top C_B=I_B$. Since $\operatorname{svec}$ is linear, mode $k$ is
\begin{equation}
    \bar z_k
    =
    \sum_{b=1}^{B}(C_B)_{kb}z_b
    =
    \operatorname{svec}
    \left(
        \sum_{b=1}^{B}(C_B)_{kb}\log\Delta_b
    \right).
\end{equation}
The DCT does not act within an individual connectivity matrix. It reorganizes the temporal evolution of the complete log-connectivity trajectory.

\subsection{Temporal covariance and the KLT}

For one log-connectivity coordinate $j$, define the temporal vector
\begin{equation}
    x^{(j)}
    =
    \begin{bmatrix}
        Z_{1,j} &
        \cdots &
        Z_{B,j}
    \end{bmatrix}^{\top}.
\end{equation}

Let
\begin{equation}
    K_j
    =
    \operatorname{Cov}(x^{(j)})
\end{equation}
and define the feature-averaged temporal covariance
\begin{equation}
    K_T
    =
    \frac{1}{m}
    \sum_{j=1}^{m}
    K_j.
\end{equation}

Because $K_T$ is symmetric positive semidefinite,
\begin{equation}
    K_T
    =
    U\Lambda U^\top,
\end{equation}
with orthogonal $U$ and diagonal
\begin{equation}
    \Lambda
    =
    \operatorname{diag}(\lambda_1,\ldots,\lambda_B).
\end{equation}

\begin{proposition}[Exact KLT decorrelation]
\label{prop:appendix_klt}
Let $x$ be zero mean with covariance $K_T=U\Lambda U^\top$. Then for
\begin{equation}
    y=U^\top x,
\end{equation}
we have
\begin{equation}
    \operatorname{Cov}(y)=\Lambda.
\end{equation}
Therefore
\begin{equation}
    \operatorname{Cov}(y_k,y_\ell)=0,
    \qquad
    k\neq \ell.
\end{equation}
\end{proposition}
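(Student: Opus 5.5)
The argument is a direct covariance computation using bilinearity of the covariance operator under a fixed linear map. First I will record that, since $x$ has zero mean, $y=U^\top x$ also has zero mean, because $\mathbb{E}[y]=U^\top\mathbb{E}[x]=0$. Hence $\operatorname{Cov}(y)=\mathbb{E}[yy^\top]$, and likewise $\operatorname{Cov}(x)=\mathbb{E}[xx^\top]=K_T$. The only assumption needed is that $x$ has finite second moments, which is implicit in $K_T$ being defined.

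Next I will substitute $y=U^\top x$ and use linearity of expectation to pull the deterministic orthogonal matrix outside:
\begin{equation*}
\operatorname{Cov}(y)=\mathbb{E}\bigl[U^\top x x^\top U\bigr]=U^\top\,\mathbb{E}[xx^\top]\,U=U^\top K_T U .
\end{equation*}
Inserting the spectral decomposition $K_T=U\Lambda U^\top$ and using $U^\top U=I_B$ then gives $U^\top U\Lambda U^\top U=\Lambda$. This decomposition exists because $K_T$ is a nonnegative combination of covariance matrices, hence symmetric positive semidefinite.

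The off-diagonal statement follows by reading entries. $\operatorname{Cov}(y_k,y_\ell)=[\operatorname{Cov}(y)]_{k\ell}=\Lambda_{k\ell}$, which is $0$ for $k\neq\ell$ because $\Lambda$ is diagonal. As a by-product, $\operatorname{Var}(y_k)=\lambda_k\ge 0$.

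There is no real obstacle; the only point needing care is interpretation. $K_T$ is defined in the paper as a feature average $\frac1m\sum_j K_j$, so the proposition should be read for a generic zero-mean $x$ with covariance exactly $K_T$, for instance a feature index drawn uniformly at random after centering, as the statement itself stipulates. For an individual coordinate $x^{(j)}$ with $K_j\neq K_T$, the same computation gives only $U^\top K_j U$, which need not be diagonal. I would note this in a sentence so the result is not over-read.
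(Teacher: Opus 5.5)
Your proof is correct and follows the paper's argument: both compute $\operatorname{Cov}(y)=U^\top K_T U$, substitute $K_T=U\Lambda U^\top$ with $U^\top U=I$ to obtain $\Lambda$, and read off the zero off-diagonal entries. Your caveat is accurate and makes explicit a point the paper leaves implicit: $U$ diagonalizes the feature-averaged $K_T$, but not necessarily each individual $K_j$.
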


\begin{proof}
\begin{align}
    \operatorname{Cov}(y)
    &=
    U^\top
    \operatorname{Cov}(x)
    U\\
    &=
    U^\top K_TU\\
    &=
    U^\top U\Lambda U^\top U\\
    &=
    \Lambda.
\end{align}
\end{proof}

\begin{figure}[t]
    \centering
    \includegraphics[width=\linewidth]{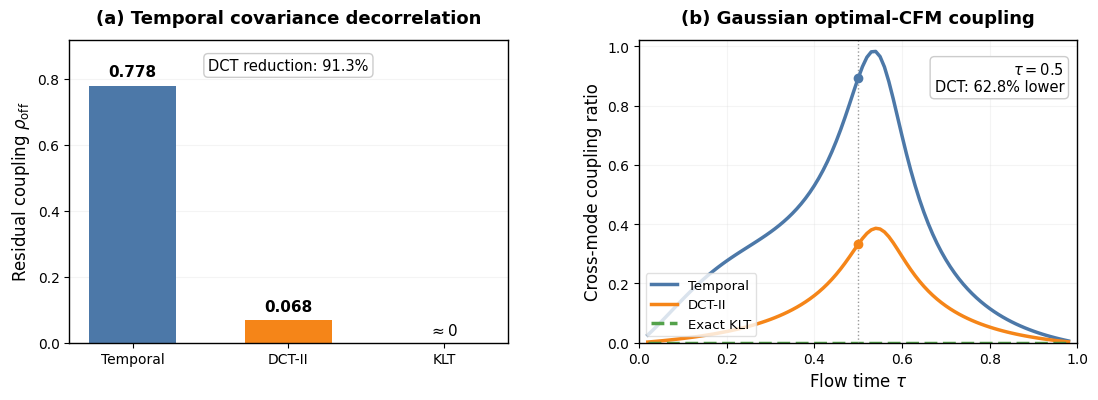}
    \caption{
    Empirical and theoretical motivation for the temporal DCT representation.
    \textbf{(a)} Residual off-diagonal temporal covariance,
    $\rho_{\mathrm{off}}(K)
    =\|K-\operatorname{diag}(K)\|_F/\|K\|_F$,
    for standardized log-GVD trajectories from BNCI2014\_001.
    The full orthonormal DCT-II reduces residual temporal covariance coupling from
    $0.778$ to $0.068$, corresponding to a $91.3\%$ reduction, while the empirical
    Karhunen--Lo\`eve transform (KLT) yields numerical zero.
    \textbf{(b)} Cross-mode coupling ratio of the population-optimal Gaussian
    conditional flow-matching field as a function of flow time $\tau$.
    The exact KLT removes second-order cross-mode coupling, while DCT-II
    substantially reduces it relative to the original temporal coordinates;
    at $\tau=0.5$, the DCT coupling ratio is $62.8\%$ lower.
    Together, these results show that DCT-II acts as an invertible,
    data-independent approximation to the temporal KLT that reduces the
    cross-mode dependency structure presented to the trajectory velocity model.
    }
    \label{fig:dct_cfm_decoupling}
\end{figure}

\begin{table*}[t]
\centering
\caption{Temporal-basis ablation. Results are dataset-balanced means over five datasets and three generator seeds. All variants retain all $B=100$ temporal coordinates. Random orthogonal uses a fixed random orthogonal basis and KLT the training-set empirical temporal Karhunen--Lo\`eve basis. The first four rows use the spectral-only velocity network and therefore isolate the basis; the last row is the full GVD-CFM. Ratios have ideal value 1. Best values are \textbf{bold}; second-best values are \underline{\textit{underlined italics}}.}
\label{tab:basis_ablation_summary}
\small
\setlength{\tabcolsep}{3.5pt}
\resizebox{\textwidth}{!}{%
\begin{tabular}{lcccccccc}
\toprule
Basis
& Rel. GVD-FID $\downarrow$
& Eva F1 $\uparrow$
& CAS AUC $\uparrow$
& CAS F1 $\uparrow$
& Temp. corr. $\uparrow$
& Lag-ACF $\uparrow$
& Energy $\to 1$
& Dyn. frac. $\to 1$ \\
\midrule
No DCT
& 1.092 & 0.540 & 0.772 & 0.704 & 0.404 & 0.666 & 0.973 & \underline{\textit{1.011}} \\
Random orthogonal
& 1.093 & 0.480 & 0.788 & 0.713 & 0.356 & 0.678 & \underline{\textit{1.017}} & 1.047 \\
KLT
& 1.033 & \underline{\textit{0.600}} & \textbf{0.795} & \textbf{0.729} & \textbf{0.951} & \textbf{0.998} & \textbf{1.004} & 1.014 \\
DCT, spectral only
& \underline{\textit{1.025}} & \underline{\textit{0.600}} & \underline{\textit{0.792}} & 0.721 & \underline{\textit{0.919}} & \underline{\textit{0.997}} & 0.975 & \textbf{1.010} \\
\midrule
GVD-CFM (DCT + temporal branch)
& \textbf{1.022} & \textbf{0.613} & 0.790 & \underline{\textit{0.725}} & 0.910 & \underline{\textit{0.997}} & 0.953 & 0.987 \\
\bottomrule
\end{tabular}}
\end{table*}

\subsection{The DCT as an approximate KLT}

The DCT-transformed temporal covariance is
\begin{equation}
    K_{\mathrm{DCT}}
    =
    C_BK_TC_B^\top.
\end{equation}

For temporally smooth or approximately stationary processes whose covariance
eigenvectors are close to cosine modes,
\begin{equation}
    C_BK_TC_B^\top
    \approx
    \Lambda.
\end{equation}

We quantify residual temporal coupling using
\begin{equation}
    \rho_{\mathrm{off}}(K)
    =
    \frac{
        \left\|
            K-\operatorname{diag}(K)
        \right\|_F
    }{
        \|K\|_F
    }.
\end{equation}

The residual coupling in the temporal and DCT bases is then
\begin{equation}
    \rho_{\mathrm{time}}
    =
    \rho_{\mathrm{off}}(K_T)
\end{equation}
and
\begin{equation}
    \rho_{\mathrm{DCT}}
    =
    \rho_{\mathrm{off}}
    \left(
        C_BK_TC_B^\top
    \right).
\end{equation}

A reduction
\begin{equation}
    \rho_{\mathrm{DCT}}
    <
    \rho_{\mathrm{time}}
\end{equation}
indicates that the DCT has reduced second-order temporal dependence.

\subsection{Separable trajectory covariance}

Let
\begin{equation}
    x=\operatorname{vec}(Z^\top)
    \in\mathbb{R}^{Bm},
\end{equation}
which stacks the windows $z_1,\ldots,z_B$, and suppose
\begin{equation}
    \Sigma
    =
    K_T\otimes K_S,
\end{equation}
where $K_S$ represents covariance between log-connectivity coordinates.

\begin{proposition}[Temporal KLT decouples temporal covariance]
\label{prop:appendix_kronecker}
If
\begin{equation}
    K_T=U\Lambda_TU^\top,
\end{equation}
then with
\begin{equation}
    Q=U^\top\otimes I_m,
\end{equation}
we have
\begin{equation}
    Q\Sigma Q^\top
    =
    \Lambda_T\otimes K_S.
\end{equation}
\end{proposition}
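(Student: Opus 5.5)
The plan is to use the mixed-product property of the Kronecker product, $(A\otimes B)(C\otimes D)=(AC)\otimes(BD)$, whenever the inner dimensions agree, together with the transpose rule $(A\otimes B)^\top=A^\top\otimes B^\top$. First we check that $Q$ acts on $x=\operatorname{vec}(Z^\top)$ consistently with the ordering in which $\Sigma=K_T\otimes K_S$ is written. The vector stacks the windows $z_1,\ldots,z_B$, so the outer (block) index is time and the inner index is the $m$ log-connectivity coordinates. Hence $U^\top$, a $B\times B$ matrix, acts on the block index and $I_m$ on the within-block index. This confirms that $Q=U^\top\otimes I_m$ is the temporal transform applied to every coordinate $j$ separately, exactly as the KLT of Proposition~\ref{prop:appendix_klt} acts on each $x^{(j)}$.

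The computation then proceeds in two steps.
\begin{enumerate}
\item By the transpose rule, $Q^\top=U\otimes I_m$.
\item Applying the mixed-product property twice, all dimensions being compatible ($B\times B$ in the first factor, $m\times m$ in the second), gives
\[
Q\Sigma Q^\top=(U^\top\otimes I_m)(K_T\otimes K_S)(U\otimes I_m)=(U^\top K_T U)\otimes(I_m K_S I_m).
\]
\end{enumerate}
Substituting $K_T=U\Lambda_T U^\top$ and using $U^\top U=I_B$ reduces the first factor to $\Lambda_T$, as in the proof of Proposition~\ref{prop:appendix_klt}. The second factor is trivially $K_S$, so $Q\Sigma Q^\top=\Lambda_T\otimes K_S$.

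Finally we note the interpretation, which is what feeds Theorem~\ref{thm:spectral_cfm_main}. Since $Q$ is orthogonal ($QQ^\top=U^\top U\otimes I_m=I_{Bm}$), the transformed trajectory $Qx$ has covariance $\Lambda_T\otimes K_S$. This is block diagonal, with blocks $\lambda_k K_S$, so distinct temporal modes are uncorrelated, while correlation across connectivity coordinates is left intact inside each mode.

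There is no real analytic obstacle here. The only point needing care is the bookkeeping in the first step: confirming that the vec convention places time as the outer Kronecker factor. If the ordering were reversed, the transform would have to be written $I_m\otimes U^\top$ instead, and the identity would otherwise be unchanged.
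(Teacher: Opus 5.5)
Your proof is correct and is essentially the paper's own argument. The paper also applies the mixed-product property to get $(U^\top K_T U)\otimes K_S$ and then uses orthogonality of $U$ to reduce the first factor to $\Lambda_T$. Your additional checks, on the $\operatorname{vec}(Z^\top)$ ordering and on $QQ^\top=I_{Bm}$, are accurate but are not needed for the identity itself.
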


\begin{proof}
Using the mixed-product property of the Kronecker product,
\begin{align}
    Q\Sigma Q^\top
    &=
    (U^\top\otimes I_m)
    (K_T\otimes K_S)
    (U\otimes I_m)\\
    &=
    (U^\top K_TU)
    \otimes
    K_S\\
    &=
    \Lambda_T\otimes K_S.
\end{align}
\end{proof}

Therefore the KLT removes cross-temporal second-order coupling while leaving
the within-mode connectivity covariance $K_S$ unchanged.

Replacing $U^\top$ with $C_B$ yields the approximate DCT analogue
\begin{equation}
    \Sigma_{\mathrm{DCT}}
    =
    (C_BK_TC_B^\top)\otimes K_S.
\end{equation}

\subsection{Gaussian conditional flow matching}

Assume
\begin{equation}
    x_0\sim\mathcal{N}(0,I),
    \qquad
    x_1\sim\mathcal{N}(0,\Sigma),
\end{equation}
with $x_0$ and $x_1$ independent, and define
\begin{equation}
    x_\tau
    =
    (1-\tau)x_0+\tau x_1.
\end{equation}

The conditional target velocity is
\begin{equation}
    u_\tau=x_1-x_0.
\end{equation}

For squared-error conditional flow matching, the population-optimal field is
\begin{equation}
    v_\tau^\star(x)
    =
    \mathbb{E}
    [
        u_\tau
        \mid
        x_\tau=x
    ].
\end{equation}

\begin{proposition}[Optimal Gaussian CFM field]
\label{prop:appendix_gaussian_cfm}
The optimal field is linear:
\begin{equation}
    v_\tau^\star(x)
    =
    A_\tau x,
\end{equation}
where
\begin{equation}
    A_\tau
    =
    \left[
        \tau\Sigma-(1-\tau)I
    \right]
    \left[
        (1-\tau)^2I+\tau^2\Sigma
    \right]^{-1}.
\end{equation}
\end{proposition}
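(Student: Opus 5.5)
The plan is to compute the conditional expectation $\mathbb{E}[u_\tau\mid x_\tau=x]$ directly, using the fact that $(x_\tau,u_\tau)$ is jointly Gaussian. Both are linear images of the independent Gaussian pair $(x_0,x_1)$:
\[
x_\tau=(1-\tau)x_0+\tau x_1,\qquad u_\tau=x_1-x_0 .
\]
Hence the pair is jointly Gaussian with zero mean. For zero-mean jointly Gaussian vectors, the conditional mean is linear:
\[
\mathbb{E}[u_\tau\mid x_\tau=x]=\operatorname{Cov}(u_\tau,x_\tau)\operatorname{Cov}(x_\tau)^{-1}x .
\]
This reduces the statement to two covariance computations and one invertibility check.

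\textbf{Covariance computations.} Independence of $x_0$ and $x_1$ kills all cross terms. This gives
\[
\operatorname{Cov}(x_\tau)=(1-\tau)^2I+\tau^2\Sigma .
\]
Using $\operatorname{Cov}(x_1)=\Sigma$ and $\operatorname{Cov}(x_0)=I$, the cross-covariance is
\[
\operatorname{Cov}(u_\tau,x_\tau)=\mathbb{E}[(x_1-x_0)((1-\tau)x_0+\tau x_1)^\top]=\tau\Sigma-(1-\tau)I .
\]
Substituting these into the linear-regression formula yields exactly $A_\tau x$ with the stated $A_\tau$.

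\textbf{Invertibility.} Since $\Sigma\succeq0$, the matrix $(1-\tau)^2I+\tau^2\Sigma$ is positive definite for $\tau\in[0,1)$. At $\tau=1$ it is invertible iff $\Sigma\succ0$, which is the setting of Theorem~\ref{thm:spectral_cfm_main}; I would note this boundary case explicitly. I would also remark that the two factors in $A_\tau$ are polynomials in $\Sigma$, so they commute and the order of the product is immaterial. This commutativity is what later gives the modewise diagonal form in the eigenbasis $V$.

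\textbf{Main obstacle.} The only substantive point is justifying that the population minimizer of the squared-error CFM loss is the conditional expectation, and that this conditional expectation is linear. The first claim is the standard $L^2$ projection property. For the second, I would invoke the Gaussian conditioning formula; alternatively, I would verify directly that the residual $u_\tau-A_\tau x_\tau$ is uncorrelated with $x_\tau$. Being jointly Gaussian with $x_\tau$, the residual is then independent of it, so its conditional mean is zero.
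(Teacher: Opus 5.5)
Your proposal is correct and takes essentially the same route as the paper: joint Gaussianity of $(x_\tau,u_\tau)$, the linear conditional-mean formula $\operatorname{Cov}(u_\tau,x_\tau)\operatorname{Cov}(x_\tau)^{-1}x$, and the same two covariance computations. Your added remarks are sound refinements that the paper leaves implicit: invertibility at $\tau=1$ requires $\Sigma\succ0$, and the two factors of $A_\tau$ commute because both are polynomials in $\Sigma$.
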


\begin{proof}
Since $(u_\tau,x_\tau)$ is jointly Gaussian,
\begin{equation}
    \mathbb{E}[u_\tau\mid x_\tau=x]
    =
    \operatorname{Cov}(u_\tau,x_\tau)
    \operatorname{Cov}(x_\tau)^{-1}x.
\end{equation}

Using independence of $x_0$ and $x_1$,
\begin{align}
    \operatorname{Cov}(x_\tau)
    &=
    (1-\tau)^2I+\tau^2\Sigma,
\end{align}
and
\begin{align}
    \operatorname{Cov}(u_\tau,x_\tau)
    &=
    \tau\Sigma-(1-\tau)I.
\end{align}

Substitution proves the result.
\end{proof}

Now let
\begin{equation}
    \Sigma
    =
    V\Lambda V^\top.
\end{equation}

\begin{corollary}[Mode-wise Gaussian flow]
\label{cor:appendix_modewise}
In coordinates
\begin{equation}
    y=V^\top x,
\end{equation}
the optimal field is diagonal:
\begin{equation}
    v_{\tau,k}^{\star}(y)
    =
    a_{\tau,k}y_k,
\end{equation}
with
\begin{equation}
    a_{\tau,k}
    =
    \frac{
        \tau\lambda_k-(1-\tau)
    }{
        (1-\tau)^2+\tau^2\lambda_k
    }.
\end{equation}
\end{corollary}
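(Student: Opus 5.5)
The plan is to obtain Corollary~\ref{cor:appendix_modewise} directly from Proposition~\ref{prop:appendix_gaussian_cfm} by an orthogonal change of variables. Since $V$ is orthogonal, I will substitute $\Sigma=V\Lambda V^\top$ and $I=VV^\top$ into both factors of $A_\tau$:
\begin{equation*}
\tau\Sigma-(1-\tau)I=V\bigl[\tau\Lambda-(1-\tau)I\bigr]V^\top,
\qquad
(1-\tau)^2I+\tau^2\Sigma=V\bigl[(1-\tau)^2I+\tau^2\Lambda\bigr]V^\top .
\end{equation*}
The second factor is a congruence of a diagonal matrix with strictly positive entries, because $(1-\tau)^2+\tau^2\lambda_k>0$ for $\lambda_k\ge 0$ and $\tau\in[0,1]$ (at least one of $1-\tau$ and $\tau$ is nonzero, and if $\tau=1$ we need $\lambda_k>0$; this is the edge case noted below). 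Its inverse is therefore $V[(1-\tau)^2I+\tau^2\Lambda]^{-1}V^\top$. Multiplying the two factors, the inner $V^\top V$ cancels, giving $A_\tau=V D_\tau V^\top$ with $D_\tau$ diagonal and entries $a_{\tau,k}$ exactly as stated.

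Next I transport the field to the new coordinates. Since $y=V^\top x$ is linear and time-independent, a velocity field in $x$ becomes $V^\top v$ in $y$. Equivalently, I can apply $V^\top$ to the conditional expectation: $\mathbb{E}[V^\top u_\tau\mid y_\tau=y]=V^\top\mathbb{E}[u_\tau\mid x_\tau=Vy]$, because conditioning on $y_\tau$ is the same as conditioning on $x_\tau$ when $V$ is invertible. This gives
\begin{equation*}
v^\star_\tau(y)=V^\top A_\tau V y=D_\tau y,
\end{equation*}
so the $k$-th component is $a_{\tau,k}y_k$. As a sanity check on the coefficient, I will redo the computation mode by mode. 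In $y$-coordinates the source is still $\mathcal N(0,I)$ and the target is $\mathcal N(0,\Lambda)$, with independent coordinates. The scalar Gaussian regression then gives $\operatorname{Cov}(u_k,y_k)/\operatorname{Var}(y_k)=(\tau\lambda_k-(1-\tau))/((1-\tau)^2+\tau^2\lambda_k)$. Because the pairs $(u_k,y_k)$ are independent across $k$, the conditional expectation of $u_k$ depends only on $y_k$, which gives diagonality directly.

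The only delicate point is invertibility of $\operatorname{Cov}(x_\tau)$ at $\tau=1$ when some $\lambda_k=0$. Proposition~\ref{prop:appendix_gaussian_cfm} already presupposes this inverse, so I will either restrict to $\tau\in[0,1)$ or assume $\Sigma\succ0$, noting that the endpoint is a null set for the objective. Everything else is routine linear algebra.
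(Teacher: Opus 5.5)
Your proposal is correct and follows the paper's proof: substitute $\Sigma=V\Lambda V^\top$ into $A_\tau$ from Proposition~\ref{prop:appendix_gaussian_cfm}, factor out $V$ and $V^\top$, and read off the diagonal entries $a_{\tau,k}$. Two steps you add are left implicit in the paper: the explicit change of coordinates $v^\star_\tau(y)=V^\top A_\tau V y$, and the remark on invertibility at $\tau=1$ when some $\lambda_k=0$.
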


\begin{proof}
Substituting
\begin{equation}
    \Sigma=V\Lambda V^\top
\end{equation}
into Proposition~\ref{prop:appendix_gaussian_cfm} gives
\begin{equation}
    A_\tau
    =
    V
    \left[
        \tau\Lambda-(1-\tau)I
    \right]
    \left[
        (1-\tau)^2I+\tau^2\Lambda
    \right]^{-1}
    V^\top.
\end{equation}
Since the middle term is diagonal, the result follows.
\end{proof}

This proves Theorem~\ref{thm:spectral_cfm_main}. In an exact covariance
eigenbasis, the optimal Gaussian conditional flow field requires no
cross-mode coupling.

\begin{corollary}[Mode-wise Gaussian flow under optimal-transport coupling]
\label{cor:appendix_ot_modewise}
Let $x_0\sim\mathcal{N}(0,I)$ and $x_1\sim\mathcal{N}(0,\Sigma)$ with
$\Sigma=V\Lambda V^\top\succ0$, and couple them by the quadratic-cost
optimal-transport map $x_1=\Sigma^{1/2}x_0$
\citep{dowson1982frechet,villani2009optimal}. Along
$x_\tau=(1-\tau)x_0+\tau x_1$, the conditional target $u_\tau=x_1-x_0$ is a
deterministic function of $x_\tau$, and in coordinates $y=V^\top x$
\begin{equation}
    v_{\tau,k}^{\star}(y)
    =
    \frac{\sqrt{\lambda_k}-1}{(1-\tau)+\tau\sqrt{\lambda_k}}\,y_k.
\end{equation}
\end{corollary}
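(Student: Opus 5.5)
Under the coupling $x_1=\Sigma^{1/2}x_0$ every point on the path is a fixed linear image of $x_0$. The plan is to invert that map, which makes the conditional target a deterministic function of $x_\tau$, and then diagonalize in the eigenbasis $V$.

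\emph{Step 1 (path as a linear image).} Write $\Sigma^{1/2}=V\Lambda^{1/2}V^\top$, which is the unique SPD square root since $\Sigma\succ0$. Substituting the coupling into the path gives
\[
x_\tau=\bigl[(1-\tau)I+\tau\Sigma^{1/2}\bigr]x_0=:M_\tau x_0,\qquad u_\tau=(\Sigma^{1/2}-I)x_0 .
\]

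\emph{Step 2 (invertibility of $M_\tau$).} Diagonalize $M_\tau=V\bigl[(1-\tau)I+\tau\Lambda^{1/2}\bigr]V^\top$. Its eigenvalues $(1-\tau)+\tau\sqrt{\lambda_k}$ are convex combinations of $1$ and $\sqrt{\lambda_k}>0$, so they are strictly positive for every $\tau\in[0,1]$. This positivity is the only point needing care, and it is exactly where the hypothesis $\Sigma\succ0$ is used; if some $\lambda_k=0$, invertibility would fail at $\tau=1$. With $M_\tau$ invertible we get $x_0=M_\tau^{-1}x_\tau$. Hence $u_\tau=(\Sigma^{1/2}-I)M_\tau^{-1}x_\tau$ is a deterministic function of $x_\tau$, which is the first claim of the corollary.

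\emph{Step 3 (the population optimum).} The squared-error minimizer is $\mathbb{E}[u_\tau\mid x_\tau]$. Because $u_\tau$ is a deterministic function of $x_\tau$, this conditional expectation equals that function itself, so
\[
v_\tau^\star(x)=(\Sigma^{1/2}-I)M_\tau^{-1}x .
\]

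\emph{Step 4 (diagonal form).} The factors $\Sigma^{1/2}-I$ and $M_\tau^{-1}$ are simultaneously diagonalized by $V$, so their product is
\[
V\,\operatorname{diag}\!\left(\frac{\sqrt{\lambda_k}-1}{(1-\tau)+\tau\sqrt{\lambda_k}}\right)V^\top .
\]
In the coordinates $y=V^\top x$ the field acts diagonally and gives the stated mode-wise formula. Under this coupling the optimum is exact, so no Gaussian conditioning formula as in Proposition~\ref{prop:appendix_gaussian_cfm} is needed.

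\emph{Optional remark.} One may also note that $x_1=\Sigma^{1/2}x_0$ is indeed the quadratic-cost optimal-transport map between the two Gaussians. This follows because it is the gradient of the convex function $\tfrac12 x^\top\Sigma^{1/2}x$, by Brenier's theorem. The corollary takes this coupling as given, however, so the derivation above does not rely on it.
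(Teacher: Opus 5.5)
Your proposal is correct and follows the paper's proof essentially step for step. The shared steps are: write $x_\tau=M_\tau x_0$ with $M_\tau=(1-\tau)I+\tau\Sigma^{1/2}$, use $\Sigma\succ0$ to invert $M_\tau$, express $u_\tau=(\Sigma^{1/2}-I)M_\tau^{-1}x_\tau$ so that the conditional expectation is this deterministic map, and read off the diagonal entries in the basis $V$; your extra remarks on why $\Sigma\succ0$ matters at $\tau=1$ and on the Brenier justification are accurate but not needed.
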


\begin{proof}
Let $M_\tau=(1-\tau)I+\tau\Sigma^{1/2}$. Since $\Sigma^{1/2}\succ0$, $M_\tau$
is positive definite for every $\tau\in[0,1]$, and $x_\tau=M_\tau x_0$. Hence
$x_0=M_\tau^{-1}x_\tau$ and
\begin{equation}
    u_\tau
    =
    (\Sigma^{1/2}-I)x_0
    =
    (\Sigma^{1/2}-I)M_\tau^{-1}x_\tau,
\end{equation}
so $\mathbb{E}[u_\tau\mid x_\tau=x]=(\Sigma^{1/2}-I)M_\tau^{-1}x$. Both factors
are diagonal in the basis $V$, with entries $\sqrt{\lambda_k}-1$ and
$(1-\tau)+\tau\sqrt{\lambda_k}$.
\end{proof}

Minibatch Sinkhorn coupling approximates this population coupling within each
class, with $\Sigma$ replaced by the class-conditional trajectory covariance.
Under both couplings the optimal field is diagonal in the KLT basis, so the
motivation for spectral coordinates does not depend on the coupling. The DCT is not generally the exact KLT, but whenever
\begin{equation}
    C_BK_TC_B^\top
\end{equation}
is more diagonal than $K_T$, the DCT reduces the amount of second-order
temporal interaction that must be represented by the velocity network.

\subsection{Residual coupling}

Write
\begin{equation}
    C_BK_TC_B^\top
    =
    D_T+E_T,
\end{equation}
where
\begin{equation}
    D_T
    =
    \operatorname{diag}
    \left(
        C_BK_TC_B^\top
    \right)
\end{equation}
and
\begin{equation}
    E_T
    =
    C_BK_TC_B^\top-D_T.
\end{equation}

Define
\begin{equation}
    \epsilon_{\mathrm{DCT}}
    =
    \frac{
        \|E_T\|_F
    }{
        \|C_BK_TC_B^\top\|_F
    }.
\end{equation}

For the exact KLT,
\begin{equation}
    \epsilon_{\mathrm{KLT}}=0.
\end{equation}

Hence $\epsilon_{\mathrm{DCT}}$ measures the residual second-order coupling
remaining after the fixed spectral transform.

\subsection{Orthogonal invariance of the GVD-CFM representation}

\begin{proposition}[DCT isometry]
\label{prop:appendix_dct_isometry}
For any trajectory $Z$,
\begin{equation}
    \|C_BZ\|_F
    =
    \|Z\|_F.
\end{equation}
Likewise, for any two velocity fields $V$ and $U$,
\begin{equation}
    \|C_B(V-U)\|_F^2
    =
    \|V-U\|_F^2.
\end{equation}
\end{proposition}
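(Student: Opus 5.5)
The plan is to write both Frobenius norms as traces and use orthogonality of $C_B$. For the first identity, take any $Z\in\mathbb{R}^{B\times m}$ and compute
\[
\|C_BZ\|_F^2=\operatorname{tr}\bigl((C_BZ)^\top C_BZ\bigr)=\operatorname{tr}\bigl(Z^\top C_B^\top C_B Z\bigr)=\operatorname{tr}(Z^\top Z)=\|Z\|_F^2 .
\]
This uses $C_B^\top C_B=I_B$, which is stated in Equation~\ref{eq:dct_representation}. Taking nonnegative square roots gives $\|C_BZ\|_F=\|Z\|_F$.

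For the second identity, note that $V-U$ is again an element of $\mathbb{R}^{B\times m}$. By linearity, $C_BV-C_BU=C_B(V-U)$, so applying the first identity with $Z=V-U$ and squaring gives the claim. I will add one remark: because the velocity network outputs DCT coordinates, the residual $v_\theta-u_\tau$ has the same squared norm whether it is read in DCT or temporal coordinates. This is the same computation already used in the proof of Proposition~\ref{prop:dct_le_rfm_equiv}.

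No step is a real obstacle. The only point that needs checking is that the orthonormality convention ($\alpha_0=1/\sqrt{B}$, $\alpha_k=\sqrt{2/B}$) really makes $C_B$ orthogonal, so that $C_B^\top C_B=I_B$ and not merely $C_B^\top C_B\propto I_B$. If this were in doubt, I would verify it directly with the cosine orthogonality sums $\sum_b\cos\bigl(\tfrac{\pi k}{B}(b+\tfrac12)\bigr)\cos\bigl(\tfrac{\pi \ell}{B}(b+\tfrac12)\bigr)=\tfrac{B}{2}\delta_{k\ell}$ for $k,\ell\ge1$, and $B$ when $k=\ell=0$.
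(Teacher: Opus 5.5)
Your proposal is correct and follows essentially the same route as the paper, which writes $\|C_BZ\|_F^2=\operatorname{tr}(Z^\top C_B^\top C_BZ)=\operatorname{tr}(Z^\top Z)$ using $C_B^\top C_B=I_B$ and treats the velocity-residual identity as the special case $Z=V-U$. Your optional cosine-sum check is also fine: summing over $b$ shows the rows of $C_B$ are orthonormal, i.e.\ $C_BC_B^\top=I_B$, which for a square matrix is equivalent to $C_B^\top C_B=I_B$.
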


\begin{proof}
Since
\begin{equation}
    C_B^\top C_B=I_B,
\end{equation}
\begin{align}
    \|C_BZ\|_F^2
    &=
    \operatorname{tr}
    \left(
        Z^\top C_B^\top C_BZ
    \right)\\
    &=
    \operatorname{tr}(Z^\top Z)\\
    &=
    \|Z\|_F^2.
\end{align}
\end{proof}

Therefore the DCT does not alter the Euclidean flow-matching metric or reduce
the objective by rescaling the trajectory. Its effect is purely a change of
coordinates.

\subsection{Energy organization}

The KLT additionally orders directions by variance. If
\begin{equation}
    \lambda_1\geq\lambda_2\geq\cdots\geq\lambda_B,
\end{equation}
then the first $r$ KLT modes retain the largest possible amount of expected
second-order energy among all orthonormal $r$-dimensional projections.

When the DCT approximates the KLT, temporally persistent structure is therefore
expected to concentrate toward lower-order cosine modes. GVD-CFM does not
truncate this representation: all $B$ modes are retained. Thus this is an
organization of temporal variation rather than a dimensionality-reduction
argument.

\section{Continuous-Grid Decoding as Band-Limited Interpolation}
\label{app:continuous_resolution_full}
\label{app:continuous_cosine_basis}

\subsection{Cosine-basis decoding of GVD trajectories}

A distinctive consequence of the DCT-coordinate representation is that a generated trajectory is not restricted to the temporal grid used during training. The generated DCT coefficients define a finite cosine trajectory over normalized time, and may therefore be evaluated at an arbitrary number of temporal locations without retraining the generative model. We test this with an intentionally low native resolution: the spectral-only GVD-CFM network is trained once at $B_{\mathrm{train}}=25$ windows, and the same generated coefficients are decoded at $M\in\{25,50,100,200,400\}$, with the real reference trajectories recomputed natively at each $M$. The protocol is given in Section~\ref{app:continuous_protocol}.

Moderate upsampling preserves both discriminative and temporal structure. A $4\times$ denser grid ($M=100$) changes generated-to-real AUC only from $0.803$ to $0.802$, and at $M=200$ it remains $0.794$ with lag-ACF agreement $0.959$. Every generated matrix stays SPD at every tested resolution. Evaluating at $M>B$ is interpolation of a finite-bandwidth function: it does not increase the generated temporal bandwidth. Table~\ref{tab:continuous_resolution_macro_full} reports the full comparison and the finite-bandwidth limit that appears at $M=400$.

The composite diffeomorphism introduced in Section~\ref{sec:chart} maps a GVD trajectory
to the DCT-coordinate representation
\[
    \bar{Z}
    =
    C_B \widetilde{Z}
    \in \mathbb{R}^{B\times m},
\]
where $B$ is the temporal resolution of the GVD trajectory,
$m=d(d+1)/2$, $\widetilde{Z}$ denotes the standardized log-Euclidean
trajectory, and $C_B$ is the orthonormal DCT-II matrix. In the standard
GVD-CFM decoder, a generated coefficient tensor $\widehat{\bar{Z}}$ is
mapped back to the original $B$ temporal locations using the inverse DCT,
\[
    \widehat{\widetilde{Z}}
    =
    C_B^{\top}\widehat{\bar{Z}}.
\]
We additionally exploit the fact that the inverse DCT is an evaluation of a
finite cosine basis. Rather than restricting decoding to the original
$B$ DCT sampling locations, the generated coefficients can therefore be
interpreted as defining a continuous trajectory and evaluated on an
arbitrary temporal grid.

\paragraph{Continuous cosine representation.}
Let
\[
    \widehat{\bar{Z}}
    =
    \begin{bmatrix}
        \widehat{\mathbf{c}}_0^{\top}\\
        \vdots\\
        \widehat{\mathbf{c}}_{B-1}^{\top}
    \end{bmatrix},
    \qquad
    \widehat{\mathbf{c}}_k\in\mathbb{R}^{m},
\]
denote the DCT-coordinate trajectory generated by GVD-CFM. We associate
these coefficients with the continuous standardized tangent trajectory
\begin{equation}
    \widehat{\widetilde{\mathbf{z}}}(\xi)
    =
    \sum_{k=0}^{B-1}
    \alpha_k\,
    \widehat{\mathbf{c}}_k
    \cos(\pi k\xi),
    \qquad
    \xi\in[0,1],
    \label{eq:continuous_gvd_cosine}
\end{equation}
where $\xi$ denotes normalized EEG trajectory time and is distinct from the
flow-time variable $\tau$. The orthonormal DCT-II normalization is
\begin{equation}
    \alpha_k
    =
    \begin{cases}
        B^{-1/2}, & k=0,\\[3pt]
        \sqrt{2/B}, & k>0.
    \end{cases}
    \label{eq:continuous_gvd_dct_norm}
\end{equation}
Hence, GVD-CFM generates a finite set of frequency coefficients, but those
coefficients define a function over continuous trajectory time. The output
temporal resolution is consequently determined during decoding rather than
being restricted to the resolution at which the coefficient representation
was generated.

\paragraph{Evaluation at an arbitrary temporal resolution.}
Suppose that the generated trajectory is to be evaluated at $M$ temporal
locations. We use the centered grid
\begin{equation}
    \xi_j^{(M)}
    =
    \frac{j+\frac{1}{2}}{M},
    \qquad
    j=0,\ldots,M-1.
    \label{eq:continuous_gvd_grid}
\end{equation}
Evaluating Equation~\ref{eq:continuous_gvd_cosine} on this grid gives
\begin{equation}
    \widehat{\widetilde{\mathbf{z}}}^{(M)}_j
    =
    \sum_{k=0}^{B-1}
    \alpha_k\,
    \widehat{\mathbf{c}}_k
    \cos\left[
        \pi k
        \frac{j+\frac{1}{2}}{M}
    \right].
    \label{eq:continuous_gvd_eval}
\end{equation}
Equivalently, define the continuous DCT synthesis matrix
\begin{equation}
    A_{M\leftarrow B}
    \in
    \mathbb{R}^{M\times B},
    \qquad
    \left[A_{M\leftarrow B}\right]_{j,k}
    =
    \alpha_k
    \cos\left[
        \pi k
        \frac{j+\frac{1}{2}}{M}
    \right].
    \label{eq:continuous_gvd_synthesis_matrix}
\end{equation}
The complete $M$-point standardized trajectory is then obtained by the
single matrix operation
\begin{equation}
    \boxed{
        \widehat{\widetilde{Z}}^{(M)}
        =
        A_{M\leftarrow B}\widehat{\bar{Z}}
    }.
    \label{eq:continuous_gvd_matrix_decode}
\end{equation}
Thus,
\[
    \widehat{\bar{Z}}\in\mathbb{R}^{B\times m},
    \qquad
    \widehat{\widetilde{Z}}^{(M)}
    \in\mathbb{R}^{M\times m},
\]
and $M$ does not need to equal $B$. The same generated GVD trajectory can
therefore be sampled at its native resolution, at a denser temporal
resolution, or at a coarser temporal resolution without retraining
GVD-CFM.

\paragraph{Exact agreement with the original GVD-CFM decoder.}
The continuous formulation is a strict extension of the inverse-DCT decoder
already used by GVD-CFM. When $M=B$,
\[
    \xi_j^{(B)}
    =
    \frac{j+\frac{1}{2}}{B},
\]
and therefore
\begin{equation}
    \left[A_{B\leftarrow B}\right]_{j,k}
    =
    \alpha_k
    \cos\left[
        \frac{\pi}{B}
        \left(j+\frac{1}{2}\right)k
    \right].
\end{equation}
This is exactly the orthonormal inverse DCT-II basis, giving
\begin{equation}
    A_{B\leftarrow B}
    =
    C_B^{\top}.
    \label{eq:continuous_gvd_idct_equivalence}
\end{equation}
Consequently,
\begin{equation}
    \widehat{\widetilde{Z}}^{(B)}
    =
    C_B^{\top}\widehat{\bar{Z}},
\end{equation}
which recovers the original GVD-CFM decoding operation exactly. Continuous
basis decoding therefore does not alter the learned representation or
introduce an additional generative model; it generalizes the temporal
evaluation of the existing DCT representation.

\paragraph{Mapping the continuous trajectory back to the SPD manifold.}
After continuous basis evaluation, the same inverse log-Euclidean mapping
used by GVD-CFM is applied independently at each temporal location. Using
the training-set coordinate statistics $\mu,\sigma\in\mathbb{R}^{m}$,
\begin{equation}
    \widehat{\mathbf{z}}^{(M)}_j
    =
    \widehat{\widetilde{\mathbf{z}}}^{(M)}_j
    \odot \sigma
    +
    \mu.
    \label{eq:continuous_gvd_unstandardize}
\end{equation}
The corresponding symmetric tangent matrix is
\begin{equation}
    \widehat{X}^{(M)}_j
    =
    \operatorname{svec}^{-1}
    \left(
        \widehat{\mathbf{z}}^{(M)}_j
    \right),
\end{equation}
and the GVD matrix is reconstructed as
\begin{equation}
    \widehat{\Delta}^{(M)}_j
    =
    \exp\left(
        \widehat{X}^{(M)}_j
    \right),
    \qquad
    j=0,\ldots,M-1.
    \label{eq:continuous_gvd_spd}
\end{equation}
Since $\widehat{X}^{(M)}_j$ is symmetric, its matrix exponential is strictly
positive definite. Hence,
\begin{equation}
    \widehat{\Delta}^{(M)}_j
    \in
    \mathbb{S}_{++}^{d}
    \qquad
    \forall j,
\end{equation}
so changing the temporal evaluation resolution does not compromise the
manifold constraint.

\paragraph{Interpretation.}
The DCT-token representation used by GVD-CFM can therefore be viewed not
only as a convenient Euclidean coordinate system for a discrete product-SPD
trajectory, but also as the coefficient representation of a continuous
cosine trajectory. All $B$ generated modes are retained; no spectral
truncation is introduced. Continuous decoding changes only the set of
temporal locations at which this same generated trajectory is evaluated.

This separates \emph{generative resolution} from \emph{sampling resolution}:
GVD-CFM learns the joint distribution of $B$ cosine modes, while the
resulting trajectory may be evaluated at any desired number $M$ of temporal
locations. In particular, choosing $M>B$ produces a denser realization of
the generated dynamic connectivity trajectory while preserving the same
underlying DCT coefficients and the SPD geometry of every reconstructed
GVD matrix.

\subsection{Method}
\label{app:continuous_protocol}

The DCT representation used by GVD-CFM permits a generated coefficient trajectory to be evaluated on a temporal grid different from the one used during training. We evaluate this property directly by deliberately training the generative model at a low temporal resolution and decoding its outputs on progressively denser grids.

All experiments in this section use
\[
B_{\mathrm{train}}=25.
\]
For a generated DCT-coordinate trajectory
\[
\widehat{\bar Z}
=
\begin{bmatrix}
\widehat c_0^\top\\
\vdots\\
\widehat c_{B-1}^\top
\end{bmatrix}
\in\mathbb{R}^{B\times m},
\qquad B=25,
\]
the continuous standardized tangent trajectory is
\[
\widehat z(\xi)
=
\sum_{k=0}^{B-1}
\alpha_k\widehat c_k\cos(\pi k\xi),
\qquad
\xi\in[0,1],
\]
where
\[
\alpha_k=
\begin{cases}
B^{-1/2}, & k=0,\\[2mm]
\sqrt{2/B}, & k>0.
\end{cases}
\]
For a target resolution $M$, we evaluate this function on the centered grid
\[
\xi_j^{(M)}
=
\frac{j+\frac12}{M},
\qquad
j=0,\ldots,M-1.
\]
Equivalently,
\[
\widehat Z^{(M)}
=
A_{M\leftarrow B}\widehat{\bar Z},
\]
with
\[
[A_{M\leftarrow B}]_{j,k}
=
\alpha_k
\cos\left[
\pi k\frac{j+\frac12}{M}
\right].
\]
The resulting standardized tangent coordinates are inverse-standardized,
\[
\widehat z_j^{(M)}
=
\widehat{\widetilde z}_j^{(M)}\odot\sigma+\mu,
\]
mapped back to symmetric matrices using $\operatorname{svec}^{-1}$,
\[
\widehat X_j^{(M)}
=
\operatorname{svec}^{-1}
\left(
\widehat z_j^{(M)}
\right),
\]
and finally reconstructed on the SPD manifold through
\[
\widehat\Delta_j^{(M)}
=
\exp\left(
\widehat X_j^{(M)}
\right).
\]
Because $\widehat X_j^{(M)}$ is symmetric,
\[
\widehat\Delta_j^{(M)}
\in\mathcal{S}_{++}^d
\qquad
\forall j,M.
\]
We evaluate
\[
M\in\{25,50,100,200,400\},
\]
corresponding to $1\times$, $2\times$, $4\times$, $8\times$, and $16\times$ the training-grid density.

A single generated coefficient tensor $\widehat{\bar Z}$ is reused across all values of $M$. Differences across resolutions therefore arise only from evaluating the same learned continuous cosine trajectory on different temporal grids. No GVD-CFM model is retrained for any target resolution.

For each $M$, the corresponding real reference trajectories are recomputed directly from raw EEG using $M$ GVD windows. The experiment therefore compares
\[
\text{GVD-CFM trained at }B=25
\quad\longrightarrow\quad
\text{continuous decode at }M
\]
against
\[
\text{real EEG}
\quad\longrightarrow\quad
\text{native GVD construction at }M.
\]
The same pooled cross-session/cross-run train--test protocol used in the main benchmark is retained. Results are reported for BNCI2014-001, BNCI2014-002, BNCI2015-001, Shin2017A, and Zhou2016.

\subsection{Dataset-balanced results}

\begin{table*}[t]
\centering
\caption{
Dataset-balanced continuous-resolution results on BNCI2014\_001, BNCI2014\_002, BNCI2015\_001, Shin2017A, and Zhou2016.
A single spectral-only GVD-CFM is trained at $B_{\mathrm{train}}=25$ for each dataset and decoded at the indicated target resolution $M$ without retraining.
}
\label{tab:continuous_resolution_macro_full}
\resizebox{\textwidth}{!}{
\begin{tabular}{c|ccccccccc}
\toprule
$M$
& Rel.\ Fr\'echet
& Temp.\ corr.
& Lag-ACF
& Dyn.\ energy
& SPD
& Gen$\rightarrow$Real AUC
& Gen$\rightarrow$Real F1
& Real$\rightarrow$Real AUC
& Real$\rightarrow$Real F1 \\
\midrule
25
& 1.098
& 0.944
& 0.995
& 0.978
& 1.000
& 0.803
& 0.739
& 0.838
& 0.764 \\
50
& 0.983
& 0.853
& 0.960
& 0.595
& 1.000
& 0.804
& 0.739
& 0.838
& 0.761 \\
100
& 0.919
& 0.860
& 0.960
& 0.350
& 1.000
& 0.802
& 0.734
& 0.831
& 0.756 \\
200
& 0.880
& 0.780
& 0.959
& 0.199
& 1.000
& 0.794
& 0.715
& 0.832
& 0.760 \\
400
& 0.872
& 0.506
& 0.822
& 0.105
& 1.000
& 0.785
& 0.647
& 0.810
& 0.732 \\
\bottomrule
\end{tabular}}
\end{table*}

The principal observation is that substantial temporal densification remains possible without retraining. Relative to native-resolution decoding,
\[
\mathrm{AUC}_{25\rightarrow25}=0.803,
\qquad
\mathrm{F1}_{25\rightarrow25}=0.739,
\]
while $4\times$ denser decoding gives
\[
\mathrm{AUC}_{25\rightarrow100}=0.802,
\qquad
\mathrm{F1}_{25\rightarrow100}=0.734.
\]
Thus,
\[
\Delta\mathrm{AUC}=-0.001,
\qquad
\Delta\mathrm{F1}=-0.005.
\]
At the same resolution, temporal-correlation agreement remains $0.860$ and lag-ACF agreement remains $0.960$.

Even at $M=200$, corresponding to $8\times$ denser evaluation, generated-to-real AUC remains $0.794$ and lag-ACF agreement remains $0.959$. The corresponding temporal-correlation agreement is $0.780$.

The real-to-real classifier provides useful context for these changes. At $M=100$, its dataset-balanced AUC is $0.831$, compared with $0.802$ for generated-to-real classification, a gap of
\[
0.029.
\]
At $M=200$, the corresponding values are $0.832$ and $0.794$, respectively.

All generated matrices remain SPD at every tested resolution:
\[
\text{SPD validity}=1.000
\]
for every dataset and every $M$.

At the highest tested resolution, $M=400$, generated-to-real AUC remains $0.785$, whereas temporal-correlation agreement decreases to $0.506$. This is consistent with the finite-bandwidth limitation of evaluating a trajectory represented by only $25$ learned cosine modes on a substantially denser temporal grid.

\subsection{Per-dataset results}

\begin{table*}[t]
\centering
\caption{
Per-dataset continuous-resolution results for a single spectral-only GVD-CFM trained at $B_{\mathrm{train}}=25$.
The same generated DCT coefficient tensor is evaluated at $M\in\{25,50,100,200,400\}$ without retraining, while the corresponding real GVD trajectories are recomputed natively at each target resolution.
}
\label{tab:continuous_resolution_per_dataset}
\resizebox{\textwidth}{!}{
\begin{tabular}{ll|cccccccc}
\toprule
Dataset
& $M$
& Rel.\ Fr\'echet
& Temp.\ corr.
& Lag-ACF
& Dyn.\ energy
& Gen$\rightarrow$Real AUC
& Gen$\rightarrow$Real F1
& Real$\rightarrow$Real AUC
& Real$\rightarrow$Real F1 \\
\midrule

BNCI2014-001
& 25  & 1.163 & 0.977 & 0.995 & 1.041 & 0.814 & 0.724 & 0.863 & 0.775 \\
& 50  & 1.021 & 0.875 & 0.977 & 0.622 & 0.824 & 0.748 & 0.866 & 0.784 \\
& 100 & 0.979 & 0.890 & 0.958 & 0.360 & 0.816 & 0.727 & 0.857 & 0.783 \\
& 200 & 0.958 & 0.778 & 0.959 & 0.195 & 0.806 & 0.704 & 0.821 & 0.739 \\
& 400 & 0.943 & 0.440 & 0.558 & 0.103 & 0.778 & 0.497 & 0.797 & 0.710 \\
\midrule

BNCI2014-002
& 25  & 1.008 & 0.911 & 0.999 & 0.964 & 0.791 & 0.746 & 0.797 & 0.721 \\
& 50  & 0.841 & 0.822 & 0.984 & 0.560 & 0.792 & 0.742 & 0.802 & 0.714 \\
& 100 & 0.790 & 0.778 & 0.972 & 0.319 & 0.787 & 0.730 & 0.799 & 0.721 \\
& 200 & 0.768 & 0.643 & 0.961 & 0.184 & 0.772 & 0.714 & 0.841 & 0.771 \\
& 400 & 0.789 & 0.377 & 0.884 & 0.097 & 0.769 & 0.736 & 0.792 & 0.717 \\
\midrule

BNCI2015-001
& 25  & 1.070 & 0.963 & 0.998 & 0.882 & 0.766 & 0.697 & 0.793 & 0.714 \\
& 50  & 1.269 & 0.889 & 0.990 & 0.526 & 0.764 & 0.694 & 0.794 & 0.723 \\
& 100 & 1.246 & 0.894 & 0.988 & 0.291 & 0.764 & 0.700 & 0.779 & 0.701 \\
& 200 & 1.183 & 0.806 & 0.988 & 0.160 & 0.756 & 0.682 & 0.763 & 0.693 \\
& 400 & 1.145 & 0.439 & 0.923 & 0.080 & 0.744 & 0.681 & 0.753 & 0.682 \\
\midrule

Shin2017A
& 25  & 1.199 & 0.945 & 0.984 & 1.219 & 0.673 & 0.621 & 0.759 & 0.691 \\
& 50  & 0.898 & 0.840 & 0.896 & 0.812 & 0.675 & 0.625 & 0.752 & 0.676 \\
& 100 & 0.771 & 0.889 & 0.936 & 0.532 & 0.673 & 0.631 & 0.747 & 0.672 \\
& 200 & 0.733 & 0.886 & 0.942 & 0.325 & 0.671 & 0.628 & 0.767 & 0.690 \\
& 400 & 0.723 & 0.843 & 0.829 & 0.187 & 0.677 & 0.553 & 0.742 & 0.660 \\
\midrule

Zhou2016
& 25  & 1.052 & 0.926 & 1.000 & 0.783 & 0.968 & 0.907 & 0.977 & 0.917 \\
& 50  & 0.888 & 0.838 & 0.953 & 0.451 & 0.965 & 0.889 & 0.973 & 0.910 \\
& 100 & 0.806 & 0.850 & 0.946 & 0.245 & 0.965 & 0.882 & 0.971 & 0.907 \\
& 200 & 0.756 & 0.787 & 0.945 & 0.130 & 0.966 & 0.850 & 0.969 & 0.902 \\
& 400 & 0.761 & 0.430 & 0.918 & 0.062 & 0.955 & 0.772 & 0.963 & 0.887 \\
\bottomrule
\end{tabular}}
\end{table*}

The per-dataset results show that resolution transfer is not driven by a single dataset. BNCI2014-001 preserves generated-to-real AUC from $0.814$ at $M=25$ to $0.816$ at $M=100$ and $0.806$ at $M=200$. BNCI2014-002 similarly remains near $0.79$ through $M=100$, while BNCI2015-001 changes only from $0.766$ at $M=25$ to $0.764$ at $M=100$.

Shin2017A also shows stable discriminative performance under substantial densification. Its generated-to-real AUC is
\[
0.673,\;0.675,\;0.673,\;0.671,\;0.677
\]
at $M=25,50,100,200,400$, respectively. Thus, the classifier-level utility of the generated trajectories is essentially unchanged across the entire range of target resolutions. Temporal-correlation agreement is $0.945$ at native resolution, $0.889$ at $M=100$, and $0.886$ at $M=200$. Lag-ACF agreement remains $0.936$ at $M=100$ and $0.942$ at $M=200$.

Zhou2016 shows particularly strong preservation of discriminative structure:
\[
0.968,\;0.965,\;0.965,\;0.966
\]
generated-to-real AUC at $M=25,50,100,200$, respectively, and remains at $0.955$ even at $M=400$.

Across the five datasets, the effect of increasing $M$ is therefore more apparent in the temporal-dynamics diagnostics than in generated-to-real classification. Dataset-balanced generated-to-real AUC changes only from $0.803$ at $M=25$ to $0.802$ at $M=100$, $0.794$ at $M=200$, and $0.785$ at $M=400$. In contrast, the dataset-balanced dynamic-energy ratio decreases from $0.978$ at $M=25$ to $0.350$ at $M=100$, $0.199$ at $M=200$, and $0.105$ at $M=400$.

This behavior is expected from finite-bandwidth cosine decoding. Increasing $M$ evaluates the same $B_{\mathrm{train}}=25$ learned cosine modes on a denser temporal grid; it does not introduce additional high-frequency modes. Consequently, continuous-grid decoding can preserve class-discriminative and broad temporal structure under substantial densification, while increasingly fine-scale dynamic amplitudes cannot match native high-resolution GVD trajectories indefinitely.

At $M=400$, corresponding to $16\times$ the training-grid density, the limitation is visible in the dataset-balanced temporal-correlation agreement of $0.506$ and dynamic-energy ratio of $0.105$. Nevertheless, generated-to-real AUC remains $0.785$, lag-ACF agreement remains $0.822$, and every generated matrix remains SPD. This supports interpreting the procedure as continuous-resolution evaluation of a finite-bandwidth trajectory rather than recovery of temporal frequencies absent from the original $25$-mode representation.
\subsection{Interpretation and limitation}

The experiment demonstrates that output temporal resolution is not fixed by the grid used during GVD-CFM training. In particular, a model trained on only $25$ temporal positions can be evaluated at $100$ or $200$ positions while retaining much of its discriminative and temporal structure.

However, increasing $M$ does not create additional temporal bandwidth. The learned trajectory contains only the $B=25$ cosine modes generated by the model:
\[
\widehat z(\xi)
=
\sum_{k=0}^{24}
\alpha_k\widehat c_k\cos(\pi k\xi).
\]
Hence, evaluating this function on a denser grid provides a finer sampling of the same finite-dimensional trajectory rather than synthesizing additional high-frequency modes.

This distinction is visible at $M=400$. Although generated-to-real AUC remains $0.785$ and SPD validity remains perfect, temporal-correlation agreement falls to $0.506$. This regime corresponds to $16\times$ denser temporal evaluation than training and exposes the finite-bandwidth limitation of the $25$-mode representation.

We therefore use the term \emph{continuous-resolution decoding} to mean that a generated GVD trajectory can be evaluated on arbitrary temporal grids without retraining, while explicitly not claiming that arbitrarily dense evaluation recovers temporal frequencies absent from the learned coefficient representation.

\section{Regional and Physiological Plausibility of Generated GVD Dynamics}
\label{app:regional_plausibility}

Here, we ask whether GVD-CFM reproduces physiologically structured spatiotemporal connectivity rather than only aggregate statistics. We performed the analysis on BNCI2014\_001 by grouping its 22 scalp electrodes into four broad regions: Frontal/FC, Central, Centro-parietal and Parietal/Occipital. For each pair of regions we computed the mean magnitude of the corresponding GVD edges for held-out real and generated trials, and followed representative regional interactions across all $B=100$ windows (Figure~\ref{fig:regional_gvd_dynamics} in the main text). Finally, we compared a generated single-trial sequence with its nearest held-out real trial.

\begin{figure}[t]
\centering
\includegraphics[width=0.82\textwidth]{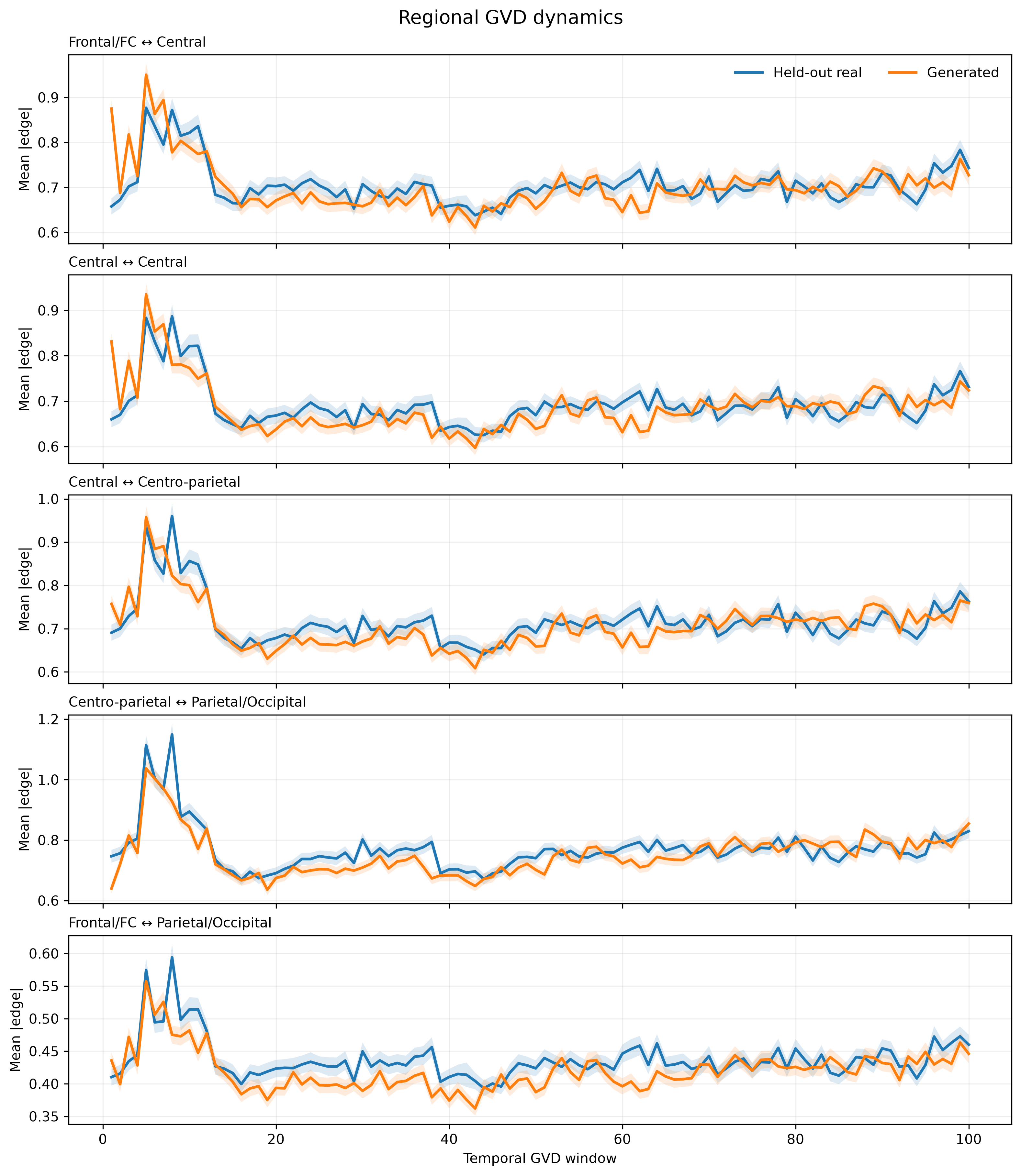}
\caption{\textbf{Regional GVD dynamics on BNCI2014\_001.} Mean GVD edge magnitude of five representative regional interactions over the $B=100$ windows for held-out real and GVD-CFM trajectories. The generated trajectories reproduce the early transient, the fall to a lower connectivity regime, the mid-trial reduction and the gradual recovery of the real data, and these changes are aligned across regions. Shaded regions show the variability around each mean trajectory.}
\label{fig:regional_gvd_dynamics}
\end{figure}
\paragraph{Regional connectivity organization.}
Figure~\ref{fig:regional_gvd_matrix} shows that the generated data preserve the large-scale regional organization of held-out EEG. Mean Frontal/FC connectivity is $0.847$ in held-out real data and $0.835$ in generated data, and Parietal/Occipital connectivity is $0.894$ and $0.880$. Central--Centro-parietal connectivity is similarly preserved ($0.717$ real against $0.706$ generated), while the weaker Frontal/FC--Parietal/Occipital interaction remains weak ($0.437$ against $0.420$). GVD-CFM therefore preserves the relative spatial organization of interactions across the scalp, with strong local and adjacent-region coupling and weaker long-range frontal-to-posterior coupling. Motor-imagery EEG is classically associated with strong modulation of sensorimotor activity over central and neighboring centro-parietal regions, with distributed involvement of frontal and posterior areas \citep{pfurtscheller1999erd}. The preserved central and centro-parietal structure is consistent with this.

\paragraph{Temporal regional dynamics.}
Across the Frontal/FC--Central, Central--Central, Central--Centro-parietal, Centro-parietal--Parietal/Occipital and Frontal/FC--Parietal/Occipital interactions of Figure~\ref{fig:regional_gvd_dynamics}, real and generated data both show a pronounced early transient followed by a fall to a lower connectivity regime. Around the middle of the trial several interactions fall further, after which connectivity gradually recovers and continues to fluctuate. These changes occur together across related regions; the generated interactions do not fluctuate independently around a static mean. Such non-stationarity is compatible with the sequence of preparation, imagery and recovery within a motor-imagery trial.

\paragraph{Single-trial connectivity reconfiguration.}
Population-level agreement could in principle arise from a generator that learned only an average connectivity template. Figure~\ref{fig:single_trial_matrix_sequence} therefore compares one generated trial with its nearest held-out real trial over ten consecutive intervals. The real sequence reconfigures substantially over time, with periods of strong, spatially distributed connectivity alternating with weaker or more concentrated interactions. The generated sequence shows a comparable degree of restructuring and passes through a series of distinct channel-level patterns rather than holding a fixed matrix. The two sequences are not identical, which is consistent with generation of a new trajectory from the learned distribution rather than reproduction of a training trial.

\begin{figure*}[t]
    \centering
    \includegraphics[width=0.99\textwidth]{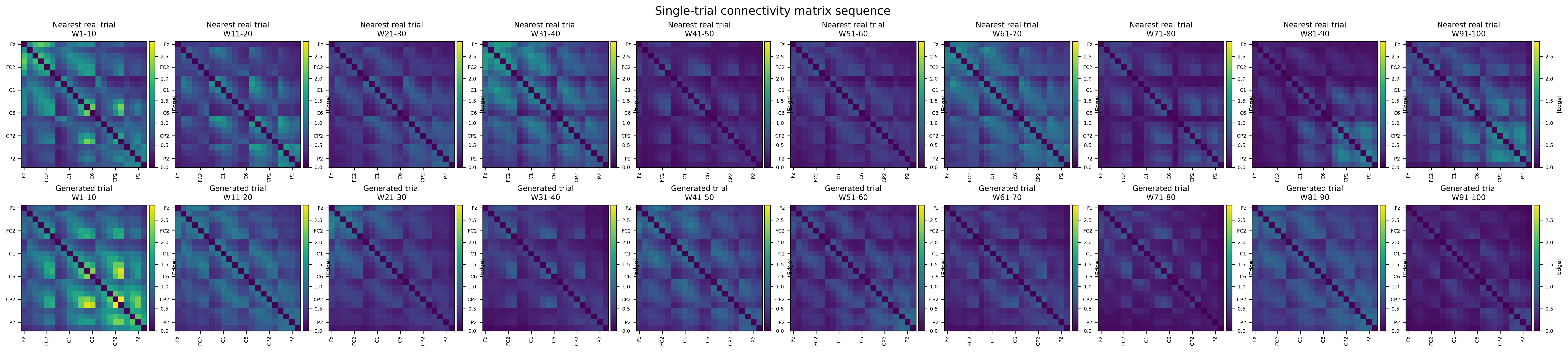}
    \caption{
    \textbf{Single-trial GVD connectivity-matrix sequence on BNCI2014\_001.}
    One generated trial and its nearest held-out real trial are shown over ten
    consecutive temporal intervals. Both sequences exhibit substantial
    time-dependent reconfiguration of channel-level connectivity rather than
    a static graph template.
    }
    \label{fig:single_trial_matrix_sequence}
\end{figure*}

\paragraph{Node-strength dynamics.}
The scalp node-strength maps of Figure~\ref{fig:single_trial_scalp_sequence} give a more interpretable view of the same trials. In the held-out trial, different intervals emphasize central, centro-parietal, posterior or more broadly distributed patterns. The generated trial shows similarly heterogeneous organization: high-strength regions shift across windows rather than remaining locked to one set of electrodes, prominent modulation recurs around central and centro-parietal locations, and frontal and posterior contributions vary over time. The model is therefore not scaling a fixed graph uniformly; the relative contribution of electrode groups changes over time.

\begin{figure*}[t]
    \centering
    \includegraphics[width=0.99\textwidth]{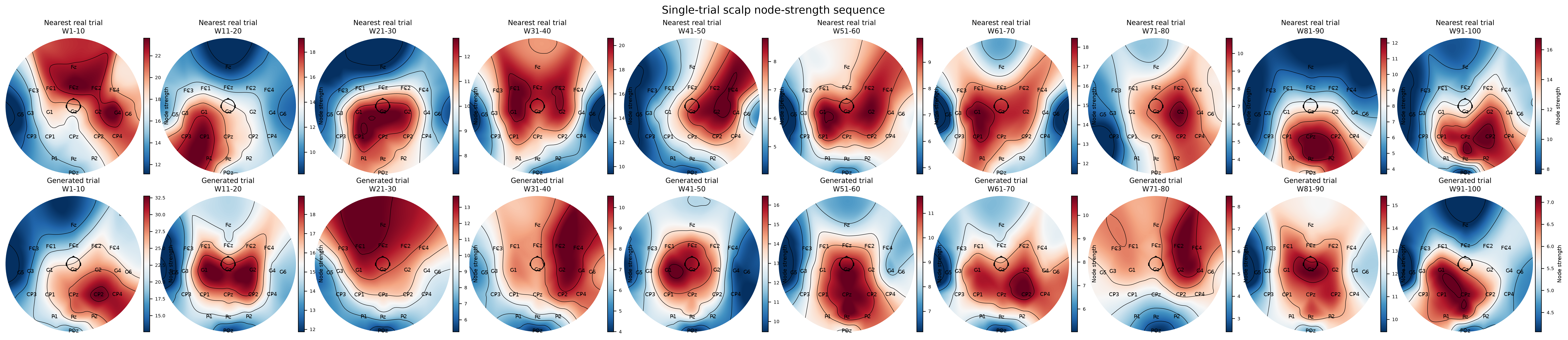}
    \caption{
    \textbf{Single-trial scalp node-strength sequence on BNCI2014\_001.}
    Scalp node-strength maps for the same generated trial and nearest
    held-out real trial across ten temporal intervals. Both sequences display
    spatially heterogeneous and time-varying patterns involving frontal,
    central, centro-parietal, and posterior electrodes. The generated sequence
    preserves broad sensorimotor-centered spatial dynamics without reproducing
    the real trial exactly.
    }
    \label{fig:single_trial_scalp_sequence}
\end{figure*}

\paragraph{Interpretation.}
These analyses show agreement at different levels: the time-averaged regional connectivity, its evolution over the trial, the reconfiguration of single-trial connectivity matrices, and the spatial distribution of node strength over time, which repeatedly involves the central and centro-parietal scalp regions relevant to motor imagery. We interpret this as evidence of regional physiological plausibility. 

\section{Additional Ablations}
\label{app:additional_ablations}

\subsection{Efficiency}

\begin{figure}[t]
    \centering
    \includegraphics[width=\linewidth]{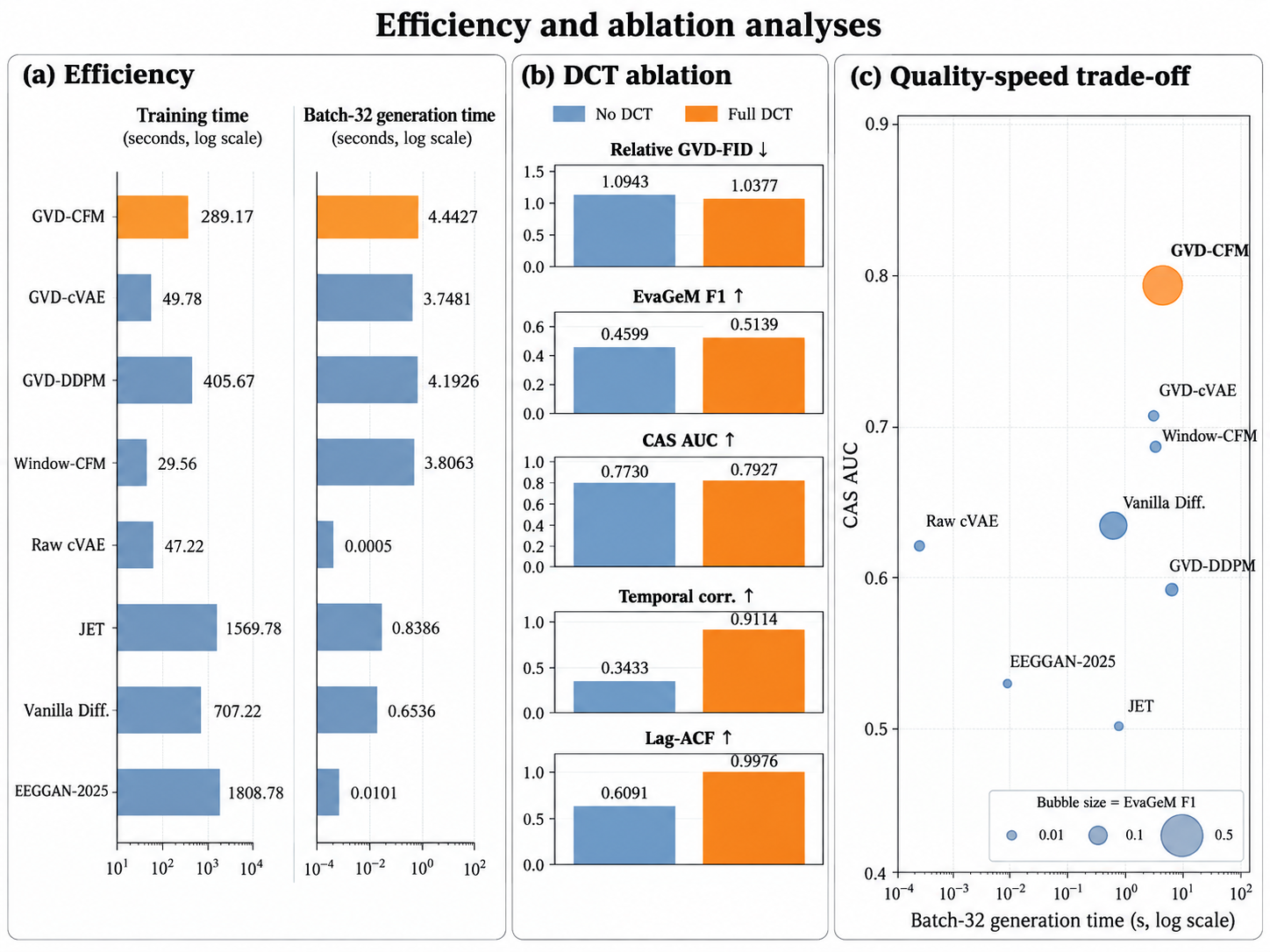}
    \caption{
    Efficiency and ablation analysis of GVD-CFM.
    \textbf{(a)} Average training time and batch-32 generation latency across five datasets and three generator seeds, shown on logarithmic axes. GVD-CFM is substantially faster to train than the large raw-signal generative baselines while retaining practical sampling cost.
    \textbf{(b)} DCT ablation with the spectral-only velocity network, comparing the full orthonormal DCT-II representation against direct temporal log-svec coordinates. The DCT improves relative GVD-FID, EvaGeM F1, CAS AUC, temporal-correlation agreement, and lag-autocorrelation agreement, with the largest gains observed in temporal structure.
    \textbf{(c)} Quality--speed trade-off across generators. The horizontal axis shows batch-32 generation time, the vertical axis shows CAS AUC, and marker size is proportional to EvaGeM F1. GVD-CFM combines strong downstream utility with substantially stronger trajectory-level distributional overlap than competing methods at practical generation cost.
    }
    \label{fig:efficiency_ablation}
\end{figure}

Figure~\ref{fig:efficiency_ablation} summarizes training and generation cost. For GVD-CFM, 1000 training epochs take $304$~s on average across the five main datasets and three seeds, and generating a batch of 32 trajectories with 50 RK4 steps takes $3.92$~s (Appendix~\ref{app:experimental_details}).
\subsection{Temporal-Resolution Audit of Stable-Support Information}
\label{app:support_resolution_audit}

\begin{figure}[t]
    \centering
    \includegraphics[width=\columnwidth,keepaspectratio]{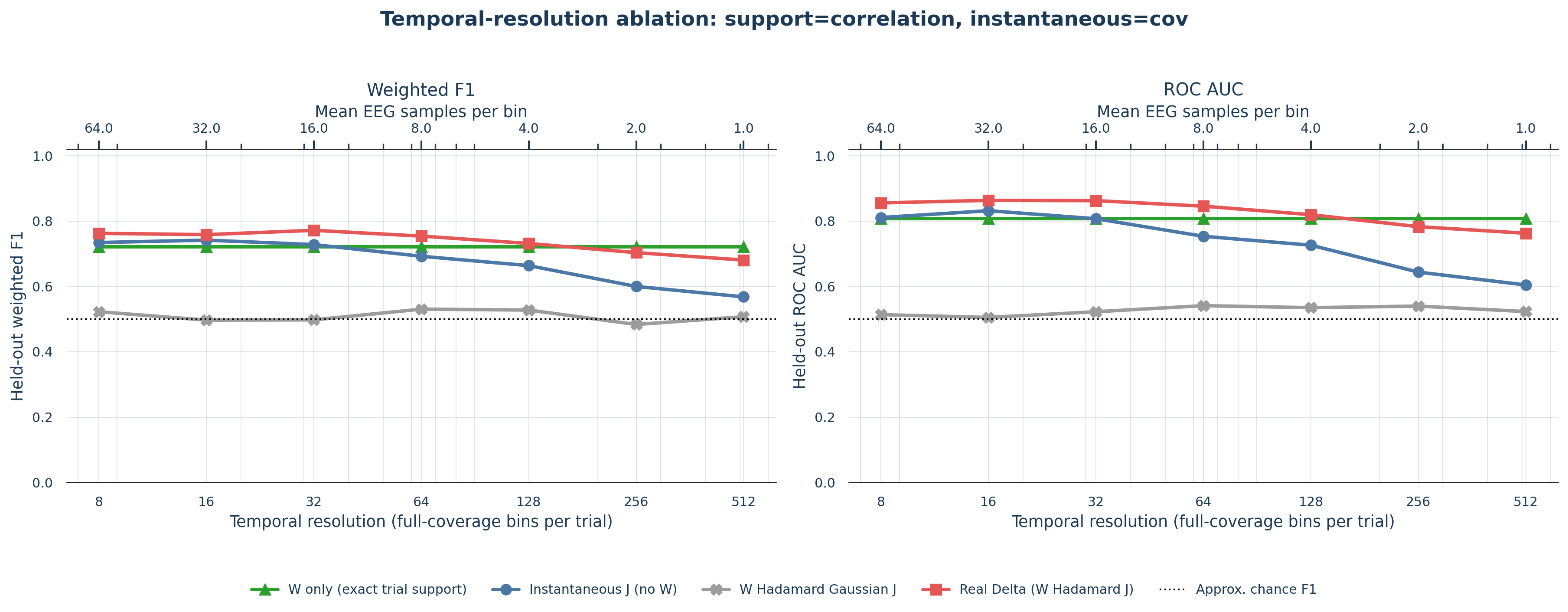}
    \caption{
        \textbf{Temporal-resolution ablation of graph-variate connectivity.}
        Weighted F1 (left) and ROC AUC (right) on the held-out evaluation
        split as the number of full-coverage bins increases from $B=8$ to
        sample resolution ($B=512$). The upper axis gives the mean number of
        EEG samples per bin. We compare the trial-level support $W$, the
        instantaneous covariance $J$ without support, the control
        $W\odot J^{\mathrm{Gaussian}}$, and the real graph-variate trajectory
        $\Delta=W\odot J$. The Gaussian control remains close to chance, so
        arbitrary temporal modulation of $W$ does not reproduce the
        discriminative information of the real trajectory. At fine
        resolutions $J$ degrades as its estimates become noisier, whereas
        modulation by the stable support is far more robust.
    }
    \label{fig:temporal_resolution_ablation}
\end{figure}

\begin{table}[t]
\centering
\caption{Temporal-resolution support audit (Figure~\ref{fig:temporal_resolution_ablation}). Held-out classification of real trajectories with the CAS classifier, dataset-balanced over five datasets and three seeds, as a function of the number of windows $B$. \emph{Stable support} is $W\odot J_t$, \emph{no support} is $J_t+\epsilon I$ with $\epsilon=10^{-6}$, \emph{support only} is the static $W$ and \emph{Gaussian dynamics} is $W\odot J_t^{\mathrm{Gaussian}}$. Best value in each column is \textbf{bold}.}
\label{tab:support_resolution_audit}
\small
\setlength{\tabcolsep}{4.2pt}
\begin{tabular}{lccccccc}
\toprule
Representation & $B=8$ & $16$ & $32$ & $64$ & $128$ & $256$ & $512$ \\
\midrule
\multicolumn{8}{l}{\textit{CAS AUC} $\uparrow$} \\
Stable support & \textbf{0.841} & \textbf{0.838} & \textbf{0.840} & \textbf{0.833} & \textbf{0.837} & \textbf{0.827} & 0.807 \\
No support & 0.816 & 0.797 & 0.765 & 0.745 & 0.756 & 0.738 & 0.690 \\
Support only & 0.822 & 0.822 & 0.822 & 0.822 & 0.822 & 0.822 & \textbf{0.822} \\
Gaussian dynamics & 0.512 & 0.518 & 0.527 & 0.545 & 0.574 & 0.558 & 0.528 \\
\midrule
\multicolumn{8}{l}{\textit{CAS weighted F1} $\uparrow$} \\
Stable support & \textbf{0.770} & \textbf{0.772} & \textbf{0.766} & \textbf{0.764} & \textbf{0.759} & \textbf{0.747} & 0.738 \\
No support & 0.747 & 0.735 & 0.698 & 0.684 & 0.693 & 0.673 & 0.640 \\
Support only & 0.747 & 0.747 & 0.747 & 0.747 & 0.747 & \textbf{0.747} & \textbf{0.747} \\
Gaussian dynamics & 0.505 & 0.512 & 0.513 & 0.526 & 0.550 & 0.535 & 0.506 \\
\bottomrule
\end{tabular}
\end{table}

To determine whether the discriminative information in GVD comes from the stable support $W$, from the instantaneous term $J_t$, or from their interaction, we evaluated four representations of real data over $B\in\{8,16,32,64,128,256,512\}$ windows with the CAS classifier and the cross-session protocol of the main benchmark: the stable-support trajectory $W\odot J_t$; the no-support trajectory $J_t+\epsilon I$; the static support $W$ alone; and $W\odot J_t^{\mathrm{Gaussian}}$, in which the node activity is replaced by independent Gaussian noise.

Table~\ref{tab:support_resolution_audit} shows that the stable-support representation keeps CAS AUC between $0.83$ and $0.84$ and weighted F1 between $0.76$ and $0.77$ from $B=8$ to $B=128$, and only falls to $0.807$ and $0.738$ at sample resolution. The no-support representation is informative at coarse resolution (AUC $0.816$ at $B=8$) but falls steadily as $B$ increases, to $0.745$ at $B=64$ and $0.690$ at $B=512$. This is the expected behavior of a covariance estimated from fewer and fewer samples. The static support alone reaches AUC $0.822$ and weighted F1 $0.747$, so $W$ itself carries substantial class information. It does not explain the full representation, however: $W\odot J_t$ exceeds $W$ alone at every resolution up to $B=256$, and replacing the real node activity with Gaussian noise drops performance to near chance (AUC $0.51$--$0.57$). The temporal modulation must therefore carry real structure from the EEG. At $B=512$ the full representation falls slightly below the support alone, so at sample resolution the instantaneous noise outweighs the added dynamic information; the loss is still much smaller than for the no-support representation. Essentially, the stable support acts as a variance-reducing structural prior that keeps informative dynamic modulation while avoiding the instability of an independent covariance estimate in every short window.

\subsection{Amplitude Bias and the Temporal Branch}
\label{app:time_context_visual}

GVD-CFM transports the trajectory in DCT coordinates, but the decoder exponentiates each window separately. Remark~\ref{rem:appendix_amplitude} shows that errors which are unbiased in log coordinates inflate the expected power of the decoded windows. Figure~\ref{fig:time_branch_pathology} shows this failure mode. The generated trajectory reproduces the broad shape of the real mean GVD edge trajectory, including the early transient and the subsequent recovery, but after the initial period it stays above the real trajectory for much of the trial. The global temporal shape is preserved while the time-local amplitude is biased.

\begin{figure*}[t]
    \centering
    \includegraphics[width=0.96\textwidth]{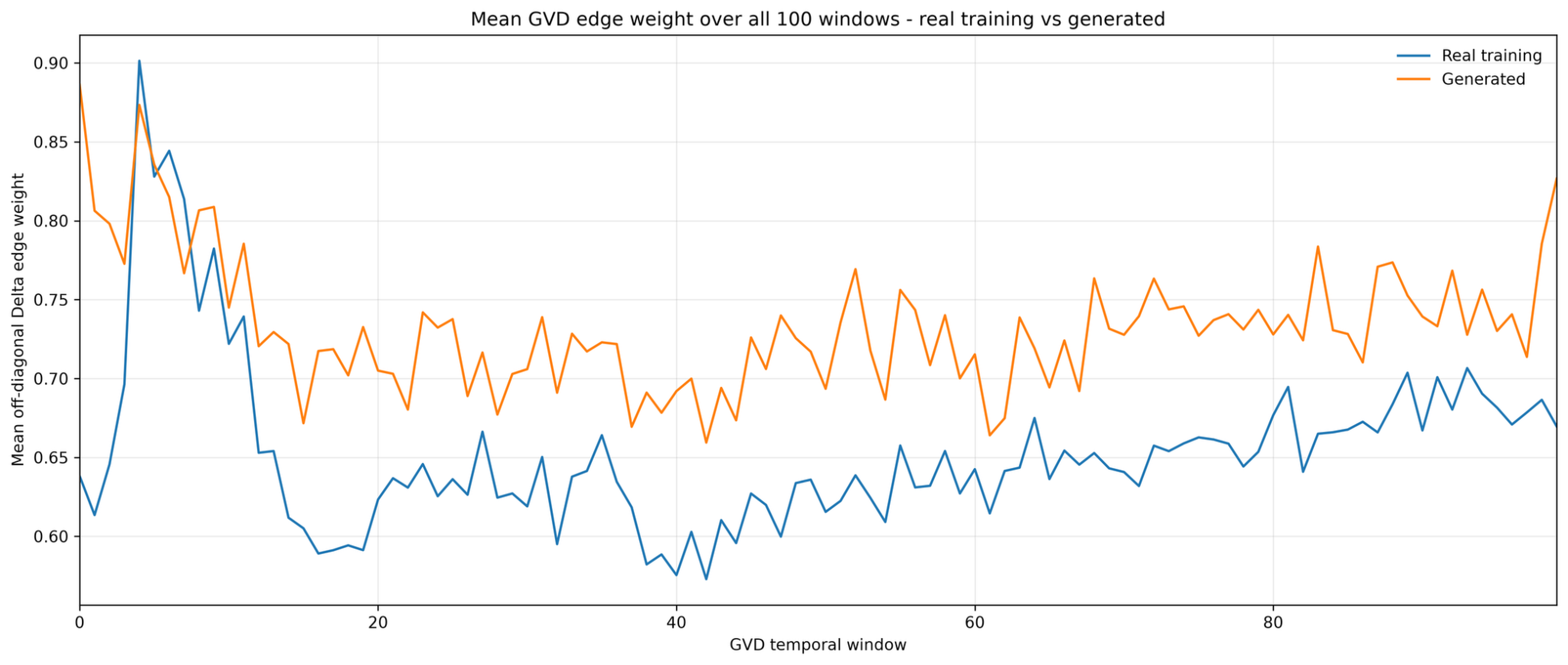}
    \caption{
    \textbf{Temporal amplitude bias in generated GVD dynamics.}
    Mean off-diagonal GVD edge weight over the $B=100$ windows for real
    training trials and generated trials. The generated sequence captures the
    broad temporal organization, including the early transient, but shows a
    persistent positive amplitude offset over much of the later trajectory.
    }
    \label{fig:time_branch_pathology}
\end{figure*}

The temporal branch of Section~\ref{sec:transformer} addresses this bias without introducing a second dynamical state. It reads the current DCT state $z_\tau$ in the window basis, $C_B^\top z_\tau$, processes it with two AdaLN blocks, and returns the result to DCT alignment before the gated fusion (Figure~\ref{fig:dualview}). The velocity is still predicted in DCT coordinates and only the DCT state is integrated, but the network now sees directly the quantity that the decoder exponentiates. Figure~\ref{fig:time_branch_alignment} shows a generated mean GVD edge trajectory from the full model that follows the real one through the initial transient, the subsequent decline and the middle and later portions of the trial. Quantitatively, the branch reduces dynamic energy on every dataset and moves it towards $1$ where the spectral-only network overshoots, while leaving the aggregate fidelity and utility metrics essentially unchanged (Table~\ref{tab:main_dct_ablation_5ds}).

\begin{figure*}[t]
    \centering
    \includegraphics[width=0.96\textwidth]{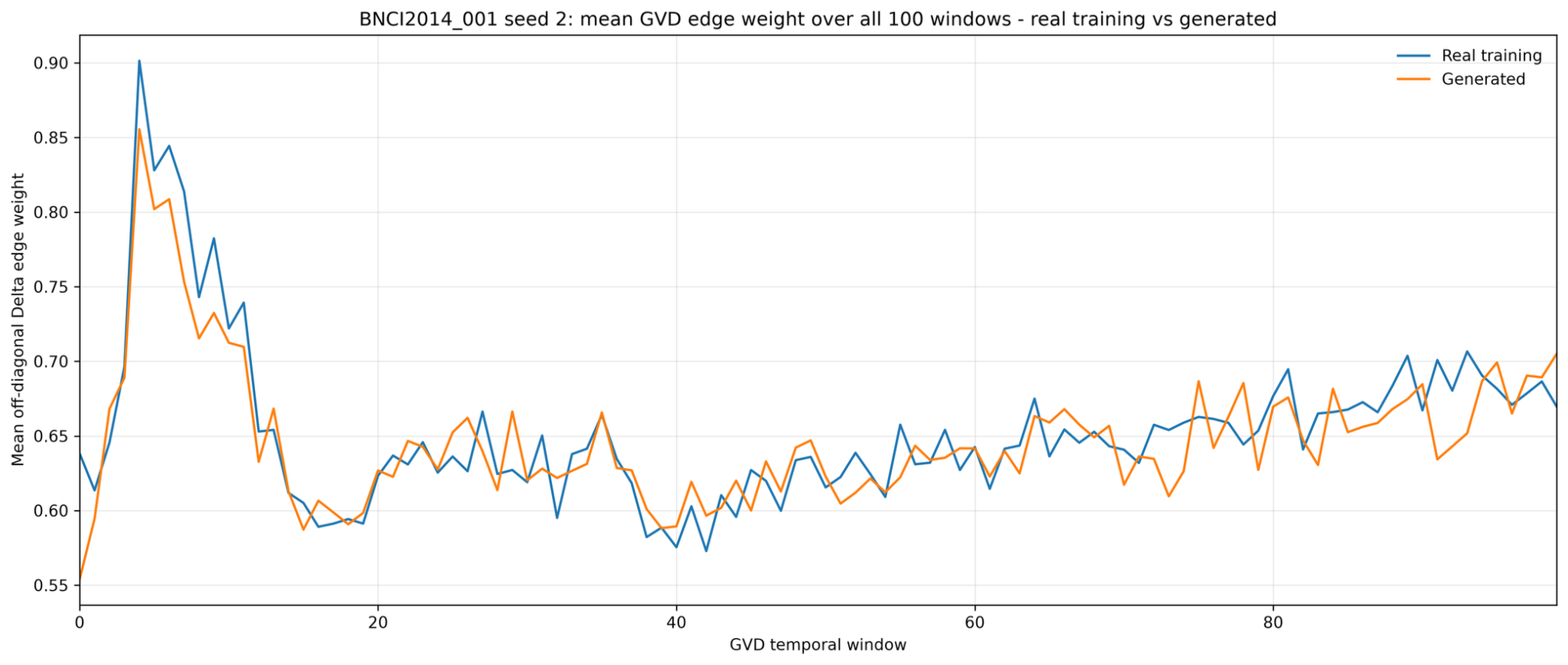}
    \caption{
    \textbf{Temporal alignment of the full GVD-CFM model.}
    Mean off-diagonal GVD edge weight over all $B=100$ windows for real training
    trials and generated trials on BNCI2014\_001. The generated trajectory
    follows the real sequence through the initial transient, the subsequent
    reduction, the intermediate fluctuations and the later recovery.
    }
    \label{fig:time_branch_alignment}
\end{figure*}
\begin{remark}[Why spectral errors inflate decoded amplitudes]
\label{rem:appendix_amplitude}
Let $\widehat{\bar Z}=\bar Z^\star+E$ be a generated DCT-coordinate trajectory with error
$E$. Window $b$ is decoded as
$\widehat{\Delta}_b=\exp\bigl(\operatorname{svec}^{-1}(\widehat{z}_b)\bigr)$
with $\widehat{z}_b=z_b^\star+\sigma\odot(C_B^\top E)_b$, so the log-coordinate
error of every window is a superposition of the errors of all modes. The map
$X\mapsto\operatorname{tr}\exp(X)$ is convex on symmetric matrices. If the
window error has zero mean, Jensen's inequality gives
\begin{equation}
    \mathbb{E}\bigl[\operatorname{tr}\widehat{\Delta}_b\bigr]
    \geq
    \operatorname{tr}\exp\bigl(\operatorname{svec}^{-1}(z_b^\star)\bigr),
\end{equation}
so unbiased errors in the log chart inflate the expected total power of each
decoded window. The spectral Transformer sees $E$ only mode by mode, whereas the
temporal branch sees $C_B^\top\widehat{\bar Z}$, the quantity that is
exponentiated window by window.
\end{remark}

\subsection{Nearest-Neighbor Memorization Diagnostic}

Table~\ref{tab:memorisation_final} compares the ratio $M=d_1/d_2$ of the first and second nearest same-class training neighbors of generated samples with the same ratio for held-out real samples. A generator that copies training data would have $M$ well below the held-out value. For GVD-CFM the two agree to within $0.001$, and no generator produces exact copies. GVD-CFM has a near-copy rate of $0.16$, against $0.98$ for GVD-cVAE, whose samples collapse towards the training data.

\begin{table}[t]
\centering
\caption{Nearest-neighbor memorization analysis. $M=d_1/d_2$ compares the first
and second nearest same-class training neighbors. The held-out-real
reference provides the target local-neighbor geometry; for GVD-CFM it is
computed in the same run as its samples.}
\label{tab:memorisation_final}
\resizebox{\linewidth}{!}{
\begin{tabular}{lrrrrrrr}
\toprule
Method &
$M_{\mathrm{gen}}$ &
$M_{\mathrm{held}}$ &
$|\Delta M| \downarrow$ &
$M^2_{\mathrm{gen}}$ &
$M^2_{\mathrm{held}}$ &
Near-copy $\downarrow$ &
Exact-copy $\downarrow$ \\
\midrule
\textbf{GVD-CFM}
& 0.99400 & 0.99501 & 0.00101
& 0.98803 & 0.99004 & 0.16155 & 0.00000 \\
GVD-cVAE
& 0.99316 & 0.99516 & 0.00200
& 0.98636 & 0.99034 & 0.98171 & 0.00000 \\
GVD-DDPM
& 0.99671 & 0.99516 & 0.00156
& 0.99344 & 0.99034 & 0.18854 & 0.00000 \\
Window-DIFFEO-CFM
& 0.99678 & 0.99516 & 0.00162
& 0.99357 & 0.99034 & 0.00000 & 0.00000 \\
cVAE
& 0.99555 & 0.99516 & \textbf{0.00040}
& 0.99113 & 0.99034 & 0.01807 & 0.00000 \\
JET
& 0.99298 & 0.99516 & 0.00218
& 0.98605 & 0.99034 & 0.00000 & 0.00000 \\
Vanilla-Diffusion
& 0.99397 & 0.99516 & 0.00119
& 0.98799 & 0.99034 & 0.00005 & 0.00000 \\
EEGGAN-2025
& 0.99169 & 0.99516 & 0.00347
& 0.98351 & 0.99034 & 0.00000 & 0.00000 \\
\bottomrule
\end{tabular}}
\end{table}

\subsection{Ablation: Removing Generated Eigenvalue Clipping}
\label{app:ablation_no_eig_clip}

To assess whether the performance of GVD-CFM depends on post-generation eigenvalue stabilization, we repeated the full generative benchmark with generated log-eigenvalue clipping disabled. All other components were unchanged: the full GVD-CFM with its temporal branch, the standard normal source, classwise Sinkhorn coupling, full orthonormal DCT coordinates, RK4 integration with 50 steps and the canonical stable-support GVD representation with $B=100$ windows. We evaluated GVD-CFM, GVD-DDPM and GVD-cVAE on all five datasets with three generator seeds per dataset.

Table~\ref{tab:no_eig_clip} reports the dataset-balanced results. Disabling clipping has a negligible effect on GVD-CFM. Its relative GVD Fr\'echet distance is $1.016$, EvaGeM F1 $0.609$, CAS AUC $0.792$ and CAS weighted F1 $0.724$, against $1.022$, $0.613$, $0.790$ and $0.725$ with clipping (Table~\ref{tab:main_generative_5ds}). Temporal agreement remains high (temporal correlation $0.942$, lag-ACF $0.998$), and the dynamic-energy and dynamic-fraction ratios remain close to $1$ ($0.975$ and $0.982$). The reported performance of GVD-CFM is therefore not driven by clipping, which acts as a numerical safeguard.

The ablation also shows the different failure modes of the GVD-space controls. Without clipping, GVD-DDPM becomes unstable on some datasets, with a relative GVD Fr\'echet distance of $1.796\pm1.533$ across datasets and a dynamic-energy ratio of $1.840\pm2.007$. GVD-cVAE again attains a low Fr\'echet distance with near-zero EvaGeM F1 ($0.003$) and strongly attenuated dynamics (dynamic-energy ratio $0.230$), so a favorable second-order distance does not imply faithful recovery of the dynamic distribution. Novelty metrics are unaffected: GVD-CFM has $M^2=0.988$ against a held-out value of $0.990$, no exact copies, and training coverage $0.227$, against $0.039$ for GVD-DDPM and $0.103$ for GVD-cVAE.

\begin{table*}[t]
\centering
\caption{Dataset-balanced results over five EEG datasets and three generator seeds per dataset with generated log-eigenvalue clipping disabled. GVD-CFM is the full model with the temporal branch. Values are mean $\pm$ standard deviation across datasets after averaging generator seeds within each dataset.}
\label{tab:no_eig_clip}
\resizebox{\textwidth}{!}{
\begin{tabular}{lccc}
\toprule
\textbf{Metric} &
\textbf{GVD-CFM} &
\textbf{GVD-DDPM} &
\textbf{GVD-cVAE} \\
\midrule
Relative GVD-FID + increments
& $1.0158 \pm 0.0692$
& $1.7955 \pm 1.5328$
& $0.8131 \pm 0.0789$ \\

Relative GVD-FID, positions only
& $1.0040 \pm 0.0453$
& $1.7354 \pm 1.2118$
& $0.9014 \pm 0.0708$ \\

EvaGeM F1 + increments
& $0.6086 \pm 0.2440$
& $0.0480 \pm 0.0372$
& $0.0028 \pm 0.0034$ \\

EvaGeM F1, positions only
& $0.6442 \pm 0.1778$
& $0.0046 \pm 0.0058$
& $0.0035 \pm 0.0048$ \\

CAS AUC
& $0.7920 \pm 0.0973$
& $0.6071 \pm 0.1317$
& $0.7334 \pm 0.0983$ \\

CAS weighted F1
& $0.7240 \pm 0.0963$
& $0.5759 \pm 0.1020$
& $0.6766 \pm 0.0840$ \\

Temporal correlation agreement
& $0.9419 \pm 0.0198$
& $0.7442 \pm 0.3233$
& $0.8255 \pm 0.0690$ \\

Lag-ACF agreement
& $0.9980 \pm 0.0016$
& $0.9462 \pm 0.0885$
& $0.9964 \pm 0.0011$ \\

Dynamic-energy ratio
& $0.9750 \pm 0.0774$
& $1.8396 \pm 2.0071$
& $0.2301 \pm 0.0596$ \\

Dynamic-fraction ratio
& $0.9818 \pm 0.0968$
& $1.6042 \pm 1.4582$
& $0.2587 \pm 0.0787$ \\

$M^2$
& $0.9881 \pm 0.0018$
& $0.9931 \pm 0.0027$
& $0.9855 \pm 0.0024$ \\

Held-out $M^2$
& $0.9900 \pm 0.0025$
& $0.9900 \pm 0.0025$
& $0.9900 \pm 0.0025$ \\

Exact-copy rate
& $0.0000$
& $0.0000$
& $0.0000$ \\

Training coverage
& $0.2273 \pm 0.0455$
& $0.0385 \pm 0.0189$
& $0.1026 \pm 0.0305$ \\
\bottomrule
\end{tabular}
}
\end{table*}
\subsection{Extended Tables}
\label{app:extended_tables}


\begin{table*}[t]
\centering
\caption{Full per-dataset generative performance corresponding to Table~\ref{tab:main_generative_5ds}. Entries are mean$\pm$standard deviation over three generator seeds. The real-data reference uses real training trajectories against held-out real test trajectories for EvaGeM, and the corresponding TRTR classifier reference for CAS. Real-data references are excluded from generator rankings. Best generator values within each dataset are \textbf{bold}; second-best generator values are \underline{\textit{underlined italics}}.}
\label{tab:appendix_generative_5ds}
\scriptsize
\renewcommand{\arraystretch}{0.92}
\setlength{\tabcolsep}{2.8pt}
\resizebox{\textwidth}{!}{%
\begin{tabular}{llcccccc}
\toprule
Method & Dataset & Rel. GVD-FID $\downarrow$ & Eva $\alpha$ $\uparrow$ & Eva $\beta$ $\uparrow$ & Eva F1 $\uparrow$ & CAS AUC $\uparrow$ & CAS F1 $\uparrow$ \\
\midrule

GVD-CFM & BNCI2014\_001 & 1.006$\pm$0.014 & \textbf{0.897$\pm$0.057} & \textbf{0.836$\pm$0.025} & \textbf{0.864$\pm$0.027} & \textbf{0.789$\pm$0.015} & \textbf{0.707$\pm$0.007} \\
 & BNCI2014\_002 & \underline{\textit{0.977$\pm$0.016}} & \textbf{0.837$\pm$0.046} & \textbf{0.666$\pm$0.055} & \textbf{0.740$\pm$0.022} & \textbf{0.780$\pm$0.015} & \textbf{0.709$\pm$0.005} \\
 & BNCI2015\_001 & \underline{\textit{0.978$\pm$0.005}} & \textbf{0.842$\pm$0.050} & \underline{\textit{0.719$\pm$0.090}} & \textbf{0.775$\pm$0.073} & \textbf{0.746$\pm$0.011} & \textbf{0.681$\pm$0.002} \\
 & Shin2017A & \underline{\textit{1.190$\pm$0.084}} & \underline{\textit{0.369$\pm$0.032}} & \textbf{0.696$\pm$0.049} & \textbf{0.482$\pm$0.039} & \textbf{0.672$\pm$0.029} & \textbf{0.622$\pm$0.025} \\
 & Zhou2016 & \underline{\textit{0.959$\pm$0.033}} & \textbf{0.440$\pm$0.143} & 0.133$\pm$0.081 & \underline{\textit{0.203$\pm$0.110}} & \textbf{0.965$\pm$0.003} & \textbf{0.905$\pm$0.014} \\
 & \textit{5-dataset avg.} & \underline{\textit{1.022}} & \textbf{0.677} & \textbf{0.610} & \textbf{0.613} & \textbf{0.790} & \textbf{0.725} \\

\midrule
GVD-cVAE & BNCI2014\_001 & \textbf{0.813$\pm$0.005} & 0.018$\pm$0.001 & 0.007$\pm$0.006 & 0.009$\pm$0.006 & \underline{\textit{0.742$\pm$0.033}} & \underline{\textit{0.683$\pm$0.020}} \\
 & BNCI2014\_002 & \textbf{0.734$\pm$0.008} & 0.027$\pm$0.022 & 0.001$\pm$0.002 & 0.002$\pm$0.003 & \underline{\textit{0.727$\pm$0.020}} & \underline{\textit{0.669$\pm$0.024}} \\
 & BNCI2015\_001 & \textbf{0.961$\pm$0.005} & 0.020$\pm$0.001 & 0.010$\pm$0.004 & 0.013$\pm$0.004 & \underline{\textit{0.701$\pm$0.013}} & \underline{\textit{0.644$\pm$0.007}} \\
 & Shin2017A & \textbf{0.796$\pm$0.008} & 0.008$\pm$0.013 & 0.002$\pm$0.002 & 0.000$\pm$0.000 & 0.575$\pm$0.026 & 0.552$\pm$0.016 \\
 & Zhou2016 & \textbf{0.770$\pm$0.002} & 0.012$\pm$0.004 & 0.002$\pm$0.001 & 0.003$\pm$0.002 & 0.888$\pm$0.059 & \underline{\textit{0.828$\pm$0.036}} \\
 & \textit{5-dataset avg.} & \textbf{0.815} & 0.017 & 0.004 & 0.005 & \underline{\textit{0.727}} & \underline{\textit{0.675}} \\

\midrule
GVD-DDPM & BNCI2014\_001 & \underline{\textit{0.967$\pm$0.013}} & 0.064$\pm$0.025 & 0.023$\pm$0.012 & 0.034$\pm$0.016 & 0.530$\pm$0.049 & 0.513$\pm$0.031 \\
 & BNCI2014\_002 & 1.019$\pm$0.001 & 0.110$\pm$0.011 & 0.051$\pm$0.006 & 0.070$\pm$0.008 & 0.583$\pm$0.029 & 0.541$\pm$0.005 \\
 & BNCI2015\_001 & 1.000$\pm$0.013 & 0.186$\pm$0.044 & 0.107$\pm$0.022 & 0.135$\pm$0.029 & 0.536$\pm$0.031 & 0.516$\pm$0.026 \\
 & Shin2017A & 4.778$\pm$0.014 & 0.006$\pm$0.008 & 0.000$\pm$0.000 & 0.001$\pm$0.001 & 0.529$\pm$0.004 & 0.507$\pm$0.013 \\
 & Zhou2016 & 1.004$\pm$0.007 & 0.077$\pm$0.018 & 0.014$\pm$0.004 & 0.024$\pm$0.006 & 0.815$\pm$0.032 & 0.723$\pm$0.039 \\
 & \textit{5-dataset avg.} & 1.753 & 0.089 & 0.039 & 0.053 & 0.599 & 0.560 \\

\midrule
Window-DIFFEO-CFM & BNCI2014\_001 & 1.412$\pm$0.003 & 0.004$\pm$0.003 & 0.071$\pm$0.003 & 0.008$\pm$0.005 & 0.638$\pm$0.001 & 0.616$\pm$0.003 \\
 & BNCI2014\_002 & 1.577$\pm$0.005 & 0.002$\pm$0.002 & 0.026$\pm$0.005 & 0.004$\pm$0.004 & 0.637$\pm$0.004 & 0.587$\pm$0.001 \\
 & BNCI2015\_001 & 1.311$\pm$0.009 & 0.012$\pm$0.004 & 0.050$\pm$0.001 & 0.019$\pm$0.005 & 0.662$\pm$0.010 & 0.594$\pm$0.003 \\
 & Shin2017A & 2.864$\pm$0.021 & 0.005$\pm$0.009 & \underline{\textit{0.021$\pm$0.004}} & \underline{\textit{0.006$\pm$0.010}} & \underline{\textit{0.589$\pm$0.008}} & \underline{\textit{0.563$\pm$0.010}} \\
 & Zhou2016 & 1.323$\pm$0.007 & 0.003$\pm$0.001 & \underline{\textit{0.242$\pm$0.009}} & 0.006$\pm$0.002 & 0.889$\pm$0.003 & 0.810$\pm$0.006 \\
 & \textit{5-dataset avg.} & 1.697 & 0.005 & 0.082 & 0.009 & 0.683 & 0.634 \\

\midrule
Real data reference & BNCI2014\_001 & -- & 0.911$\pm$0.001 & 0.894$\pm$0.000 & 0.902$\pm$0.001 & 0.856$\pm$0.003 & 0.763$\pm$0.008 \\
 & BNCI2014\_002 & -- & 0.673$\pm$0.001 & 0.328$\pm$0.001 & 0.441$\pm$0.001 & 0.801$\pm$0.005 & 0.733$\pm$0.007 \\
 & BNCI2015\_001 & -- & 0.926$\pm$0.001 & 0.882$\pm$0.003 & 0.904$\pm$0.001 & 0.780$\pm$0.002 & 0.703$\pm$0.004 \\
 & Shin2017A & -- & 0.639$\pm$0.003 & 0.883$\pm$0.002 & 0.741$\pm$0.002 & 0.745$\pm$0.001 & 0.676$\pm$0.009 \\
 & Zhou2016 & -- & 0.880$\pm$0.004 & 0.337$\pm$0.004 & 0.488$\pm$0.004 & 0.972$\pm$0.000 & 0.907$\pm$0.002 \\
 & \textit{5-dataset avg.} & -- & 0.806 & 0.665 & 0.695 & 0.831 & 0.756 \\

\bottomrule
\end{tabular}}
\end{table*}

\begin{table*}[t]
\centering
\caption{Full per-dataset generative performance (continued).}
\scriptsize
\renewcommand{\arraystretch}{0.92}
\setlength{\tabcolsep}{2.8pt}
\resizebox{\textwidth}{!}{%
\begin{tabular}{llcccccc}
\toprule
Method & Dataset & Rel. GVD-FID $\downarrow$ & Eva $\alpha$ $\uparrow$ & Eva $\beta$ $\uparrow$ & Eva F1 $\uparrow$ & CAS AUC $\uparrow$ & CAS F1 $\uparrow$ \\
\midrule

cVAE & BNCI2014\_001 & 1.374$\pm$0.020 & \underline{\textit{0.621$\pm$0.090}} & 0.002$\pm$0.002 & 0.004$\pm$0.003 & 0.588$\pm$0.013 & 0.555$\pm$0.015 \\
 & BNCI2014\_002 & 1.154$\pm$0.008 & \underline{\textit{0.486$\pm$0.045}} & 0.006$\pm$0.004 & 0.012$\pm$0.008 & 0.550$\pm$0.022 & 0.491$\pm$0.066 \\
 & BNCI2015\_001 & 1.885$\pm$0.052 & 0.023$\pm$0.005 & 0.005$\pm$0.001 & 0.007$\pm$0.001 & 0.584$\pm$0.002 & 0.546$\pm$0.011 \\
 & Shin2017A & 1.617$\pm$0.055 & \textbf{0.668$\pm$0.140} & 0.002$\pm$0.002 & 0.003$\pm$0.003 & 0.514$\pm$0.013 & 0.470$\pm$0.043 \\
 & Zhou2016 & 1.049$\pm$0.003 & 0.240$\pm$0.043 & 0.018$\pm$0.006 & 0.034$\pm$0.011 & \underline{\textit{0.920$\pm$0.007}} & 0.799$\pm$0.026 \\
 & \textit{5-dataset avg.} & 1.416 & \underline{\textit{0.407}} & 0.006 & 0.012 & 0.631 & 0.572 \\

\midrule
JET & BNCI2014\_001 & 4.794$\pm$0.294 & 0.004$\pm$0.002 & 0.002$\pm$0.001 & 0.002$\pm$0.001 & 0.467$\pm$0.011 & 0.333$\pm$0.000 \\
 & BNCI2014\_002 & 3.402$\pm$0.053 & 0.000$\pm$0.000 & 0.000$\pm$0.000 & 0.000$\pm$0.000 & 0.494$\pm$0.033 & 0.333$\pm$0.000 \\
 & BNCI2015\_001 & 3.864$\pm$0.035 & 0.014$\pm$0.007 & 0.007$\pm$0.009 & 0.008$\pm$0.009 & 0.501$\pm$0.016 & 0.342$\pm$0.015 \\
 & Shin2017A & 3.753$\pm$0.097 & 0.005$\pm$0.005 & 0.000$\pm$0.000 & 0.000$\pm$0.000 & 0.498$\pm$0.009 & 0.333$\pm$0.000 \\
 & Zhou2016 & 3.726$\pm$1.331 & 0.003$\pm$0.003 & 0.000$\pm$0.000 & 0.000$\pm$0.000 & 0.516$\pm$0.151 & 0.405$\pm$0.067 \\
 & \textit{5-dataset avg.} & 3.908 & 0.005 & 0.002 & 0.002 & 0.495 & 0.349 \\

\midrule
Vanilla-Diffusion & BNCI2014\_001 & 1.566$\pm$0.370 & 0.314$\pm$0.194 & \underline{\textit{0.448$\pm$0.347}} & \underline{\textit{0.366$\pm$0.250}} & 0.646$\pm$0.050 & 0.514$\pm$0.094 \\
 & BNCI2014\_002 & 1.444$\pm$0.084 & 0.242$\pm$0.031 & \underline{\textit{0.370$\pm$0.273}} & \underline{\textit{0.264$\pm$0.072}} & 0.598$\pm$0.013 & 0.576$\pm$0.016 \\
 & BNCI2015\_001 & 1.667$\pm$0.184 & \underline{\textit{0.240$\pm$0.053}} & \textbf{0.746$\pm$0.096} & \underline{\textit{0.363$\pm$0.072}} & 0.578$\pm$0.012 & 0.526$\pm$0.050 \\
 & Shin2017A & 3.230$\pm$0.109 & 0.007$\pm$0.008 & 0.004$\pm$0.003 & 0.004$\pm$0.005 & 0.509$\pm$0.016 & 0.419$\pm$0.037 \\
 & Zhou2016 & 1.427$\pm$0.205 & \underline{\textit{0.284$\pm$0.059}} & \textbf{0.264$\pm$0.188} & \textbf{0.240$\pm$0.074} & 0.870$\pm$0.027 & 0.712$\pm$0.089 \\
 & \textit{5-dataset avg.} & 1.867 & 0.217 & \underline{\textit{0.366}} & \underline{\textit{0.247}} & 0.640 & 0.549 \\

\midrule
EEGGAN-2025 & BNCI2014\_001 & 4.943$\pm$0.023 & 0.008$\pm$0.009 & 0.001$\pm$0.001 & 0.001$\pm$0.001 & 0.516$\pm$0.044 & 0.336$\pm$0.004 \\
 & BNCI2014\_002 & 3.424$\pm$0.033 & 0.000$\pm$0.000 & 0.000$\pm$0.000 & 0.000$\pm$0.000 & 0.513$\pm$0.043 & 0.465$\pm$0.019 \\
 & BNCI2015\_001 & 4.005$\pm$0.027 & 0.014$\pm$0.007 & 0.007$\pm$0.006 & 0.009$\pm$0.006 & 0.518$\pm$0.069 & 0.391$\pm$0.042 \\
 & Shin2017A & 3.856$\pm$0.032 & 0.005$\pm$0.005 & 0.000$\pm$0.000 & 0.000$\pm$0.000 & 0.501$\pm$0.018 & 0.405$\pm$0.063 \\
 & Zhou2016 & 2.249$\pm$0.054 & 0.002$\pm$0.001 & 0.000$\pm$0.000 & 0.000$\pm$0.000 & 0.573$\pm$0.123 & 0.408$\pm$0.028 \\
 & \textit{5-dataset avg.} & 3.696 & 0.006 & 0.002 & 0.002 & 0.524 & 0.401 \\

\bottomrule
\end{tabular}}
\end{table*}
\begin{table*}[t]
\centering
\caption{Full per-dataset complementary diagnostics corresponding to Table~\ref{tab:main_quality_5ds}: temporal and dynamic fidelity. Entries are mean$\pm$standard deviation over three generator seeds. For ratio metrics, ranking is by proximity to 1.}
\label{tab:appendix_quality_5ds}
\scriptsize
\renewcommand{\arraystretch}{0.92}
\setlength{\tabcolsep}{2.8pt}
\resizebox{\textwidth}{!}{%
\begin{tabular}{llccccc}
\toprule
Method & Dataset & Temp. corr. $\uparrow$ & Lag-ACF $\uparrow$ & Energy $\to1$ & Dyn. frac. $\to1$ & Adjacent $\to1$ \\
\midrule
GVD-CFM & BNCI2014\_001 & \underline{\textit{0.920$\pm$0.006}} & \textbf{0.999$\pm$0.000} & \textbf{0.995$\pm$0.023} & \textbf{1.024$\pm$0.052} & 1.070$\pm$0.013 \\
 & BNCI2014\_002 & \textbf{0.838$\pm$0.003} & \textbf{0.999$\pm$0.000} & 0.948$\pm$0.018 & \underline{\textit{0.938$\pm$0.036}} & \underline{\textit{0.977$\pm$0.017}} \\
 & BNCI2015\_001 & \textbf{0.958$\pm$0.001} & \textbf{0.999$\pm$0.000} & 0.938$\pm$0.016 & \textbf{0.976$\pm$0.007} & \textbf{0.976$\pm$0.015} \\
 & Shin2017A & \textbf{0.958$\pm$0.002} & 0.994$\pm$0.001 & \underline{\textit{1.068$\pm$0.006}} & \textbf{1.161$\pm$0.011} & 1.317$\pm$0.005 \\
 & Zhou2016 & \textbf{0.877$\pm$0.017} & \underline{\textit{0.992$\pm$0.005}} & \underline{\textit{0.815$\pm$0.039}} & 0.835$\pm$0.011 & 0.871$\pm$0.027 \\
 & \textit{5-dataset avg.} & \textbf{0.910} & \underline{\textit{0.997}} & \textbf{0.953} & \textbf{0.987} & 1.042 \\
\midrule
GVD-cVAE & BNCI2014\_001 & 0.816$\pm$0.024 & \underline{\textit{0.998$\pm$0.000}} & 0.219$\pm$0.005 & 0.239$\pm$0.007 & 0.193$\pm$0.002 \\
 & BNCI2014\_002 & 0.704$\pm$0.038 & 0.997$\pm$0.002 & 0.371$\pm$0.012 & 0.408$\pm$0.009 & 0.327$\pm$0.008 \\
 & BNCI2015\_001 & 0.852$\pm$0.012 & 0.995$\pm$0.000 & 0.382$\pm$0.026 & 0.446$\pm$0.034 & 0.306$\pm$0.023 \\
 & Shin2017A & \underline{\textit{0.802$\pm$0.039}} & \underline{\textit{0.995$\pm$0.002}} & 0.134$\pm$0.020 & 0.159$\pm$0.026 & 0.117$\pm$0.016 \\
 & Zhou2016 & 0.699$\pm$0.034 & 0.992$\pm$0.003 & 0.193$\pm$0.011 & 0.192$\pm$0.014 & 0.182$\pm$0.009 \\
 & \textit{5-dataset avg.} & 0.774 & 0.995 & 0.260 & 0.289 & 0.225 \\
\midrule
GVD-DDPM & BNCI2014\_001 & 0.038$\pm$0.004 & 0.183$\pm$0.077 & 0.694$\pm$0.020 & 0.714$\pm$0.014 & 0.863$\pm$0.025 \\
 & BNCI2014\_002 & 0.029$\pm$0.004 & 0.215$\pm$0.284 & 0.759$\pm$0.006 & 0.871$\pm$0.005 & 0.930$\pm$0.009 \\
 & BNCI2015\_001 & 0.138$\pm$0.009 & 0.791$\pm$0.048 & 0.788$\pm$0.013 & \underline{\textit{0.877$\pm$0.015}} & \underline{\textit{0.972$\pm$0.014}} \\
 & Shin2017A & -0.011$\pm$0.002 & -0.274$\pm$0.150 & 5.466$\pm$0.020 & 4.467$\pm$0.025 & 8.566$\pm$0.028 \\
 & Zhou2016 & 0.103$\pm$0.023 & 0.718$\pm$0.066 & 0.732$\pm$0.012 & 0.788$\pm$0.026 & \underline{\textit{0.916$\pm$0.016}} \\
 & \textit{5-dataset avg.} & 0.059 & 0.327 & 1.688 & 1.543 & 2.450 \\
\midrule
Window-DIFFEO-CFM & BNCI2014\_001 & 0.011$\pm$0.006 & 0.201$\pm$0.167 & 1.487$\pm$0.005 & 1.464$\pm$0.009 & 1.846$\pm$0.010 \\
 & BNCI2014\_002 & 0.003$\pm$0.009 & 0.157$\pm$0.268 & 1.746$\pm$0.009 & 1.794$\pm$0.010 & 2.142$\pm$0.012 \\
 & BNCI2015\_001 & 0.018$\pm$0.014 & 0.053$\pm$0.244 & 1.256$\pm$0.007 & 1.299$\pm$0.014 & 1.553$\pm$0.010 \\
 & Shin2017A & -0.003$\pm$0.017 & -0.214$\pm$0.070 & 3.174$\pm$0.035 & 2.980$\pm$0.018 & 4.981$\pm$0.055 \\
 & Zhou2016 & 0.029$\pm$0.001 & 0.080$\pm$0.210 & 1.308$\pm$0.019 & 1.238$\pm$0.011 & 1.641$\pm$0.024 \\
 & \textit{5-dataset avg.} & 0.012 & 0.055 & 1.794 & 1.755 & 2.433 \\
\bottomrule
\end{tabular}}
\end{table*}

\begin{table*}[t]
\centering
\caption{Full per-dataset temporal and dynamic diagnostics (continued).}
\scriptsize
\renewcommand{\arraystretch}{0.92}
\setlength{\tabcolsep}{2.8pt}
\resizebox{\textwidth}{!}{%
\begin{tabular}{llccccc}
\toprule
Method & Dataset & Temp. corr. $\uparrow$ & Lag-ACF $\uparrow$ & Energy $\to1$ & Dyn. frac. $\to1$ & Adjacent $\to1$ \\
\midrule
cVAE & BNCI2014\_001 & 0.399$\pm$0.031 & 0.948$\pm$0.017 & 0.986$\pm$0.014 & 1.381$\pm$0.031 & \underline{\textit{1.039$\pm$0.023}} \\
 & BNCI2014\_002 & 0.512$\pm$0.060 & 0.976$\pm$0.006 & \textbf{0.956$\pm$0.011} & 0.850$\pm$0.014 & \textbf{1.013$\pm$0.008} \\
 & BNCI2015\_001 & 0.798$\pm$0.019 & 0.990$\pm$0.003 & \textbf{1.023$\pm$0.014} & 0.520$\pm$0.001 & 1.038$\pm$0.013 \\
 & Shin2017A & 0.216$\pm$0.033 & 0.913$\pm$0.026 & 0.651$\pm$0.018 & \underline{\textit{1.290$\pm$0.060}} & \textbf{0.886$\pm$0.023} \\
 & Zhou2016 & 0.710$\pm$0.018 & 0.958$\pm$0.004 & \textbf{0.876$\pm$0.005} & \underline{\textit{0.862$\pm$0.006}} & \textbf{0.955$\pm$0.006} \\
 & \textit{5-dataset avg.} & 0.527 & 0.957 & 0.898 & \underline{\textit{0.981}} & \textbf{0.986} \\
\midrule
JET & BNCI2014\_001 & 0.357$\pm$0.037 & 0.937$\pm$0.012 & 1.897$\pm$0.235 & 9.197$\pm$1.470 & 1.268$\pm$0.096 \\
 & BNCI2014\_002 & 0.399$\pm$0.051 & 0.983$\pm$0.003 & 1.474$\pm$0.087 & 9.324$\pm$0.386 & 1.177$\pm$0.025 \\
 & BNCI2015\_001 & 0.490$\pm$0.040 & 0.985$\pm$0.006 & 1.542$\pm$0.109 & 6.338$\pm$0.175 & 1.146$\pm$0.019 \\
 & Shin2017A & 0.375$\pm$0.047 & 0.785$\pm$0.037 & 2.080$\pm$0.138 & 8.183$\pm$0.626 & 1.456$\pm$0.056 \\
 & Zhou2016 & 0.306$\pm$0.035 & 0.977$\pm$0.017 & 2.709$\pm$0.502 & \textbf{1.135$\pm$0.361} & 1.912$\pm$0.242 \\
 & \textit{5-dataset avg.} & 0.385 & 0.933 & 1.940 & 6.835 & 1.392 \\
\midrule
Vanilla-Diffusion & BNCI2014\_001 & \textbf{0.930$\pm$0.013} & 0.998$\pm$0.002 & \underline{\textit{0.992$\pm$0.024}} & \underline{\textit{1.185$\pm$0.403}} & 1.041$\pm$0.013 \\
 & BNCI2014\_002 & \underline{\textit{0.832$\pm$0.006}} & \underline{\textit{0.998$\pm$0.000}} & \underline{\textit{0.954$\pm$0.075}} & \textbf{0.943$\pm$0.256} & 0.973$\pm$0.051 \\
 & BNCI2015\_001 & \underline{\textit{0.955$\pm$0.003}} & \underline{\textit{0.999$\pm$0.000}} & \underline{\textit{0.949$\pm$0.011}} & 0.688$\pm$0.104 & 0.948$\pm$0.014 \\
 & Shin2017A & 0.323$\pm$0.063 & \textbf{0.996$\pm$0.002} & \textbf{1.023$\pm$0.096} & 3.645$\pm$0.568 & \underline{\textit{1.282$\pm$0.108}} \\
 & Zhou2016 & \underline{\textit{0.875$\pm$0.027}} & \textbf{0.998$\pm$0.000} & 0.788$\pm$0.139 & 2.337$\pm$1.953 & 0.831$\pm$0.097 \\
 & \textit{5-dataset avg.} & \underline{\textit{0.783}} & \textbf{0.998} & \underline{\textit{0.941}} & 1.759 & \underline{\textit{1.015}} \\
\midrule
EEGGAN-2025 & BNCI2014\_001 & -0.079$\pm$0.004 & -0.228$\pm$0.011 & 0.830$\pm$0.044 & 3.490$\pm$0.378 & \textbf{1.038$\pm$0.055} \\
 & BNCI2014\_002 & 0.078$\pm$0.008 & 0.820$\pm$0.089 & 0.622$\pm$0.018 & 8.267$\pm$0.550 & 0.761$\pm$0.024 \\
 & BNCI2015\_001 & 0.098$\pm$0.023 & 0.759$\pm$0.142 & 0.574$\pm$0.020 & 6.277$\pm$0.214 & 0.706$\pm$0.024 \\
 & Shin2017A & -0.020$\pm$0.011 & -0.206$\pm$0.141 & 0.408$\pm$0.006 & 2.072$\pm$0.231 & 0.640$\pm$0.010 \\
 & Zhou2016 & 0.035$\pm$0.045 & 0.385$\pm$0.650 & 0.610$\pm$0.027 & 4.216$\pm$0.671 & 0.765$\pm$0.031 \\
 & \textit{5-dataset avg.} & 0.022 & 0.306 & 0.609 & 4.864 & 0.782 \\
\bottomrule
\end{tabular}}
\end{table*}
\begin{table*}[t]
\centering
\caption{Full per-dataset novelty, diversity, and coverage diagnostics corresponding to Table~\ref{tab:main_quality_5ds}. Entries are mean$\pm$standard deviation over three generator seeds. Diversity has ideal value 1.}
\scriptsize
\renewcommand{\arraystretch}{0.92}
\setlength{\tabcolsep}{2.8pt}

\begin{minipage}{0.72\textwidth}
\centering
\begin{tabular}{llccc}
\toprule
Method & Dataset & Precision $\uparrow$ & Diversity $\to1$ & Coverage $\uparrow$ \\
\midrule
GVD-CFM & BNCI2014\_001 & 0.967$\pm$0.012 & \underline{\textit{0.945$\pm$0.002}} & \textbf{0.175$\pm$0.008} \\
 & BNCI2014\_002 & 0.899$\pm$0.026 & \underline{\textit{0.952$\pm$0.012}} & \textbf{0.211$\pm$0.003} \\
 & BNCI2015\_001 & 0.859$\pm$0.037 & 0.956$\pm$0.007 & \textbf{0.203$\pm$0.006} \\
 & Shin2017A & 0.032$\pm$0.036 & \textbf{1.168$\pm$0.005} & \textbf{0.222$\pm$0.017} \\
 & Zhou2016 & \textbf{1.000$\pm$0.000} & 0.914$\pm$0.022 & \textbf{0.317$\pm$0.012} \\
 & \textit{5-dataset avg.} & 0.751 & \textbf{0.987} & \textbf{0.226} \\
\midrule
GVD-cVAE & BNCI2014\_001 & \textbf{1.000$\pm$0.000} & 0.363$\pm$0.013 & \underline{\textit{0.131$\pm$0.009}} \\
 & BNCI2014\_002 & \textbf{1.000$\pm$0.000} & 0.505$\pm$0.003 & \underline{\textit{0.138$\pm$0.004}} \\
 & BNCI2015\_001 & \textbf{1.000$\pm$0.000} & 0.497$\pm$0.018 & \underline{\textit{0.129$\pm$0.011}} \\
 & Shin2017A & \textbf{0.999$\pm$0.002} & 0.276$\pm$0.027 & \underline{\textit{0.058$\pm$0.014}} \\
 & Zhou2016 & \underline{\textit{1.000$\pm$0.000}} & 0.338$\pm$0.021 & \underline{\textit{0.185$\pm$0.006}} \\
 & \textit{5-dataset avg.} & \textbf{1.000} & 0.396 & \underline{\textit{0.128}} \\
\midrule
GVD-DDPM & BNCI2014\_001 & \underline{\textit{1.000$\pm$0.000}} & 0.827$\pm$0.014 & 0.030$\pm$0.006 \\
 & BNCI2014\_002 & \underline{\textit{1.000$\pm$0.000}} & 0.871$\pm$0.000 & 0.025$\pm$0.002 \\
 & BNCI2015\_001 & \underline{\textit{1.000$\pm$0.000}} & 0.884$\pm$0.011 & 0.034$\pm$0.001 \\
 & Shin2017A & 0.000$\pm$0.000 & 3.391$\pm$0.000 & 0.039$\pm$0.004 \\
 & Zhou2016 & 1.000$\pm$0.001 & 0.863$\pm$0.002 & 0.063$\pm$0.006 \\
 & \textit{5-dataset avg.} & \underline{\textit{0.800}} & 1.367 & 0.038 \\
\midrule
Window-DIFFEO-CFM & BNCI2014\_001 & 0.000$\pm$0.000 & 1.156$\pm$0.003 & 0.030$\pm$0.003 \\
 & BNCI2014\_002 & 0.000$\pm$0.000 & 1.133$\pm$0.006 & 0.021$\pm$0.001 \\
 & BNCI2015\_001 & 0.000$\pm$0.000 & 1.077$\pm$0.004 & 0.031$\pm$0.001 \\
 & Shin2017A & 0.000$\pm$0.000 & 1.488$\pm$0.007 & 0.014$\pm$0.002 \\
 & Zhou2016 & 0.000$\pm$0.000 & 1.092$\pm$0.001 & 0.057$\pm$0.002 \\
 & \textit{5-dataset avg.} & 0.000 & 1.189 & 0.031 \\
\bottomrule
\end{tabular}
\end{minipage}
\end{table*}

\begin{table*}[t]
\centering
\caption{Full per-dataset novelty, diversity, and coverage diagnostics (continued).}
\scriptsize
\renewcommand{\arraystretch}{0.92}
\setlength{\tabcolsep}{2.8pt}

\begin{minipage}{0.72\textwidth}
\centering
\begin{tabular}{llccc}
\toprule
Method & Dataset & Precision $\uparrow$ & Diversity $\to1$ & Coverage $\uparrow$ \\
\midrule
cVAE & BNCI2014\_001 & 0.799$\pm$0.090 & 0.709$\pm$0.002 & 0.027$\pm$0.001 \\
 & BNCI2014\_002 & 0.615$\pm$0.016 & 0.890$\pm$0.015 & 0.029$\pm$0.002 \\
 & BNCI2015\_001 & 0.101$\pm$0.031 & \underline{\textit{1.037$\pm$0.012}} & 0.022$\pm$0.003 \\
 & Shin2017A & \underline{\textit{0.875$\pm$0.122}} & 0.503$\pm$0.034 & 0.008$\pm$0.000 \\
 & Zhou2016 & 0.572$\pm$0.024 & \textbf{0.946$\pm$0.009} & 0.085$\pm$0.005 \\
 & \textit{5-dataset avg.} & 0.593 & 0.817 & 0.034 \\
\midrule
JET & BNCI2014\_001 & 0.000$\pm$0.000 & 1.340$\pm$0.169 & 0.011$\pm$0.005 \\
 & BNCI2014\_002 & 0.000$\pm$0.000 & 0.870$\pm$0.063 & 0.004$\pm$0.001 \\
 & BNCI2015\_001 & 0.000$\pm$0.000 & 0.833$\pm$0.032 & 0.004$\pm$0.001 \\
 & Shin2017A & 0.000$\pm$0.000 & 1.528$\pm$0.112 & 0.006$\pm$0.000 \\
 & Zhou2016 & 0.000$\pm$0.000 & 2.936$\pm$0.331 & 0.023$\pm$0.020 \\
 & \textit{5-dataset avg.} & 0.000 & 1.501 & 0.009 \\
\midrule
Vanilla-Diffusion & BNCI2014\_001 & 0.022$\pm$0.023 & \textbf{1.046$\pm$0.006} & 0.089$\pm$0.034 \\
 & BNCI2014\_002 & 0.098$\pm$0.008 & \textbf{1.032$\pm$0.038} & 0.111$\pm$0.016 \\
 & BNCI2015\_001 & 0.049$\pm$0.007 & \textbf{1.027$\pm$0.006} & 0.094$\pm$0.005 \\
 & Shin2017A & 0.000$\pm$0.000 & 1.303$\pm$0.134 & 0.011$\pm$0.004 \\
 & Zhou2016 & 0.110$\pm$0.053 & \underline{\textit{0.943$\pm$0.203}} & 0.119$\pm$0.034 \\
 & \textit{5-dataset avg.} & 0.056 & 1.070 & 0.085 \\
\midrule
EEGGAN-2025 & BNCI2014\_001 & 0.000$\pm$0.000 & 1.392$\pm$0.060 & 0.004$\pm$0.000 \\
 & BNCI2014\_002 & 0.000$\pm$0.000 & 0.778$\pm$0.046 & 0.002$\pm$0.001 \\
 & BNCI2015\_001 & 0.000$\pm$0.000 & 0.707$\pm$0.033 & 0.001$\pm$0.000 \\
 & Shin2017A & 0.000$\pm$0.000 & \underline{\textit{1.182$\pm$0.046}} & 0.003$\pm$0.001 \\
 & Zhou2016 & 0.000$\pm$0.000 & 1.106$\pm$0.016 & 0.004$\pm$0.001 \\
 & \textit{5-dataset avg.} & 0.000 & \underline{\textit{1.033}} & 0.003 \\
\bottomrule
\end{tabular}
\end{minipage}
\end{table*}
\begin{table*}[t]
\centering
\caption{Full per-dataset DCT and temporal-branch ablation corresponding to Table~\ref{tab:main_dct_ablation_5ds}: generative fidelity and utility. No DCT and DCT, spectral only are trained without the temporal branch; the last row of each block is the full GVD-CFM. Entries are mean$\pm$standard deviation over three generator seeds. Best values are \textbf{bold}; second-best values are \underline{\textit{underlined italics}}.}
\label{tab:appendix_dct_ablation_5ds}
\scriptsize
\renewcommand{\arraystretch}{0.95}
\setlength{\tabcolsep}{3.0pt}
\resizebox{0.92\textwidth}{!}{%
\begin{tabular}{llcccc}
\toprule
Dataset & Variant & Rel. GVD-FID $\downarrow$ & Eva F1 $\uparrow$ & CAS AUC $\uparrow$ & CAS F1 $\uparrow$ \\
\midrule
BNCI2014\_001 & No DCT & 1.030$\pm$0.017 & 0.641$\pm$0.090 & 0.773$\pm$0.027 & 0.691$\pm$0.023 \\
 & DCT, spectral only & \underline{\textit{1.016$\pm$0.015}} & \underline{\textit{0.825$\pm$0.021}} & \underline{\textit{0.784$\pm$0.006}} & \underline{\textit{0.695$\pm$0.005}} \\
 & GVD-CFM (DCT + temporal branch) & \textbf{1.006$\pm$0.014} & \textbf{0.864$\pm$0.027} & \textbf{0.789$\pm$0.015} & \textbf{0.707$\pm$0.007} \\
\midrule
BNCI2014\_002 & No DCT & 1.036$\pm$0.009 & 0.628$\pm$0.090 & 0.757$\pm$0.009 & 0.685$\pm$0.009 \\
 & DCT, spectral only & \underline{\textit{0.988$\pm$0.012}} & \underline{\textit{0.723$\pm$0.042}} & \textbf{0.789$\pm$0.006} & \textbf{0.716$\pm$0.006} \\
 & GVD-CFM (DCT + temporal branch) & \textbf{0.977$\pm$0.016} & \textbf{0.740$\pm$0.022} & \underline{\textit{0.780$\pm$0.015}} & \underline{\textit{0.709$\pm$0.005}} \\
\midrule
BNCI2015\_001 & No DCT & 1.009$\pm$0.006 & \textbf{0.889$\pm$0.026} & 0.704$\pm$0.002 & 0.640$\pm$0.011 \\
 & DCT, spectral only & \underline{\textit{0.992$\pm$0.002}} & \underline{\textit{0.789$\pm$0.059}} & \textbf{0.761$\pm$0.003} & \textbf{0.692$\pm$0.004} \\
 & GVD-CFM (DCT + temporal branch) & \textbf{0.978$\pm$0.005} & 0.775$\pm$0.073 & \underline{\textit{0.746$\pm$0.011}} & \underline{\textit{0.681$\pm$0.002}} \\
\midrule
Shin2017A & No DCT & 1.400$\pm$0.143 & 0.178$\pm$0.045 & 0.661$\pm$0.015 & 0.601$\pm$0.018 \\
 & DCT, spectral only & \textbf{1.153$\pm$0.022} & \underline{\textit{0.351$\pm$0.028}} & \underline{\textit{0.664$\pm$0.008}} & \underline{\textit{0.605$\pm$0.017}} \\
 & GVD-CFM (DCT + temporal branch) & \underline{\textit{1.190$\pm$0.084}} & \textbf{0.482$\pm$0.039} & \textbf{0.672$\pm$0.029} & \textbf{0.622$\pm$0.025} \\
\midrule
Zhou2016 & No DCT & 0.985$\pm$0.002 & \textbf{0.364$\pm$0.021} & \textbf{0.967$\pm$0.009} & \underline{\textit{0.903$\pm$0.007}} \\
 & DCT, spectral only & \underline{\textit{0.978$\pm$0.003}} & \underline{\textit{0.311$\pm$0.047}} & 0.964$\pm$0.002 & 0.896$\pm$0.003 \\
 & GVD-CFM (DCT + temporal branch) & \textbf{0.959$\pm$0.033} & 0.203$\pm$0.110 & \underline{\textit{0.965$\pm$0.003}} & \textbf{0.905$\pm$0.014} \\
\midrule
\textit{5-dataset avg.} & No DCT & 1.092 & 0.540 & 0.772 & 0.704 \\
\textit{5-dataset avg.} & DCT, spectral only & \underline{\textit{1.025}} & \underline{\textit{0.600}} & \textbf{0.792} & \underline{\textit{0.721}} \\
\textit{5-dataset avg.} & GVD-CFM (DCT + temporal branch) & \textbf{1.022} & \textbf{0.613} & \underline{\textit{0.790}} & \textbf{0.725} \\
\bottomrule
\end{tabular}}
\end{table*}

\begin{table*}[t]
\centering
\caption{Full per-dataset DCT and temporal-branch ablation: temporal and dynamic fidelity. Entries are mean$\pm$standard deviation over three generator seeds. For ratio metrics, ranking is by proximity to 1. Best values are \textbf{bold}; second-best values are \underline{\textit{underlined italics}}.}
\label{tab:appendix_dct_ablation_temporal_5ds}
\scriptsize
\renewcommand{\arraystretch}{0.95}
\setlength{\tabcolsep}{3.0pt}
\resizebox{0.92\textwidth}{!}{%
\begin{tabular}{llccccc}
\toprule
Dataset & Variant & Temp. corr. $\uparrow$ & Lag-ACF $\uparrow$ & Energy $\to1$ & Dyn. frac. $\to1$ & Adjacent $\to1$ \\
\midrule
BNCI2014\_001 & No DCT & 0.065$\pm$0.022 & \underline{\textit{0.465$\pm$0.158}} & \underline{\textit{0.992$\pm$0.027}} & \textbf{1.005$\pm$0.059} & 1.231$\pm$0.034 \\
 & DCT, spectral only & \textbf{0.939$\pm$0.003} & \textbf{0.999$\pm$0.000} & 1.013$\pm$0.026 & 1.025$\pm$0.031 & \textbf{1.052$\pm$0.024} \\
 & GVD-CFM (DCT + temporal branch) & \underline{\textit{0.920$\pm$0.006}} & \textbf{0.999$\pm$0.000} & \textbf{0.995$\pm$0.023} & \underline{\textit{1.024$\pm$0.052}} & \underline{\textit{1.070$\pm$0.013}} \\
\midrule
BNCI2014\_002 & No DCT & 0.140$\pm$0.018 & \underline{\textit{0.649$\pm$0.177}} & \underline{\textit{0.974$\pm$0.019}} & \textbf{0.980$\pm$0.013} & 1.193$\pm$0.024 \\
 & DCT, spectral only & \textbf{0.842$\pm$0.003} & \textbf{0.999$\pm$0.001} & \textbf{0.984$\pm$0.032} & \underline{\textit{0.963$\pm$0.002}} & \textbf{0.990$\pm$0.024} \\
 & GVD-CFM (DCT + temporal branch) & \underline{\textit{0.838$\pm$0.003}} & \textbf{0.999$\pm$0.000} & 0.948$\pm$0.018 & 0.938$\pm$0.036 & \underline{\textit{0.977$\pm$0.017}} \\
\midrule
BNCI2015\_001 & No DCT & 0.934$\pm$0.003 & \underline{\textit{0.984$\pm$0.002}} & \textbf{0.971$\pm$0.005} & \underline{\textit{1.010$\pm$0.010}} & 1.051$\pm$0.007 \\
 & DCT, spectral only & \underline{\textit{0.956$\pm$0.000}} & \textbf{0.999$\pm$0.000} & \underline{\textit{0.953$\pm$0.015}} & \textbf{0.995$\pm$0.004} & \underline{\textit{0.971$\pm$0.015}} \\
 & GVD-CFM (DCT + temporal branch) & \textbf{0.958$\pm$0.001} & \textbf{0.999$\pm$0.000} & 0.938$\pm$0.016 & 0.976$\pm$0.007 & \textbf{0.976$\pm$0.015} \\
\midrule
Shin2017A & No DCT & 0.016$\pm$0.020 & 0.252$\pm$0.251 & 1.084$\pm$0.074 & 1.199$\pm$0.051 & 1.695$\pm$0.116 \\
 & DCT, spectral only & \textbf{0.961$\pm$0.003} & \textbf{0.995$\pm$0.000} & \underline{\textit{1.080$\pm$0.033}} & \underline{\textit{1.170$\pm$0.022}} & \underline{\textit{1.318$\pm$0.020}} \\
 & GVD-CFM (DCT + temporal branch) & \underline{\textit{0.958$\pm$0.002}} & \underline{\textit{0.994$\pm$0.001}} & \textbf{1.068$\pm$0.006} & \textbf{1.161$\pm$0.011} & \textbf{1.317$\pm$0.005} \\
\midrule
Zhou2016 & No DCT & 0.865$\pm$0.006 & 0.981$\pm$0.003 & \underline{\textit{0.842$\pm$0.002}} & \underline{\textit{0.861$\pm$0.030}} & \textbf{0.942$\pm$0.004} \\
 & DCT, spectral only & \textbf{0.895$\pm$0.009} & \textbf{0.994$\pm$0.003} & \textbf{0.847$\pm$0.001} & \textbf{0.898$\pm$0.035} & \underline{\textit{0.898$\pm$0.005}} \\
 & GVD-CFM (DCT + temporal branch) & \underline{\textit{0.877$\pm$0.017}} & \underline{\textit{0.992$\pm$0.005}} & 0.815$\pm$0.039 & 0.835$\pm$0.011 & 0.871$\pm$0.027 \\
\midrule
\textit{5-dataset avg.} & No DCT & 0.404 & \underline{\textit{0.666}} & \underline{\textit{0.973}} & \underline{\textit{1.011}} & 1.222 \\
\textit{5-dataset avg.} & DCT, spectral only & \textbf{0.919} & \textbf{0.997} & \textbf{0.975} & \textbf{1.010} & \underline{\textit{1.046}} \\
\textit{5-dataset avg.} & GVD-CFM (DCT + temporal branch) & \underline{\textit{0.910}} & \textbf{0.997} & 0.953 & 0.987 & \textbf{1.042} \\
\bottomrule
\end{tabular}}
\end{table*}

\begin{table*}[t]
\centering
\caption{Paired per-dataset comparison for the DCT and temporal-branch ablation. Each entry is the change in the dataset mean (second variant minus first) followed by the Welch $t$-statistic computed from three generator seeds per variant, $t=\Delta/\sqrt{(\sigma_1^2+\sigma_2^2)/3}$. Entries with $|t|\geq2$ are \textbf{bold}; a dash marks zero seed variance. For Rel. GVD-FID a negative change is an improvement; for Energy the ideal value is 1.}
\label{tab:appendix_paired_ablation}
\scriptsize
\setlength{\tabcolsep}{2.6pt}
\resizebox{\textwidth}{!}{%
\begin{tabular}{llccccc}
\toprule
Comparison & Metric & BNCI2014\_001 & BNCI2014\_002 & BNCI2015\_001 & Shin2017A & Zhou2016 \\
\midrule
No DCT $\rightarrow$ DCT, spectral only & Rel. GVD-FID & -0.014 (-1.1) & -0.048 (\textbf{-5.5}) & -0.017 (\textbf{-4.7}) & -0.247 (\textbf{-3.0}) & -0.007 (\textbf{-3.4}) \\
 & Eva F1 & +0.184 (\textbf{3.4}) & +0.095 (1.7) & -0.100 (\textbf{-2.7}) & +0.173 (\textbf{5.7}) & -0.053 (-1.8) \\
 & CAS AUC & +0.011 (0.7) & +0.032 (\textbf{5.1}) & +0.057 (\textbf{27.4}) & +0.003 (0.3) & -0.003 (-0.6) \\
 & CAS F1 & +0.004 (0.3) & +0.031 (\textbf{5.0}) & +0.052 (\textbf{7.7}) & +0.004 (0.3) & -0.007 (-1.6) \\
 & Temp. corr. & +0.874 (\textbf{68.2}) & +0.702 (\textbf{66.6}) & +0.022 (\textbf{12.7}) & +0.945 (\textbf{80.9}) & +0.030 (\textbf{4.8}) \\
 & Lag-ACF & +0.534 (\textbf{5.9}) & +0.350 (\textbf{3.4}) & +0.015 (\textbf{13.0}) & +0.743 (\textbf{5.1}) & +0.013 (\textbf{5.3}) \\
 & Energy & +0.021 (1.0) & +0.010 (0.5) & -0.018 (-2.0) & -0.004 (-0.1) & +0.005 (\textbf{3.9}) \\
\midrule
DCT, spectral only $\rightarrow$ GVD-CFM & Rel. GVD-FID & -0.010 (-0.8) & -0.011 (-1.0) & -0.014 (\textbf{-4.5}) & +0.037 (0.7) & -0.019 (-1.0) \\
 & Eva F1 & +0.039 (2.0) & +0.017 (0.6) & -0.014 (-0.3) & +0.131 (\textbf{4.7}) & -0.108 (-1.6) \\
 & CAS AUC & +0.005 (0.5) & -0.009 (-1.0) & -0.015 (\textbf{-2.3}) & +0.008 (0.5) & +0.001 (0.5) \\
 & CAS F1 & +0.012 (\textbf{2.4}) & -0.007 (-1.6) & -0.011 (\textbf{-4.3}) & +0.017 (1.0) & +0.009 (1.1) \\
 & Temp. corr. & -0.019 (\textbf{-4.9}) & -0.004 (-1.6) & +0.002 (\textbf{3.5}) & -0.003 (-1.4) & -0.018 (-1.6) \\
 & Lag-ACF & +0.000 (--) & +0.000 (0.0) & +0.000 (--) & -0.001 (-1.7) & -0.002 (-0.6) \\
 & Energy & -0.018 (-0.9) & -0.036 (-1.7) & -0.015 (-1.2) & -0.012 (-0.6) & -0.032 (-1.4) \\
\bottomrule
\end{tabular}}
\end{table*}

\begin{table*}[t]
\centering
\caption{Full per-dataset stable-support ablation summarized in Section~\ref{sec:timecontext_ablation}. GVD-CFM uses the canonical ridge-free stable support; the no-support control uses the ridge required to obtain SPD matrices and the spectral-only velocity network. Entries are mean$\pm$standard deviation over three generator seeds. Best values are \textbf{bold}; second-best values are \underline{\textit{underlined italics}}.}
\label{tab:appendix_support_ablation_5ds}
\scriptsize
\renewcommand{\arraystretch}{0.95}
\setlength{\tabcolsep}{3.0pt}
\resizebox{\textwidth}{!}{%
\begin{tabular}{llccccccc}
\toprule
Dataset & Variant & Rel. GVD-FID $\downarrow$ & Eva F1 $\uparrow$ & CAS AUC $\uparrow$ & CAS F1 $\uparrow$ & Temp. corr. $\uparrow$ & Lag-ACF $\uparrow$ & Energy $\to1$ \\
\midrule
BNCI2014\_001 & GVD-CFM (stable support) & \underline{\textit{1.006$\pm$0.014}} & \textbf{0.864$\pm$0.027} & \textbf{0.789$\pm$0.015} & \textbf{0.707$\pm$0.007} & \underline{\textit{0.920$\pm$0.006}} & \underline{\textit{0.999$\pm$0.000}} & \textbf{0.995$\pm$0.023} \\
 & No support + ridge & \textbf{0.917$\pm$0.006} & \underline{\textit{0.292$\pm$0.032}} & \underline{\textit{0.667$\pm$0.019}} & \underline{\textit{0.614$\pm$0.017}} & \textbf{0.977$\pm$0.001} & \textbf{1.000$\pm$0.000} & \underline{\textit{0.826$\pm$0.007}} \\
\midrule
BNCI2014\_002 & GVD-CFM (stable support) & \underline{\textit{0.977$\pm$0.016}} & \textbf{0.740$\pm$0.022} & \textbf{0.780$\pm$0.015} & \textbf{0.709$\pm$0.005} & \underline{\textit{0.838$\pm$0.003}} & \textbf{0.999$\pm$0.000} & \textbf{0.948$\pm$0.018} \\
 & No support + ridge & \textbf{0.909$\pm$0.001} & \underline{\textit{0.481$\pm$0.024}} & \underline{\textit{0.736$\pm$0.009}} & \underline{\textit{0.674$\pm$0.010}} & \textbf{0.929$\pm$0.001} & \textbf{0.999$\pm$0.000} & \underline{\textit{0.850$\pm$0.010}} \\
\midrule
BNCI2015\_001 & GVD-CFM (stable support) & \underline{\textit{0.978$\pm$0.005}} & \textbf{0.775$\pm$0.073} & \textbf{0.746$\pm$0.011} & \textbf{0.681$\pm$0.002} & \underline{\textit{0.958$\pm$0.001}} & \textbf{0.999$\pm$0.000} & \textbf{0.938$\pm$0.016} \\
 & No support + ridge & \textbf{0.921$\pm$0.005} & \underline{\textit{0.372$\pm$0.050}} & \underline{\textit{0.698$\pm$0.012}} & \underline{\textit{0.638$\pm$0.006}} & \textbf{0.980$\pm$0.000} & \textbf{0.999$\pm$0.000} & \underline{\textit{0.874$\pm$0.006}} \\
\midrule
Shin2017A & GVD-CFM (stable support) & \underline{\textit{1.190$\pm$0.084}} & \underline{\textit{0.482$\pm$0.039}} & \textbf{0.672$\pm$0.029} & \textbf{0.622$\pm$0.025} & \underline{\textit{0.958$\pm$0.002}} & \underline{\textit{0.994$\pm$0.001}} & \underline{\textit{1.068$\pm$0.006}} \\
 & No support + ridge & \textbf{0.999$\pm$0.010} & \textbf{0.800$\pm$0.035} & \underline{\textit{0.622$\pm$0.008}} & \underline{\textit{0.584$\pm$0.005}} & \textbf{0.971$\pm$0.002} & \textbf{0.996$\pm$0.000} & \textbf{0.990$\pm$0.021} \\
\midrule
Zhou2016 & GVD-CFM (stable support) & \underline{\textit{0.959$\pm$0.033}} & \underline{\textit{0.203$\pm$0.110}} & \textbf{0.965$\pm$0.003} & \textbf{0.905$\pm$0.014} & \underline{\textit{0.877$\pm$0.017}} & \textbf{0.992$\pm$0.005} & \textbf{0.815$\pm$0.039} \\
 & No support + ridge & \textbf{0.922$\pm$0.004} & \textbf{0.237$\pm$0.021} & \underline{\textit{0.955$\pm$0.007}} & \underline{\textit{0.883$\pm$0.010}} & \textbf{0.905$\pm$0.007} & \underline{\textit{0.980$\pm$0.007}} & \underline{\textit{0.794$\pm$0.007}} \\
\midrule
\textit{5-dataset avg.} & GVD-CFM (stable support) & \underline{\textit{1.022}} & \textbf{0.613} & \textbf{0.790} & \textbf{0.725} & \underline{\textit{0.910}} & \textbf{0.997} & \textbf{0.953} \\
\textit{5-dataset avg.} & No support + ridge & \textbf{0.934} & \underline{\textit{0.436}} & \underline{\textit{0.736}} & \underline{\textit{0.679}} & \textbf{0.952} & \underline{\textit{0.995}} & \underline{\textit{0.867}} \\
\bottomrule
\end{tabular}}
\end{table*}


\begin{table*}[t]
\centering
\caption{Full temporal-basis ablation across the five main EEG datasets: generative fidelity and downstream utility. Each per-dataset entry is mean$\pm$standard deviation over three generator seeds. All four variants retain all $B=100$ temporal coordinates. Random orthogonal uses a fixed random orthogonal basis, KLT the training-set empirical temporal Karhunen--Lo\`eve basis, and GVD-CFM (DCT) the fixed orthonormal DCT-II basis with the full network including the temporal branch; the other variants use the spectral-only velocity network. Best values are \textbf{bold}; second-best values are \underline{\textit{underlined italics}}.}
\label{tab:appendix_four_basis_ablation_utility}
\scriptsize
\renewcommand{\arraystretch}{0.92}
\setlength{\tabcolsep}{3.2pt}

\resizebox{0.96\textwidth}{!}{%
\begin{tabular}{llcccc}
\toprule
Dataset
& Basis
& Rel. GVD-FID $\downarrow$
& Eva F1 $\uparrow$
& CAS AUC $\uparrow$
& CAS F1 $\uparrow$ \\
\midrule
BNCI2014\_001 & No DCT & 1.030$\pm$0.017 & 0.641$\pm$0.090 & 0.773$\pm$0.027 & 0.691$\pm$0.023 \\
 & Random orthogonal & 1.071$\pm$0.003 & 0.461$\pm$0.045 & 0.780$\pm$0.021 & 0.695$\pm$0.011 \\
 & GVD-CFM (DCT) & \textbf{1.006$\pm$0.014} & \textbf{0.864$\pm$0.027} & \textbf{0.789$\pm$0.015} & \textbf{0.707$\pm$0.007} \\
 & KLT & \underline{\textit{1.023$\pm$0.006}} & \underline{\textit{0.802$\pm$0.008}} & \underline{\textit{0.787$\pm$0.012}} & \underline{\textit{0.697$\pm$0.010}} \\
\midrule
BNCI2014\_002 & No DCT & 1.036$\pm$0.009 & 0.628$\pm$0.090 & 0.757$\pm$0.009 & 0.685$\pm$0.009 \\
 & Random orthogonal & 1.048$\pm$0.015 & 0.578$\pm$0.050 & \underline{\textit{0.783$\pm$0.006}} & 0.700$\pm$0.015 \\
 & GVD-CFM (DCT) & \textbf{0.977$\pm$0.016} & \underline{\textit{0.740$\pm$0.022}} & 0.780$\pm$0.015 & \underline{\textit{0.709$\pm$0.005}} \\
 & KLT & \underline{\textit{0.984$\pm$0.002}} & \textbf{0.758$\pm$0.025} & \textbf{0.794$\pm$0.035} & \textbf{0.737$\pm$0.039} \\
\midrule
BNCI2015\_001 & No DCT & 1.009$\pm$0.006 & \textbf{0.889$\pm$0.026} & 0.704$\pm$0.002 & 0.640$\pm$0.011 \\
 & Random orthogonal & 1.031$\pm$0.019 & 0.652$\pm$0.121 & \underline{\textit{0.754$\pm$0.014}} & \underline{\textit{0.683$\pm$0.013}} \\
 & GVD-CFM (DCT) & \textbf{0.978$\pm$0.005} & 0.775$\pm$0.073 & 0.746$\pm$0.011 & 0.681$\pm$0.002 \\
 & KLT & \underline{\textit{0.994$\pm$0.003}} & \underline{\textit{0.846$\pm$0.022}} & \textbf{0.755$\pm$0.013} & \textbf{0.690$\pm$0.010} \\
\midrule
Shin2017A & No DCT & 1.400$\pm$0.143 & 0.178$\pm$0.045 & 0.661$\pm$0.015 & 0.601$\pm$0.018 \\
 & Random orthogonal & 1.304$\pm$0.041 & 0.075$\pm$0.009 & 0.659$\pm$0.014 & 0.596$\pm$0.015 \\
 & GVD-CFM (DCT) & \underline{\textit{1.190$\pm$0.084}} & \textbf{0.482$\pm$0.039} & \underline{\textit{0.672$\pm$0.029}} & \textbf{0.622$\pm$0.025} \\
 & KLT & \textbf{1.183$\pm$0.036} & \underline{\textit{0.316$\pm$0.036}} & \textbf{0.675$\pm$0.006} & \underline{\textit{0.620$\pm$0.008}} \\
\midrule
Zhou2016 & No DCT & 0.985$\pm$0.002 & \underline{\textit{0.364$\pm$0.021}} & \textbf{0.967$\pm$0.009} & \underline{\textit{0.903$\pm$0.007}} \\
 & Random orthogonal & 1.013$\pm$0.018 & \textbf{0.636$\pm$0.172} & 0.964$\pm$0.002 & 0.891$\pm$0.006 \\
 & GVD-CFM (DCT) & \textbf{0.959$\pm$0.033} & 0.203$\pm$0.110 & 0.965$\pm$0.003 & \textbf{0.905$\pm$0.014} \\
 & KLT & \underline{\textit{0.979$\pm$0.003}} & 0.277$\pm$0.035 & \underline{\textit{0.966$\pm$0.002}} & 0.901$\pm$0.005 \\
\midrule
\textit{5-dataset avg.} & No DCT & 1.092 & 0.540 & 0.772 & 0.704 \\
\textit{5-dataset avg.} & Random orthogonal & 1.093 & 0.480 & 0.788 & 0.713 \\
\textit{5-dataset avg.} & GVD-CFM (DCT) & \textbf{1.022} & \textbf{0.613} & \underline{\textit{0.790}} & \underline{\textit{0.725}} \\
\textit{5-dataset avg.} & KLT & \underline{\textit{1.033}} & \underline{\textit{0.600}} & \textbf{0.795} & \textbf{0.729} \\
\bottomrule
\end{tabular}}
\end{table*}

\begin{thebibliography}{118}
\providecommand{\natexlab}[1]{#1}
\providecommand{\url}[1]{\texttt{#1}}
\expandafter\ifx\csname urlstyle\endcsname\relax
  \providecommand{\doi}[1]{doi: #1}\else
  \providecommand{\doi}{doi: \begingroup \urlstyle{rm}\Url}\fi

\bibitem[Ahmed et~al.(1974)Ahmed, Natarajan, and Rao]{ahmed1974dct}
Nasir Ahmed, T.~Natarajan, and Kamisetty~R. Rao.
\newblock Discrete cosine transform.
\newblock \emph{IEEE Transactions on Computers}, C-23\penalty0 (1):\penalty0
  90--93, 1974.

\bibitem[Alaa et~al.(2022)Alaa, van Breugel, Saveliev, and van~der
  Schaar]{alaa2022faithful}
Ahmed Alaa, Boris van Breugel, Evgeny~S. Saveliev, and Mihaela van~der Schaar.
\newblock How faithful is your synthetic data? {S}ample-level metrics for
  evaluating and auditing generative models.
\newblock In \emph{39th International Conference on Machine Learning}, pages
  290--306, Baltimore, MD, July 2022.

\bibitem[Albergo and Vanden-Eijnden(2023)]{albergo2023interpolants}
Michael~S. Albergo and Eric Vanden-Eijnden.
\newblock Building normalizing flows with stochastic interpolants.
\newblock In \emph{11th International Conference on Learning Representations},
  Kigali, Rwanda, May 2023.

\bibitem[Allen et~al.(2014)Allen, Damaraju, Plis, Erhardt, Eichele, and
  Calhoun]{allen2014tracking}
Elena~A. Allen, Eswar Damaraju, Sergey~M. Plis, Erik~B. Erhardt, Tom Eichele,
  and Vince~D. Calhoun.
\newblock Tracking whole-brain connectivity dynamics in the resting state.
\newblock \emph{Cerebral Cortex}, 24\penalty0 (3):\penalty0 663--676, 2014.

\bibitem[Aristimunha et~al.(2023)Aristimunha, Carrara, Guetschel, Sedlar,
  Rodrigues, Sosulski, Narayanan, Bjareholt, Barthelemy, Schirrmeister,
  Kalunga, Darmet, Gregoire, Hussain, Gatti, Goncharenko, Thielen, Moreau, Roy,
  Jayaram, Barachant, and Chevallier]{aristimunha2023moabb}
Bruno Aristimunha, Igor Carrara, Pierre Guetschel, Sara Sedlar, Pedro
  Rodrigues, Jan Sosulski, Divyesh Narayanan, Erik Bjareholt, Quentin
  Barthelemy, Robin~T. Schirrmeister, Emmanuel Kalunga, Ludovic Darmet, Cattan
  Gregoire, Ali~Abdul Hussain, Ramiro Gatti, Vladislav Goncharenko, Jordy
  Thielen, Thomas Moreau, Yannick Roy, Vinay Jayaram, Alexandre Barachant, and
  Sylvain Chevallier.
\newblock Mother of all {BCI} benchmarks ({MOABB}), 2023.

\bibitem[Arjovsky et~al.(2017)Arjovsky, Chintala, and Bottou]{arjovsky2017wgan}
Martin Arjovsky, Soumith Chintala, and L\'{e}on Bottou.
\newblock {W}asserstein generative adversarial networks.
\newblock In \emph{34th International Conference on Machine Learning}, pages
  214--223, Sydney, Australia, August 2017.

\bibitem[Arsigny et~al.(2007)Arsigny, Fillard, Pennec, and
  Ayache]{arsigny2007logeuclidean}
Vincent Arsigny, Pierre Fillard, Xavier Pennec, and Nicholas Ayache.
\newblock Geometric means in a novel vector space structure on symmetric
  positive-definite matrices.
\newblock \emph{SIAM Journal on Matrix Analysis and Applications}, 29\penalty0
  (1):\penalty0 328--347, 2007.

\bibitem[Ba et~al.(2016)Ba, Kiros, and Hinton]{ba2016layernorm}
Jimmy~Lei Ba, Jamie~Ryan Kiros, and Geoffrey~E. Hinton.
\newblock Layer normalization.
\newblock \emph{arXiv:1607.06450 [stat.ML]}, 2016.

\bibitem[Barachant(2012)]{barachant2012thesis}
Alexandre Barachant.
\newblock \emph{Commande robuste d'un effecteur par une interface cerveau
  machine {EEG} asynchrone}.
\newblock PhD thesis, Universit\'{e} de Grenoble, 2012.

\bibitem[Barachant et~al.(2012)Barachant, Bonnet, Congedo, and
  Jutten]{barachant2012riemannian}
Alexandre Barachant, St\'{e}phane Bonnet, Marco Congedo, and Christian Jutten.
\newblock Multiclass brain--computer interface classification by {R}iemannian
  geometry.
\newblock \emph{IEEE Transactions on Biomedical Engineering}, 59\penalty0
  (4):\penalty0 920--928, 2012.

\bibitem[Blankertz et~al.(2008)Blankertz, Tomioka, Lemm, Kawanabe, and
  M\"{u}ller]{blankertz2008csp}
Benjamin Blankertz, Ryota Tomioka, Steven Lemm, Motoaki Kawanabe, and
  Klaus-Robert M\"{u}ller.
\newblock Optimizing spatial filters for robust {EEG} single-trial analysis.
\newblock \emph{IEEE Signal Processing Magazine}, 25\penalty0 (1):\penalty0
  41--56, 2008.

\bibitem[De~Bortoli et~al.(2022)De~Bortoli, Mathieu, Hutchinson, Thornton, Teh, and
  Doucet]{debortoli2022riemannian}
Valentin~De Bortoli, Emile Mathieu, Michael Hutchinson, James Thornton,
  Yee~Whye Teh, and Arnaud Doucet.
\newblock {R}iemannian score-based generative modelling.
\newblock In \emph{36th Conference on Neural Information Processing Systems},
  pages 2406--2422, December 2022.

\bibitem[Brown et~al.(2020)Brown, Mann, Ryder, Subbiah, Kaplan, Dhariwal,
  Neelakantan, Shyam, Sastry, Askell, Agarwal, Herbert-Voss, Krueger, Henighan,
  Child, Ramesh, Ziegler, Wu, Winter, Hesse, Chen, Sigler, Litwin, Gray, Chess,
  Clark, Berner, McCandlish, Radford, Sutskever, and Amodei]{brown2020gpt3}
Tom~B. Brown, Benjamin Mann, Nick Ryder, Melanie Subbiah, Jared Kaplan,
  Prafulla Dhariwal, Arvind Neelakantan, Pranav Shyam, Girish Sastry, Amanda
  Askell, Sandhini Agarwal, Ariel Herbert-Voss, Gretchen Krueger, Tom Henighan,
  Rewon Child, Aditya Ramesh, Daniel~M. Ziegler, Jeffrey Wu, Clemens Winter,
  Christopher Hesse, Mark Chen, Eric Sigler, Mateusz Litwin, Scott Gray,
  Benjamin Chess, Jack Clark, Christopher Berner, Sam McCandlish, Alec Radford,
  Ilya Sutskever, and Dario Amodei.
\newblock Language models are few-shot learners.
\newblock In \emph{34th Conference on Neural Information Processing Systems},
  pages 1877--1901, December 2020.

\bibitem[Brunner et~al.(2008)Brunner, Leeb, M\"{u}ller-Putz, Schl\"{o}gl, and
  Pfurtscheller]{brunner2008graz}
Clemens Brunner, Robert Leeb, Gernot M\"{u}ller-Putz, Alois Schl\"{o}gl, and
  Gert Pfurtscheller.
\newblock {BCI} {C}ompetition 2008 -- {G}raz data set {A}.
\newblock Technical report, Institute for Knowledge Discovery, Graz University
  of Technology, 2008.

\bibitem[Cavallo et~al.(2024)Cavallo, Sabbaqi, and
  Isufi]{cavallo2024spatiotemporal}
Andrea Cavallo, Maosheng Sabbaqi, and Elvin Isufi.
\newblock Spatiotemporal covariance neural networks.
\newblock In \emph{Joint European Conference on Machine Learning and Knowledge
  Discovery in Databases}, pages 18--34, Cham, August 2024. Springer Nature
  Switzerland.

\bibitem[Chen and Lipman(2024)]{chen2024geometries}
Ricky T.~Q. Chen and Yaron Lipman.
\newblock Flow matching on general geometries.
\newblock In \emph{12th International Conference on Learning Representations},
  Vienna, Austria, May 2024.

\bibitem[Chen et~al.(2018)Chen, Rubanova, Bettencourt, and
  Duvenaud]{chen2018neuralode}
Ricky T.~Q. Chen, Yulia Rubanova, Jesse Bettencourt, and David~K. Duvenaud.
\newblock Neural ordinary differential equations.
\newblock In \emph{32nd Conference on Neural Information Processing Systems},
  pages 6572--6583, Montr\'{e}al, QC, December 2018.

\bibitem[Chen et~al.(2010)Chen, Wiesel, Eldar, and Hero]{chen2010oas}
Yilun Chen, Ami Wiesel, Yonina~C. Eldar, and Alfred~O. Hero.
\newblock Shrinkage algorithms for {MMSE} covariance estimation.
\newblock \emph{IEEE Transactions on Signal Processing}, 58\penalty0
  (10):\penalty0 5016--5029, 2010.

\bibitem[Collas et~al.(2025)Collas, Ju, Salvy, and
  Thirion]{collas2025diffeocfm}
Antoine Collas, Ce~Ju, Nicolas Salvy, and Bertrand Thirion.
\newblock {R}iemannian flow matching for brain connectivity matrices via
  pullback geometry ({DIFFEOCFM}).
\newblock In \emph{39th Conference on Neural Information Processing Systems},
  December 2025.

\bibitem[Cuturi(2013)]{cuturi2013sinkhorn}
Marco Cuturi.
\newblock Sinkhorn distances: Lightspeed computation of optimal transport.
\newblock In \emph{Advances in Neural Information Processing Systems}, 2013.

\bibitem[David and Gu(2019)]{david2019correlation}
Paul David and Weiqing Gu.
\newblock A {R}iemannian structure for correlation matrices.
\newblock \emph{Operators and Matrices}, 13\penalty0 (3):\penalty0 607--627,
  2019.

\bibitem[de~Surrel et~al.(2025)de~Surrel, Lotte, Chevallier, and
  Yger]{desurrel2025wrapped}
Thibault de~Surrel, Fabien Lotte, Sylvain Chevallier, and Florian Yger.
\newblock Wrapped {G}aussian on the manifold of symmetric positive definite
  matrices.
\newblock In \emph{42nd International Conference on Machine Learning}, July
  2025.

\bibitem[Dhariwal and Nichol(2021)]{dhariwal2021beatgans}
Prafulla Dhariwal and Alexander Nichol.
\newblock Diffusion models beat {GAN}s on image synthesis.
\newblock In \emph{35th Conference on Neural Information Processing Systems},
  pages 8780--8794, December 2021.

\bibitem[Ding et~al.(2025)Ding, Jaquier, Peters, and Rozo]{ding2025visuomotor}
Haoran Ding, No\'{e}mie Jaquier, Jan Peters, and Leonel Rozo.
\newblock Fast and robust visuomotor {R}iemannian flow matching policy.
\newblock \emph{IEEE Transactions on Robotics}, 2025.

\bibitem[Dinh et~al.(2017)Dinh, Sohl-Dickstein, and Bengio]{dinh2017realnvp}
Laurent Dinh, Jascha Sohl-Dickstein, and Samy Bengio.
\newblock Density estimation using {R}eal {NVP}.
\newblock In \emph{5th International Conference on Learning Representations},
  Toulon, France, April 2017.

\bibitem[Dowson and Landau(1982)]{dowson1982frechet}
D.~C. Dowson and B.~V. Landau.
\newblock The {F}r\'{e}chet distance between multivariate normal distributions.
\newblock \emph{Journal of Multivariate Analysis}, 12\penalty0 (3):\penalty0
  450--455, 1982.

\bibitem[Esteban et~al.(2017)Esteban, Hyland, and R\"{a}tsch]{esteban2017rcgan}
Crist\'{o}bal Esteban, Stephanie~L. Hyland, and Gunnar R\"{a}tsch.
\newblock Real-valued (medical) time series generation with recurrent
  conditional {GAN}s.
\newblock \emph{arXiv:1706.02633 [stat.ML]}, 2017.

\bibitem[Faller et~al.(2012)Faller, Vidaurre, Solis-Escalante, Neuper, and
  Scherer]{faller2012autocalibration}
Josef Faller, Carmen Vidaurre, Teodoro Solis-Escalante, Christa Neuper, and
  Reinhold Scherer.
\newblock Autocalibration and recurrent adaptation: towards a plug and play
  online {ERD-BCI}.
\newblock \emph{IEEE Transactions on Neural Systems and Rehabilitation
  Engineering}, 20\penalty0 (3):\penalty0 313--319, 2012.

\bibitem[Falorsi et~al.(2019)Falorsi, de~Haan, Davidson, and
  Forr\'{e}]{falorsi2019lie}
Luca Falorsi, Pim de~Haan, Tim~R. Davidson, and Patrick Forr\'{e}.
\newblock Reparameterizing distributions on {L}ie groups.
\newblock In \emph{22nd International Conference on Artificial Intelligence and
  Statistics}, pages 3244--3253, Naha, Japan, April 2019.

\bibitem[Fletcher and Joshi(2004)]{fletcher2004pga}
P.~Thomas Fletcher and Sarang Joshi.
\newblock Principal geodesic analysis on symmetric spaces: statistics of
  diffusion tensors.
\newblock In \emph{Sonka, M., Kakadiaris, I.A., Kybic, J. (eds) Computer Vision and Mathematical Methods in Medical and Biomedical Image Analysis. MMBIA CVAMIA 2004 2004.}, Lecture Notes in Computer Science, vol 3117. Springer, Berlin, Heidelberg.

\bibitem[Fox et~al.(2014)Fox, Buckner, Liu, Chakravarty, Lozano, and
  Pascual-Leone]{fox2014resting}
Michael~D. Fox, Randy~L. Buckner, Hesheng Liu, M.~Mallar Chakravarty, Andres~M.
  Lozano, and Alvaro Pascual-Leone.
\newblock Resting-state networks link invasive and noninvasive brain
  stimulation across diverse psychiatric and neurological diseases.
\newblock \emph{Proceedings of the National Academy of Sciences}, 111\penalty0
  (41):\penalty0 E4367--E4375, 2014.

\bibitem[Golub and Loan(2013)]{golub2013matrix}
Gene~H. Golub and Charles F.~Van Loan.
\newblock \emph{Matrix Computations}.
\newblock JHU Press, Baltimore, MD, 4th edition, 2013.

\bibitem[Goodfellow et~al.(2014)Goodfellow, Pouget-Abadie, Mirza, Xu,
  Warde-Farley, Ozair, Courville, and Bengio]{goodfellow2014gan}
Ian Goodfellow, Jean Pouget-Abadie, Mehdi Mirza, Bing Xu, David Warde-Farley,
  Sherjil Ozair, Aaron Courville, and Yoshua Bengio.
\newblock Generative adversarial networks.
\newblock In \emph{28th Conference on Neural Information Processing Systems},
  pages 2672--2680, Montr\'{e}al, QC, December 2014.

\bibitem[Gramfort et~al.(2013)Gramfort, Luessi, Larson, Engemann, Strohmeier,
  Brodbeck, Goj, Jas, Brooks, Parkkonen, and
  H\"{a}m\"{a}l\"{a}inen]{gramfort2013mne}
Alexandre Gramfort, Martin Luessi, Eric Larson, Denis~A. Engemann, Daniel
  Strohmeier, Christian Brodbeck, Roman Goj, Mainak Jas, Teon Brooks, Lauri
  Parkkonen, and Matti~S. H\"{a}m\"{a}l\"{a}inen.
\newblock {MEG} and {EEG} data analysis with {MNE-P}ython.
\newblock \emph{Frontiers in Neuroscience}, 7:\penalty0 267, 2013.

\bibitem[Gulrajani et~al.(2017)Gulrajani, Ahmed, Arjovsky, Dumoulin, and
  Courville]{gulrajani2017wgangp}
Ishaan Gulrajani, Faruk Ahmed, Martin Arjovsky, Vincent Dumoulin, and Aaron
  Courville.
\newblock Improved training of {W}asserstein {GAN}s.
\newblock In \emph{31st Conference on Neural Information Processing Systems},
  pages 5769--5779, Long Beach, CA, December 2017.

\bibitem[Hairer et~al.(1993)Hairer, N{\o}rsett, and Wanner]{hairer1993solving}
Ernst Hairer, Syvert~P. N{\o}rsett, and Gerhard Wanner.
\newblock \emph{Solving Ordinary Differential Equations I: Nonstiff Problems}.
\newblock Springer-Verlag, Berlin, Heidelberg, 2nd edition, 1993.

\bibitem[Hallac et~al.(2017)Hallac, Park, Boyd, and Leskovec]{hallac2017tvgl}
David Hallac, Youngsuk Park, Stephen Boyd, and Jure Leskovec.
\newblock Network inference via the time-varying graphical lasso.
\newblock In \emph{23rd ACM SIGKDD International Conference on Knowledge
  Discovery and Data Mining}, pages 205--213, Halifax, NS, August 2017.

\bibitem[Hartmann et~al.(2018)Hartmann, Schirrmeister, and
  Ball]{hartmann2018eeggan}
Kay~Gregor Hartmann, Robin~Tibor Schirrmeister, and Tonio Ball.
\newblock {EEG-GAN}: generative adversarial networks for
  electroencephalographic ({EEG}) brain signals.
\newblock \emph{arXiv:1806.01875 [eess.SP]}, 2018.

\bibitem[Hendrycks and Gimpel(2016)]{hendrycks2016gelu}
Dan Hendrycks and Kevin Gimpel.
\newblock {G}aussian error linear units ({GELU}s).
\newblock \emph{arXiv:1606.08415 [cs.LG]}, 2016.

\bibitem[Heusel et~al.(2017)Heusel, Ramsauer, Unterthiner, Nessler, and
  Hochreiter]{heusel2017fid}
Martin Heusel, Hubert Ramsauer, Thomas Unterthiner, Bernhard Nessler, and Sepp
  Hochreiter.
\newblock {GAN}s trained by a two time-scale update rule converge to a local
  {N}ash equilibrium.
\newblock In \emph{31st Conference on Neural Information Processing Systems},
  pages 6629--6640, Long Beach, CA, December 2017.

\bibitem[Hindriks et~al.(2016)Hindriks, Adhikari, Murayama, Ganzetti, Mantini,
  Logothetis, and Deco]{hindriks2016sliding}
Rikkert Hindriks, Mohit~H. Adhikari, Yusuke Murayama, Marco Ganzetti, Dante
  Mantini, Nikos~K. Logothetis, and Gustavo Deco.
\newblock Can sliding-window correlations reveal dynamic functional
  connectivity in resting-state {fMRI}?
\newblock \emph{NeuroImage}, 127:\penalty0 242--256, 2016.

\bibitem[Ho and Salimans(2022)]{ho2022cfg}
Jonathan Ho and Tim Salimans.
\newblock Classifier-free diffusion guidance.
\newblock \emph{arXiv:2207.12598 [cs.LG]}, 2022.

\bibitem[Ho et~al.(2020)Ho, Jain, and Abbeel]{ho2020ddpm}
Jonathan Ho, Ajay Jain, and Pieter Abbeel.
\newblock Denoising diffusion probabilistic models.
\newblock In \emph{34th Conference on Neural Information Processing Systems},
  pages 6840--6851, December 2020.

\bibitem[Horn and Johnson(2012)]{horn2012matrix}
Roger~A. Horn and Charles~R. Johnson.
\newblock \emph{Matrix Analysis}.
\newblock Cambridge University Press, Cambridge, 2nd edition, 2012.

\bibitem[Huang and Ruan(2025)]{huang2025spectral}
Xikun Huang,Tianyu Ruan, Chihao Zhang and Shihua Zhang.
\newblock Graph generation with spectral geodesic flow matching.
\newblock \emph{arXiv:2510.02520 [cs.LG]}, 2025.

\bibitem[Huguet et~al.(2024)Huguet, Vuckovic, Fatras, Thibodeau-Laufer, Lemos,
  Islam, Liu, Rector-Brooks, Akhound-Sadegh, Bronstein, Tong, and
  Bose]{huguet2024se3}
Guillaume Huguet, James Vuckovic, Kilian Fatras, Eric Thibodeau-Laufer, Pablo
  Lemos, Riashat Islam, Cheng-Hao Liu, Jarrid Rector-Brooks, Tara
  Akhound-Sadegh, Michael~M. Bronstein, Alexander Tong, and Avishek~Joey Bose.
\newblock Sequence-augmented {SE}(3)-flow matching for conditional protein
  generation.
\newblock In \emph{38th Conference on Neural Information Processing Systems},
  December 2024.

\bibitem[Hutchison et~al.(2013)Hutchison, Womelsdorf, Allen, Bandettini,
  Calhoun, Corbetta, Penna, Duyn, Glover, Gonzalez-Castillo, Handwerker,
  Keilholz, Kiviniemi, Leopold, de~Pasquale, Sporns, Walter, and
  Chang]{hutchison2013dfc}
R.~Matthew Hutchison, Thilo Womelsdorf, Elena~A. Allen, Peter~A. Bandettini,
  Vince~D. Calhoun, Maurizio Corbetta, Stefania~Della Penna, Jeff~H. Duyn,
  Gary~H. Glover, Javier Gonzalez-Castillo, Daniel~A. Handwerker, Shella
  Keilholz, Vesa Kiviniemi, David~A. Leopold, Francesco de~Pasquale, Olaf
  Sporns, Martin Walter, and Catie Chang.
\newblock Dynamic functional connectivity: promise, issues, and
  interpretations.
\newblock \emph{NeuroImage}, 80:\penalty0 360--378, 2013.

\bibitem[Jayaram and Barachant(2018)]{jayaram2018moabb}
Vinay Jayaram and Alexandre Barachant.
\newblock {MOABB}: trustworthy algorithm benchmarking for {BCI}s.
\newblock \emph{Journal of Neural Engineering}, 15\penalty0 (6):\penalty0
  066011, 2018.

\bibitem[Jiang et~al.(2024)Jiang, Zhao, and Lu]{jiang2024labram}
Weibang Jiang, Liming Zhao, and Bao-Liang Lu.
\newblock Large brain model for learning generic representations with
  tremendous {EEG} data in {BCI}.
\newblock In \emph{12th International Conference on Learning Representations},
  Vienna, Austria, May 2024.

\bibitem[Jo and Hwang(2023)]{jo2023mixture}
Jaehyeong Jo and Sung~Ju Hwang.
\newblock Generative modeling on manifolds through mixture of {R}iemannian
  diffusion processes.
\newblock \emph{arXiv:2310.07216 [cs.LG]}, 2023.

\bibitem[Ju et~al.(2025)Ju, Kobler, Collas, Kawanabe, Guan, and
  Thirion]{ju2025spdsurvey}
Ce~Ju, Reinmar~J. Kobler, Antoine Collas, Motoaki Kawanabe, Cuntai Guan, and
  Bertrand Thirion.
\newblock {SPD} learning for covariance-based neuroimaging analysis:
  perspectives, methods, and challenges.
\newblock \emph{arXiv:2504.18882 [cs.LG]}, 2025.

\bibitem[Kapusniak et~al.(2024)Kapusniak, Potaptchik, Reu, Zhang, Tong,
  Bronstein, Bose, and Giovanni]{kapusniak2024metric}
Kacper Kapusniak, Peter Potaptchik, Teodora Reu, Leo Zhang, Alexander Tong,
  Michael Bronstein, Avishek~Joey Bose, and Francesco~Di Giovanni.
\newblock Metric flow matching for smooth interpolations on the data manifold.
\newblock In \emph{38th Conference on Neural Information Processing Systems},
  December 2024.

\bibitem[Kingma and Ba(2015)]{kingma2015adam}
Diederik~P. Kingma and Jimmy~Lei Ba.
\newblock {ADAM}: A method for stochastic optimization.
\newblock In \emph{3rd International Conference on Learning Representations},
  pages 1--15, San Diego, CA, May 2015.

\bibitem[Kingma and Welling(2014)]{kingma2014vae}
Diederik~P. Kingma and Max Welling.
\newblock Auto-encoding variational {B}ayes.
\newblock In \emph{2nd International Conference on Learning Representations},
  Banff, AB, April 2014.

\bibitem[Kobler et~al.(2022)Kobler, ichiro Hirayama, Zhao, and
  Kawanabe]{kobler2022spdbn}
Reinmar~J. Kobler, Jun ichiro Hirayama, Qibin Zhao, and Motoaki Kawanabe.
\newblock {SPD} domain-specific batch normalization to crack interpretable
  unsupervised domain adaptation in {EEG}.
\newblock In \emph{36th Conference on Neural Information Processing Systems},
  pages 6219--6235, December 2022.

\bibitem[Kynk\"{a}\"{a}nniemi et~al.(2019)Kynk\"{a}\"{a}nniemi, Karras, Laine,
  Lehtinen, and Aila]{kynkaanniemi2019precision}
Tuomas Kynk\"{a}\"{a}nniemi, Tero Karras, Samuli Laine, Jaakko Lehtinen, and
  Timo Aila.
\newblock Improved precision and recall metric for assessing generative models.
\newblock In \emph{33rd Conference on Neural Information Processing Systems},
  Vancouver, BC, December 2019.

\bibitem[Lashgari et~al.(2020)Lashgari, Liang, and
  Maoz]{lashgari2020augmentation}
Elnaz Lashgari, Dehua Liang, and Uri Maoz.
\newblock Data augmentation for deep-learning-based electroencephalography.
\newblock \emph{Journal of Neuroscience Methods}, 346:\penalty0 108885, 2020.

\bibitem[Lawhern et~al.(2018)Lawhern, Solon, Waytowich, Gordon, Hung, and
  Lance]{lawhern2018eegnet}
Vernon~J. Lawhern, Amelia~J. Solon, Nicholas~R. Waytowich, Stephen~M. Gordon,
  Chou~Po Hung, and Brent~J. Lance.
\newblock {EEGN}et: a compact convolutional neural network for {EEG}-based
  brain--computer interfaces.
\newblock \emph{Journal of Neural Engineering}, 15\penalty0 (5):\penalty0
  056013, 2018.

\bibitem[Ledoit and Wolf(2004)]{ledoit2004wellconditioned}
Olivier Ledoit and Michael Wolf.
\newblock A well-conditioned estimator for large-dimensional covariance
  matrices.
\newblock \emph{Journal of Multivariate Analysis}, 88\penalty0 (2):\penalty0
  365--411, 2004.

\bibitem[Leeb et~al.(2007)Leeb, Lee, Keinrath, Scherer, Bischof, and
  Pfurtscheller]{leeb2007virtual}
Robert Leeb, Felix Lee, Claudia Keinrath, Reinhold Scherer, Horst Bischof, and
  Gert Pfurtscheller.
\newblock Brain--computer communication: motivation, aim, and impact of
  exploring a virtual apartment.
\newblock \emph{IEEE Transactions on Neural Systems and Rehabilitation
  Engineering}, 15\penalty0 (4):\penalty0 473--482, 2007.

\bibitem[Leonardi and Van~De~Ville(2015)]{leonardi2015spurious}
Nora Leonardi and Dimitri Van~De Ville.
\newblock On spurious and real fluctuations of dynamic functional connectivity
  during rest.
\newblock \emph{NeuroImage}, 104:\penalty0 430--436, 2015.

\bibitem[Li et~al.(2024)Li, Yu, He, Shen, Li, Sun, and Lin]{li2024spdddpm}
Yunchen Li, Zhou Yu, Gaoqi He, Yunhang Shen, Ke~Li, Xing Sun, and Shaohui Lin.
\newblock {SPD}-{DDPM}: denoising diffusion probabilistic models in the
  symmetric positive definite space.
\newblock In \emph{38th AAAI Conference on Artificial Intelligence}, pages
  13709--13717, Vancouver, BC, February 2024.

\bibitem[Lin(2019)]{lin2019cholesky}
Zhenhua Lin.
\newblock {R}iemannian geometry of symmetric positive definite matrices via
  {C}holesky decomposition.
\newblock \emph{SIAM Journal on Matrix Analysis and Applications}, 40\penalty0
  (4):\penalty0 1353--1370, 2019.

\bibitem[Lipman et~al.(2023)Lipman, Chen, Ben-Hamu, Nickel, and
  Le]{lipman2023flow}
Yaron Lipman, Ricky T.~Q. Chen, Heli Ben-Hamu, Maximilian Nickel, and Matt Le.
\newblock Flow matching for generative modeling.
\newblock In \emph{11th International Conference on Learning Representations},
  Kigali, Rwanda, May 2023.

\bibitem[Lipman et~al.(2024)Lipman, Havasi, Holderrieth, Shaul, Le, Karrer,
  Chen, Lopez-Paz, Ben-Hamu, and Gat]{lipman2024guide}
Yaron Lipman, Marton Havasi, Peter Holderrieth, Neta Shaul, Matt Le, Brian
  Karrer, Ricky T.~Q. Chen, David Lopez-Paz, Heli Ben-Hamu, and Itai Gat.
\newblock Flow matching guide and code.
\newblock \emph{arXiv:2412.06264 [cs.LG]}, 2024.

\bibitem[Liu et~al.(2023)Liu, Gong, and Liu]{liu2023rectified}
Xingchao Liu, Chengyue Gong, and Qiang Liu.
\newblock Flow straight and fast: learning to generate and transfer data with
  rectified flow.
\newblock In \emph{11th International Conference on Learning Representations},
  Kigali, Rwanda, May 2023.

\bibitem[Loshchilov and Hutter(2019)]{loshchilov2019adamw}
Ilya Loshchilov and Frank Hutter.
\newblock Decoupled weight decay regularization.
\newblock In \emph{7th International Conference on Learning Representations},
  New Orleans, LA, May 2019.

\bibitem[Lotte et~al.(2018)Lotte, Bougrain, Cichocki, Clerc, Congedo,
  Rakotomamonjy, and Yger]{lotte2018review}
Fabien Lotte, Laurent Bougrain, Andrzej Cichocki, Maureen Clerc, Marco Congedo,
  Alain Rakotomamonjy, and Florian Yger.
\newblock A review of classification algorithms for {EEG}-based brain--computer
  interfaces: a 10 year update.
\newblock \emph{Journal of Neural Engineering}, 15\penalty0 (3), 2018.

\bibitem[Luo and Lu(2018)]{luo2018cwgan}
Yun Luo and Bao-Liang Lu.
\newblock {EEG} data augmentation for emotion recognition using a conditional
  {W}asserstein {GAN}.
\newblock In \emph{40th Annual International Conference of the IEEE Engineering
  in Medicine and Biology Society (EMBC)}, pages 2535--2538, Honolulu, HI, July
  2018.

\bibitem[Marti(2020)]{marti2020corrgan}
Gautier Marti.
\newblock {C}orr{GAN}: sampling realistic financial correlation matrices using
  generative adversarial networks.
\newblock In \emph{IEEE International Conference on Acoustics, Speech and
  Signal Processing (ICASSP)}, pages 8459--8463, Barcelona, Spain, May 2020.

\bibitem[McCann(1997)]{mccann1997convexity}
Robert~J. McCann.
\newblock A convexity principle for interacting gases.
\newblock \emph{Advances in Mathematics}, 128\penalty0 (1):\penalty0 153--179,
  1997.

\bibitem[Miller et~al.(2024)Miller, Chen, Sriram, and Wood]{miller2024flowmm}
Benjamin~Kurt Miller, Ricky T.~Q. Chen, Anuroop Sriram, and Brandon~M. Wood.
\newblock {F}low{MM}: generating materials with {R}iemannian flow matching.
\newblock In \emph{41st International Conference on Machine Learning}, Vienna,
  Austria, July 2024.

\bibitem[Monti et~al.(2014)Monti, Hellyer, Sharp, Leech, Anagnostopoulos, and
  Montana]{monti2014estimating}
Ricardo~Pio Monti, Peter Hellyer, David Sharp, Robert Leech, Christoforos
  Anagnostopoulos, and Giovanni Montana.
\newblock Estimating time-varying brain connectivity networks from functional
  {MRI} time series.
\newblock \emph{NeuroImage}, 103:\penalty0 427--443, 2014.

\bibitem[Naeem et~al.(2020)Naeem, Oh, Uh, Choi, and Yoo]{naeem2020reliable}
Muhammad~Ferjad Naeem, Seong~Joon Oh, Youngjung Uh, Yunjey Choi, and Jaejun
  Yoo.
\newblock Reliable fidelity and diversity metrics for generative models.
\newblock In \emph{37th International Conference on Machine Learning}, pages
  7176--7185, July 2020.

\bibitem[Paszke et~al.(2019)Paszke, Gross, Massa, Lerer, Bradbury, Chanan,
  Killeen, Lin, Gimelshein, Antiga, Desmaison, K\"{o}pf, Yang, DeVito, Raison,
  Tejani, Chilamkurthy, Steiner, Fang, Bai, and Chintala]{paszke2019pytorch}
Adam Paszke, Sam Gross, Francisco Massa, Adam Lerer, James Bradbury, Gregory
  Chanan, Trevor Killeen, Zeming Lin, Natalia Gimelshein, Luca Antiga, Alban
  Desmaison, Andreas K\"{o}pf, Edward Yang, Zachary DeVito, Martin Raison,
  Alykhan Tejani, Sasank Chilamkurthy, Benoit Steiner, Lu~Fang, Junjie Bai, and
  Soumith Chintala.
\newblock {P}y{T}orch: An imperative style, high-performance deep learning
  library.
\newblock In \emph{33rd Conference on Neural Information Processing Systems},
  pages 8024--8035, Vancouver, BC, December 2019.

\bibitem[Peebles and Xie(2023)]{peebles2023dit}
William Peebles and Saining Xie.
\newblock Scalable diffusion models with transformers.
\newblock In \emph{IEEE/CVF International Conference on Computer Vision}, pages
  4195--4205, Paris, France, October 2023.

\bibitem[Pennec(2006)]{pennec2006intrinsic}
Xavier Pennec.
\newblock Intrinsic statistics on {R}iemannian manifolds: basic tools for
  geometric measurements.
\newblock \emph{Journal of Mathematical Imaging and Vision}, 25\penalty0
  (1):\penalty0 127--154, 2006.

\bibitem[Pennec et~al.(2006)Pennec, Fillard, and Ayache]{pennec2006tensor}
Xavier Pennec, Pierre Fillard, and Nicholas Ayache.
\newblock A {R}iemannian framework for tensor computing.
\newblock \emph{International Journal of Computer Vision}, 66\penalty0
  (1):\penalty0 41--66, 2006.

\bibitem[Pfurtscheller and da~Silva(1999)]{pfurtscheller1999erd}
Gert Pfurtscheller and Fernando H.~Lopes da~Silva.
\newblock Event-related {EEG/MEG} synchronization and desynchronization: basic
  principles.
\newblock \emph{Clinical Neurophysiology}, 110\penalty0 (11):\penalty0
  1842--1857, 1999.

\bibitem[Preti et~al.(2017)Preti, Bolton, and Van~De~Ville]{preti2017dynamic}
Maria~Giulia Preti, Thomas A.~W. Bolton, and Dimitri Van~De Ville.
\newblock The dynamic functional connectome: state-of-the-art and perspectives.
\newblock \emph{NeuroImage}, 160:\penalty0 41--54, 2017.

\bibitem[Qin et~al.(2025)Qin, Madeira, Thanou, and Frossard]{qin2025defog}
Yiming Qin, Manuel Madeira, Dorina Thanou, and Pascal Frossard.
\newblock {D}e{F}o{G}: discrete flow matching for graph generation.
\newblock In \emph{42nd International Conference on Machine Learning}, July
  2025.

\bibitem[Rasul et~al.(2021)Rasul, Seward, Schuster, and
  Vollgraf]{rasul2021timegrad}
Kashif Rasul, Calvin Seward, Ingmar Schuster, and Roland Vollgraf.
\newblock Autoregressive denoising diffusion models for multivariate
  probabilistic time series forecasting.
\newblock In \emph{38th International Conference on Machine Learning}, pages
  8857--8868, July 2021.

\bibitem[Ravuri and Vinyals(2019)]{ravuri2019cas}
Suman Ravuri and Oriol Vinyals.
\newblock Classification accuracy score for conditional generative models.
\newblock In \emph{33rd Conference on Neural Information Processing Systems},
  pages 12268--12279, Vancouver, BC, December 2019.

\bibitem[Roy et~al.(2023)Roy, Moshfeghi, Ibanez, Lopera, Parra, and
  Smith]{roy2023robust}
Om~Roy, Yashar Moshfeghi, Agustin Ibanez, Francisco Lopera, Mario~A. Parra, and
  Keith~M. Smith.
\newblock Robust, high temporal-resolution {EEG} functional connectivity
  detects increased connectivity coinciding with {P300} in visual short-term
  memory binding in both familial and sporadic prodromal {A}lzheimer's disease.
\newblock In \emph{Complex Networks 2023: The 12th International Conference on
  Complex Networks and Their Applications}, pages 679--682, 2023.

\bibitem[Roy et~al.(2024)Roy, Moshfeghi, Ibanez, Lopera, Parra, and
  Smith]{roy2024fast}
Om~Roy, Yashar Moshfeghi, Agustin Ibanez, Francisco Lopera, Mario~A. Parra, and
  Keith~M. Smith.
\newblock {FAST} functional connectivity implicates {P300} connectivity in
  working memory deficits in {A}lzheimer's disease.
\newblock \emph{Network Neuroscience}, 8\penalty0 (4):\penalty0 1467--1490,
  2024.

\bibitem[Roy et~al.(2025)Roy, Moshfeghi, Smith, Ibanez, Parra, and
  Smith]{roy2025hodgefast}
Om~Roy, Yashar Moshfeghi, J.~Smith, Agustin Ibanez, Mario~A. Parra, and
  Keith~M. Smith.
\newblock A {H}odge-{FAST} framework for high-resolution dynamic functional
  connectivity analysis of higher-order interactions in {EEG} signals.
\newblock In \emph{47th Annual International Conference of the IEEE Engineering
  in Medicine and Biology Society (EMBC)}, pages 1--6, 2025.
\newblock \doi{10.1109/EMBC58623.2025.11253015}.

\bibitem[Roy et~al.(2026)Roy, Moshfeghi, and Smith]{roy2026cdnn}
Om~Roy, Yashar Moshfeghi, and Keith~M. Smith.
\newblock Covariance density neural networks.
\newblock \emph{Transactions on Machine Learning Research}, 2026.

\bibitem[Roy et~al.(2019)Roy, Banville, Albuquerque, Gramfort, Falk, and
  Faubert]{roy2019dlreview}
Yannick Roy, Hubert Banville, Isabela Albuquerque, Alexandre Gramfort, Tiago~H.
  Falk, and Jocelyn Faubert.
\newblock Deep learning-based electroencephalography analysis: a systematic
  review.
\newblock \emph{Journal of Neural Engineering}, 16\penalty0 (5):\penalty0
  051001, 2019.

\bibitem[Schirrmeister et~al.(2017)Schirrmeister, Springenberg, Fiederer,
  Glasstetter, Eggensperger, Tangermann, Hutter, Burgard, and
  Ball]{schirrmeister2017deep}
Robin~Tibor Schirrmeister, Jost~Tobias Springenberg, Lukas Dominique~Josef
  Fiederer, Martin Glasstetter, Katharina Eggensperger, Michael Tangermann,
  Frank Hutter, Wolfram Burgard, and Tonio Ball.
\newblock Deep learning with convolutional neural networks for {EEG} decoding
  and visualization.
\newblock \emph{Human Brain Mapping}, 38\penalty0 (11):\penalty0 5391--5420,
  2017.

\bibitem[Schur(1911)]{schur1911bemerkungen}
Issai Schur.
\newblock Bemerkungen zur {T}heorie der beschr\"{a}nkten {B}ilinearformen mit
  unendlich vielen {V}er\"{a}nderlichen.
\newblock \emph{Journal f\"{u}r die reine und angewandte Mathematik},
  140:\penalty0 1--28, 1911.

\bibitem[Shuman et~al.(2013)Shuman, Narang, Frossard, Ortega, and
  Vandergheynst]{shuman2013gsp}
David~I. Shuman, Sunil~K. Narang, Pascal Frossard, Antonio Ortega, and Pierre
  Vandergheynst.
\newblock The emerging field of signal processing on graphs: extending
  high-dimensional data analysis to networks and other irregular domains.
\newblock \emph{IEEE Signal Processing Magazine}, 30\penalty0 (3):\penalty0
  83--98, 2013.

\bibitem[Sihag et~al.(2022)Sihag, Mateos, McMillan, and Ribeiro]{sihag2022vnn}
Saurabh Sihag, Gonzalo Mateos, Corey McMillan, and Alejandro Ribeiro.
\newblock Co{V}ariance neural networks.
\newblock In \emph{36th Conference on Neural Information Processing Systems},
  pages 17003--17016, Red Hook, NY, 2022. Curran Associates Inc.

\bibitem[Skovgaard(1984)]{skovgaard1984riemannian}
Lene~Theil Skovgaard.
\newblock A {R}iemannian geometry of the multivariate normal model.
\newblock \emph{Scandinavian Journal of Statistics}, 11\penalty0 (4):\penalty0
  211--223, 1984.

\bibitem[Smith et~al.(2017)Smith, Escudero, Parra, Ibanez, Starr, and
  Sala]{smith2017fast}
Keith Smith, Javier Escudero, Mario~A. Parra, Agustin Ibanez, John~M. Starr,
  and Sergio~Della Sala.
\newblock Locating temporal functional dynamics of visual short-term memory
  binding using graph modular {D}irichlet energy.
\newblock \emph{Scientific Reports}, 7:\penalty0 42013, 2017.

\bibitem[Smith et~al.(2019)Smith, Spyrou, and Escudero]{smith2019gvsa}
Keith Smith, Loukas Spyrou, and Javier Escudero.
\newblock Graph-variate signal analysis.
\newblock \emph{IEEE Transactions on Signal Processing}, 67\penalty0
  (2):\penalty0 293--305, 2019.
\newblock \doi{10.1109/TSP.2018.2881658}.

\bibitem[Sohl-Dickstein et~al.(2015)Sohl-Dickstein, Weiss, Maheswaranathan, and
  Ganguli]{sohldickstein2015thermo}
Jascha Sohl-Dickstein, Eric Weiss, Niru Maheswaranathan, and Surya Ganguli.
\newblock Deep unsupervised learning using nonequilibrium thermodynamics.
\newblock In \emph{32nd International Conference on Machine Learning}, pages
  2256--2265, Lille, France, July 2015.

\bibitem[Sohn et~al.(2015)Sohn, Lee, and Yan]{sohn2015cvae}
Kihyuk Sohn, Honglak Lee, and Xinchen Yan.
\newblock Learning structured output representation using deep conditional
  generative models.
\newblock In \emph{29th Conference on Neural Information Processing Systems},
  pages 3483--3491, Montr\'{e}al, QC, December 2015.

\bibitem[Song et~al.(2021{\natexlab{a}})Song, Meng, and Ermon]{song2021ddim}
Jiaming Song, Chenlin Meng, and Stefano Ermon.
\newblock Denoising diffusion implicit models.
\newblock In \emph{9th International Conference on Learning Representations},
  May 2021{\natexlab{a}}.

\bibitem[Song and Ermon(2019)]{song2019score}
Yang Song and Stefano Ermon.
\newblock Generative modeling by estimating gradients of the data distribution.
\newblock In \emph{33rd Conference on Neural Information Processing Systems},
  Vancouver, BC, December 2019.

\bibitem[Song et~al.(2021{\natexlab{b}})Song, Sohl-Dickstein, Kingma, Kumar,
  Ermon, and Poole]{song2021sde}
Yang Song, Jascha Sohl-Dickstein, Diederik~P. Kingma, Abhishek Kumar, Stefano
  Ermon, and Ben Poole.
\newblock Score-based generative modeling through stochastic differential
  equations.
\newblock In \emph{9th International Conference on Learning Representations},
  May 2021{\natexlab{b}}.

\bibitem[Steyrl et~al.(2016)Steyrl, Scherer, Faller, and
  M\"{u}ller-Putz]{steyrl2016randomforests}
David Steyrl, Reinhold Scherer, Josef Faller, and Gernot~R. M\"{u}ller-Putz.
\newblock Random forests in non-invasive sensorimotor rhythm brain-computer
  interfaces: a practical and convenient non-linear classifier.
\newblock \emph{Biomedical Engineering / Biomedizinische Technik}, 61\penalty0
  (1):\penalty0 77--86, 2016.

\bibitem[Strang(1999)]{strang1999dct}
Gilbert Strang.
\newblock The discrete cosine transform.
\newblock \emph{SIAM Review}, 41\penalty0 (1):\penalty0 135--147, 1999.

\bibitem[Tancik et~al.(2020)Tancik, Srinivasan, Mildenhall, Fridovich-Keil,
  Raghavan, Singhal, Ramamoorthi, Barron, and Ng]{tancik2020fourier}
Matthew Tancik, Pratul~P. Srinivasan, Ben Mildenhall, Sara Fridovich-Keil,
  Nithin Raghavan, Utkarsh Singhal, Ravi Ramamoorthi, Jonathan~T. Barron, and
  Ren Ng.
\newblock {F}ourier features let networks learn high-frequency functions in
  low-dimensional domains.
\newblock In \emph{34th Conference on Neural Information Processing Systems},
  pages 7537--7547, December 2020.

\bibitem[Tangermann et~al.(2012)Tangermann, M\"{u}ller, Aertsen, Birbaumer,
  Braun, Brunner, Leeb, Mehring, Miller, M\"{u}ller-Putz, Nolte, Pfurtscheller,
  Preissl, Schalk, Schl\"{o}gl, Vidaurre, Waldert, and
  Blankertz]{tangermann2012bci}
Michael Tangermann, Klaus-Robert M\"{u}ller, Ad~Aertsen, Niels Birbaumer,
  Christoph Braun, Clemens Brunner, Robert Leeb, Carsten Mehring, Kai~J.
  Miller, Gernot M\"{u}ller-Putz, Guido Nolte, Gert Pfurtscheller, Hubert
  Preissl, Gerwin Schalk, Alois Schl\"{o}gl, Carmen Vidaurre, Stephan Waldert,
  and Benjamin Blankertz.
\newblock Review of the {BCI} competition {IV}.
\newblock \emph{Frontiers in Neuroscience}, 6:\penalty0 55, 2012.

\bibitem[ten Brinke et~al.(2026)ten Brinke, Minartz, and
  Menkovski]{tenbrinke2026stflow}
Kiet~Bennema ten Brinke, Koen Minartz, and Vlado Menkovski.
\newblock {STF}low: data-coupled flow matching for geometric trajectory
  simulation.
\newblock In \emph{43rd International Conference on Machine Learning}, 2026.

\bibitem[Thanwerdas(2022)]{thanwerdas2022thesis}
Yann Thanwerdas.
\newblock \emph{{R}iemannian and stratified geometries on covariance and
  correlation matrices}.
\newblock PhD thesis, Universit\'{e} C\^{o}te d'Azur, 2022.

\bibitem[Tong et~al.(2024)Tong, Fatras, Malkin, Huguet, Zhang, Rector-Brooks,
  Wolf, and Bengio]{tong2024improving}
Alexander Tong, Kilian Fatras, Nikolay Malkin, Guillaume Huguet, Yanlei Zhang,
  Jarrid Rector-Brooks, Guy Wolf, and Yoshua Bengio.
\newblock Improving and generalizing flow-based generative models with
  minibatch optimal transport.
\newblock \emph{Transactions on Machine Learning Research}, 2024.

\bibitem[Vaswani et~al.(2017)Vaswani, Shazeer, Parmar, Uszkoreit, Jones, Gomez,
  Kaiser, and Polosukhin]{vaswani2017attention}
Ashish Vaswani, Noam Shazeer, Niki Parmar, Jakob Uszkoreit, Llion Jones,
  Aidan~N. Gomez, {\L}ukasz Kaiser, and Illia Polosukhin.
\newblock Attention is all you need.
\newblock In \emph{31st Conference on Neural Information Processing Systems},
  pages 5998--6008, Long Beach, CA, December 2017.

\bibitem[Vignac et~al.(2023)Vignac, Krawczuk, Siraudin, Wang, Cevher, and
  Frossard]{vignac2023digress}
Clement Vignac, Igor Krawczuk, Antoine Siraudin, Bohan Wang, Volkan Cevher, and
  Pascal Frossard.
\newblock {D}i{G}ress: discrete denoising diffusion for graph generation.
\newblock In \emph{11th International Conference on Learning Representations},
  Kigali, Rwanda, May 2023.

\bibitem[Villani(2009)]{villani2009optimal}
C\'{e}dric Villani.
\newblock \emph{Optimal Transport: Old and New}.
\newblock Springer, Berlin, Heidelberg, 2009.

\bibitem[Wang et~al.(2026)Wang, Ma, Li, and You]{wang2026jet}
Yifan Wang, Yijia Ma, Wen Li, and Chenyu You.
\newblock Let {EEG} models learn {EEG}.
\newblock \emph{arXiv:2605.21280 [cs.LG]}, 2026.

\bibitem[Williams et~al.(2023)Williams, Weinhardt, Wirzberger, and
  Musslick]{williams2023augmenting}
Chad~C. Williams, Daniel Weinhardt, Maria Wirzberger, and Sebastian Musslick.
\newblock Augmenting {EEG} with generative adversarial networks enhances brain
  decoding across classifiers and sample sizes.
\newblock In \emph{45th Annual Meeting of the Cognitive Science Society} Sydney, Australia, July 2023.

\bibitem[Williams et~al.(2025)Williams, Weinhardt, Hewson, P{\l}omecka, Langer,
  and Musslick]{williams2025eeggan}
Chad~C. Williams, Daniel Weinhardt, Joshua Hewson, Martyna~Beata P{\l}omecka,
  Nicolas Langer, and Sebastian Musslick.
\newblock {EEG-GAN}: a generative {EEG} augmentation toolkit for enhancing
  neural classification.
\newblock \emph{bioRxiv}, 2025.
\newblock \doi{10.1101/2025.06.23.661164}.

\bibitem[Williams(2025)]{williams2025weighted}
Robert Williams.
\newblock Scalable generative modeling of weighted graphs.
\newblock \emph{arXiv:2507.23111 [cs.LG]}, 2025.

\bibitem[Xu et~al.(2022)Xu, Dong, Li, Yang, Wang, and Zhao]{xu2022dcgan}
Xu F, Dong G, Li J, Yang Q, Wang L, Zhao Y, Yan Y, Zhao J, Pang S, Guo D, Zhang Y and Leng J.
\newblock Deep convolution generative adversarial network-based
  electroencephalogram data augmentation for post-stroke rehabilitation with
  motor imagery.
\newblock \emph{International Journal of Neural Systems}, 32\penalty0
  (9):\penalty0 2250039, 2022.
\newblock \doi{10.1142/S0129065722500393}.

\bibitem[Zhang and Liu(2018)]{zhang2018cdcgan}
Qiqi Zhang and Ying Liu.
\newblock Improving brain computer interface performance by data augmentation
  with conditional deep convolutional generative adversarial networks.
\newblock \emph{arXiv:1806.07108 [cs.HC]}, 2018.

\bibitem[Zhou et~al.(2016)Zhou, Wu, Lv, Zhang, and Guo]{zhou2016trial}
Bangyan Zhou, Xiaopei Wu, Zhao Lv, Lei Zhang, and Xiaojin Guo.
\newblock A fully automated trial selection method for optimization of motor
  imagery based brain-computer interface.
\newblock \emph{PLOS ONE}, 11\penalty0 (9):\penalty0 e0162657, 2016.

\end{thebibliography}
\end{document}